\documentclass[11pt]{article}
\usepackage[margin=1in]{geometry}

\usepackage{amsthm}
\usepackage{amsmath}
\usepackage{amstext}
\usepackage{amssymb}

\usepackage{amsmath,amsfonts,bm}

\def\eqref#1{equation~\ref{#1}}
\def\Eqref#1{Equation~\ref{#1}}

\def\1{\bm{1}}

\DeclareMathAlphabet{\mathsfit}{\encodingdefault}{\sfdefault}{m}{sl}
\SetMathAlphabet{\mathsfit}{bold}{\encodingdefault}{\sfdefault}{bx}{n}

\newtheorem{theorem}{Theorem}
\newtheorem{definition}{Definition}
\newtheorem{proposition}{Proposition}
\newtheorem{corollary}{Corollary}
\newtheorem{lemma}{Lemma}
\newtheorem{remark}{Remark}
\newtheorem{assumption}{Assumption}
\usepackage{hyperref}
\usepackage[ruled]{algorithm}
\usepackage{mathrsfs}
\usepackage{algpseudocode}
\usepackage{graphicx}
\usepackage{etoolbox}
\usepackage{xcolor}
\usepackage{url}
\usepackage{natbib}
\date{}
\title{Conditioning Degenerate Diffusion Models}

\author{U\u{g}ur Ayd{\i}n \& Tamer Ba{\c{s}}ar \\
Department of Electrical and Computer Engineering and Coordinated Science Laboratory\\
University of Illinois Urbana Champaign\\
Urbana, IL 61801, USA \\
\texttt{\{uaydin2, basar1\}@illinois.edu}
}

\begin{document}

\maketitle

\begin{abstract}

Current conditioned generative models heavily rely on score functions for guidance during training. When the generative model is a diffusion process with a singular diffusion coefficient and the underlying (conditional) densities either do not exist or are not smooth, we use causal optimal transport to define \emph{approximate} loss functions that identify a minimum-entropy control for guidance under minimal assumptions. Our approach relies on causal optimal transport and its characterization through the predictable representation property of (conditioned) diffusion processes whose associated martingale problem is well posed, \`a la \"Ust\"unel.
\end{abstract}

\section{Introduction}

Conditioned diffusion processes appear in physical chemistry
\cite{bolhuis2002transition}, genetics \cite{wang2011quantifying}, economics
\cite{elerian2001likelihood}, and guided generative models
\cite{zhang2023adding, domingo2025adjoint}. In practice, a common feature of these problems is that the
conditioned process must be constructed from data generated by an
unconditioned process $X=(X_t)_{0\leq t\leq T}$ together with a terminal condition \cite{fabrice,tzen2019theoretical,pidstrigach2025conditioning}.  Hence,
at time $t$, the available data are the state history and the condition.

In particular, consider an $n$-dimensional stochastic differential equation (SDE) of the
form
\begin{equation}\label{eq:s}
dX_t=b(t,X_t)\,dt+\sigma(t,X_t)\,dB_t,
\qquad X_0=\eta,
\end{equation}
where $B$ is a $d$-dimensional Brownian motion, and a terminal condition
$Y=G(X_T)$, where $G:\mathbb R^n\to\mathbb R^k$ and $\eta$ is potentially a random variable.  Degeneracy refers to the case in which
$\sigma(t,x)\in\mathbb R^{n\times d}$ does not have full rank (which can be also extended to the case where the corresponding Malliavin covariance operator is also singular).  When the
conditional likelihood is sufficiently smooth, the Doob $h$-transform
identifies a controlled equation
\begin{equation}\label{eq:s2}
dX_t^u
=\bigl(b(t,X_t^u)+\sigma(t,X_t^u)u_t(X_t^u,y)\bigr)\,dt
+\sigma(t,X_t^u)\,dB_t,
\qquad X_0^u\sim\mathcal L(\eta\mid Y=y),
\end{equation}
whose state-path law agrees with $\mathcal L(X_{[0,T]}\mid Y=y)$
\cite{pidstrigach2025conditioning,tzen2019theoretical}.  If
$Z_t^y=h_t(X_t;y)$ is a smooth conditional likelihood, then the
Brownian-space shift is
$u_t=\sigma(t,X_t)^\top\nabla_x\log h_t(X_t;y)$, and the added state drift
is $\sigma u=\sigma\sigma^\top\nabla_x\log h$.  This shift is the F\"ollmer
drift of the corresponding Brownian realization, and it is characterized by
a minimum-entropy property \cite{lassalle2018causal}, which also has connections with 
causal optimal transport.

When $\sigma$ is not invertible, this problem is considerably more difficult than in the invertible case. First, the filtration $\mathcal F(X)$ need not coincide with that of the driving (ambient) Brownian motion $B$. As suggested by the notation, we require the \emph{control} $(u_t(X^u_t,y))_t$ to be $\mathcal F(X^u)$-adapted, i.e. \emph{a closed-loop control}. This is important because adaptiveness of $u$ to the path of $X^u$ is crucial for the controlled SDE \eqref{eq:s2} to remain \emph{closed-loop}; otherwise, its implementation would require information external to $\mathcal F(X)\vee\sigma(Y)$ generated from \eqref{eq:s}. Furthermore, when $G$ is the identity map, as is the case, for instance, when \eqref{eq:s} is used as a bridge model, the corresponding conditional law at the terminal time is a Dirac measure and is therefore singular rather than smooth.

\section{Related Work and Contributions}

\subsection{Related Work}

In this work, we introduce loss functions that yield a control $u$ such that $X$ and $X^u$, introduced in \eqref{eq:s} and \eqref{eq:s2}, respectively, have the same law. This result relies on the adapted representation property of SDEs satisfying weak uniqueness \cite{ust_mart}, which was first established in \cite{stroock1980extremal}; see also \cite{elworthy2010geometry} for a special case on Riemannian manifolds. Using this property, \cite{ust_mart} also characterized the associated causal optimal transport problem. We use these results to derive a \emph{closed-loop} variant of the so-called F\"ollmer drift $(u_t)_t$ under minimal conditions, which has minimum energy among the admissible shifts (that satisfy Girsanov's theorem). Moreover, through a localization argument, we relax the strict positivity condition imposed in \cite{ust_mart} in their treatment of the causal optimal transport problem.

With the advancements of generative models, there has been recent interest on the behavior of degenerate diffusion models. The works \cite{dockhorn2021score, blessing2503underdamped} study bridge models under degeneracy when the densities are sufficiently smooth. A theory for manifold-based generative models has been developed in \cite{debortoli2022convergence, de2022riemannian}, which are inherently degenerate and satisfy less restrictive smoothness conditions on the corresponding densities compared to their Euclidean variants.

In the presence of degeneracy or non-smooth densities, recently Malliavin calculus based approaches have been used \cite{pidstrigach2025conditioning, mirafzali2025malliavin}. However, Malliavin calculus based approaches still require sufficient smoothness and regularity (as in non-degeneracy of the corresponding Malliavin covariance operator) of $b$ and $\sigma$. Several different approaches have also appeared \cite{conforti2025kl, blessing2503underdamped}. However, all these works either assume that $\mathcal L(X_t |Y=y)$ is continuously differentiable, or suppose that $b$ and $\sigma$ are sufficiently smooth/regular.

\subsection{Contributions}

Our contributions in this paper are summarized as follows.
\begin{enumerate}
    \item We extend the predictable-representation result for degenerate diffusions in \cite{ust_mart} to random initializations.
    \item In Section \ref{sect:tweedie}, we introduce local regression losses that do not require a density or score {function}.  Under local square integrability of the
{    conditional Radon--Nikodym derivative and a state-feedback factorization, the corresponding minimizer is the unique minimum-energy Girsanov shift.}
    
    \item We present this loss function as a solution of a control problem (Subsection \ref{sect:control}). Our loss functions depend on choosing sufficient increments of the data, say $\varepsilon$. To demonstrate the performance of our loss function under different $\varepsilon$, in Subsection \ref{sec:degenerate-diffusion} we provide a numerical experiment. We also provide an application to an image generation problem in Subsection \ref{sect:cond}.
\end{enumerate}

\section{Notation}

For a process $X$ {on a filtered probabiltiy space $(\Omega,\mathcal F,(\mathcal G_t)_t,\mathbb P)$}, we let
$\mathcal F_t^0(X):=\sigma\{X_s:0\leq s\leq t\}$ and let
$\mathcal F_t(X)$ denote its usual $\mathbb P$-augmentation.  For another
probability measure $\mathbb Q$, $\mathcal F_t^{\mathbb Q}(X)$ denotes the
corresponding $\mathbb Q$-augmentation.  The ambient filtration is denoted by
$(\mathcal G_t)_{t\leq T}$.
For a measure $\mathbb Q$ and a $\sigma$-field $\mathcal H$,
$\mathbb Q|_{\mathcal H}$ denotes restriction to $\mathcal H$.  We write
$\mathcal L_{\mathbb Q}(U)$ for the law of $U$ under $\mathbb Q$ and
$D_{\mathrm{KL}}(\mathbb Q\Vert\mathbb P)$ for relative entropy.  The space
$L_a^2(\mathcal F(X),\mathbb Q;\mathbb R^d)$ consists of
$\mathcal F(X)$-predictable processes $v$ such that
$\mathbb E^{\mathbb Q}\int_0^T|v_t|^2\,dt<\infty$.
The notation $L_a^2([0,r],\mathcal F(X),\mathbb Q;\mathbb R^d)$ uses
the same definition with the integral restricted to $[0,r]$, and
$L_{a,\mathrm{loc}}^2([0,T),\mathcal F(X),\mathbb Q;\mathbb R^d)$ means
that this condition holds on every interval $[0,r]$ with $r<T$.
The symbol $A^\dagger$ denotes the Moore--Penrose pseudoinverse and
$\|A\|_{\mathrm F}$ the Frobenius norm.  {Given a matrix valued curve $a: [0,T] \to \mathbb R^{n \times d},$} we use
\[
\widetilde \Pi(t,a):=\Pi(a(t)):=a(t)^\top
\bigl(a(t)a^\top(t)\bigr)^\dagger a(t),
\qquad P_t(X):=\widetilde \Pi(t,\sigma(\cdot,X_{\cdot}))=\Pi(\sigma(t,X_t)),
\]
i.e., $\Pi(a(t))$ is the orthogonal projection onto
{$\operatorname{Ran}(a(t)^{\top})=(\ker a(t))^\perp$ under Euclidean inner product; hence}
$P_t(X)$ is {an orthogonal projection} onto
$\operatorname{Ran}(\sigma(t,X_t)^\top)=(\ker\sigma(t,X_t))^\perp$.
All process equalities are understood up to the product measure explicitly
specified in the statement.

\section{Loss Functions for Degenerate Conditioned Diffusions}\label{sect:tweedie}

In this section, we introduce our main regression formulae and explain how
they can be used as population losses.  The predictable-representation and
minimum-energy arguments are given in Section~\ref{sect:6}, and the detailed
proofs are deferred to the appendix.

\begin{assumption}[Regularity and well-posedness]\label{ass:1}
On a complete filtered probability space
$(\Omega,\mathcal F,(\mathcal G_t)_{t\leq T},\mathbb P)$ satisfying the
usual conditions, $B$ is a $d$-dimensional $(\mathcal G_t)$-Brownian motion
and $X_0$ is $\mathcal G_0$-measurable.  \Eqref{eq:s} has a
pathwise-continuous strong solution that is unique up to
indistinguishability.  The maps
$b:[0,T]\times\mathbb R^n\to\mathbb R^n$ and
$\sigma:[0,T]\times\mathbb R^n\to\mathbb R^{n\times d}$ are bounded and
Borel measurable.  If $\mu_0=\mathcal L_{\mathbb P}(X_0)$, then for
$\mu_0$-a.e.\ $x$ the equation started from $x$ has a weak solution that is
unique in law.  A regular conditional family
$(\mathbb P_x)_{x\in\mathbb R^n}$ of $\mathbb P$ given $X_0$ exists.
\end{assumption}

Global Lipschitz continuity in the state variable together with linear growth
{is a standard sufficient condition for the existence of a unique strong solution for \eqref{eq:s}.  Boundedness is used below only for simple integrability estimates and can 
be replaced by corresponding localized moment assumptions. We only use pathwise uniqueness of \eqref{eq:s} for the minimum entropy property of the Girsanov shift $u$ that we use to construct \eqref{eq:s2}.}

\begin{assumption}[Regularity of conditional density]\label{ass:2}
Let $\mathbb P^y:=\mathbb P(\,\cdot\mid Y=y)$ be a regular conditional
probability.  For $\mathcal L_{\mathbb P}(Y)$-a.e.\ $y$ and every $t<T$,
\[
\mathbb P^y|_{\mathcal F_t(X)}\ll
\mathbb P|_{\mathcal F_t(X)},
\qquad
Z_t^y(X):=
\frac{d(\mathbb P^y|_{\mathcal F_t(X)})}
     {d(\mathbb P|_{\mathcal F_t(X)})}
\in L^2(\mathcal F_t(X),\mathbb P).
\]
\end{assumption}

When $y$ is fixed, we suppress the superscript and write $Z_t(X)$ for
$Z_t^y(X)$. The only restrictve condition that is used in this work is the square integrability of the process $Z^y(X)=(Z^y_t(X))_t$, which is needed to apply the martingale representation theorem.

\begin{remark}[Square integrability and entropy]\label{rem:l2-entropy}
The $L^2(\mathbb P)$ condition is used to invoke the predictable-representation theorem, which is the primary additional restriction in our work. Since for $x >0$ it holds that $x \log x \le C(1+x^2)$, Assumption~\ref{ass:2} also implies that we have finite entropy: $\mathbb E^{\mathbb P}[Z^y_t(X)\log Z^y_t(X)] <\infty$ for every $t <T$. 
\end{remark}

\begin{assumption}\label{ass:feedback}
There exists a Borel measurable function $G$ such that $Y=G(X_T)$ and solutions of \eqref{eq:s} are Markov processes.
\end{assumption}

\begin{remark}
    We need Assumpton \ref{ass:feedback} to have loss functions that rely on $(X_t,Y)$ at time $t$ rather than $(X_{[0,t]},Y),$ where $X_{[0,t]}$ denotes the whole trajectory of the process $X$ up to time $t$.
\end{remark}

For the rest of this section, Assumptions~\ref{ass:1}, \ref{ass:2}, and \ref{ass:feedback} are in force.

\begin{theorem}[Degenerate Tweedie formula-1]\label{thrm:quaso}
For $\mathcal L_{\mathbb P}(Y)$-a.e.\ $y$ and every $r<T$, there is a
unique realized process
$u_{\cdot}(X_\cdot,y)\in
L_{a,\mathrm{loc}}^2([0,T),\mathcal F(X),\mathbb P^y;\mathbb R^d)$
satisfying
\begin{equation}\label{eq:control}
u_t(X_t,y)
=
\lim_{\varepsilon\downarrow0}
\mathbb E^{\mathbb P^y}
\left[
\left.
\frac1\varepsilon
\int_t^{t+\varepsilon}P_s(X)\,dB_s
\right|
 X_t
\right]
=
\lim_{\varepsilon\downarrow0}
\mathbb E^{\mathbb P}
\left[
\left.
\frac1\varepsilon
\int_t^{t+\varepsilon}P_s(X)\,dB_s
\right|
 X_t,Y=y
\right]
\end{equation}
where both limits are in
$L^2([0,r]\times\Omega,dt\otimes d\mathbb P^y)$.  Moreover,
{$P_t(X)u_t(X_t,y)=u_t(X_t,y)$ $dt\otimes d\mathbb P^y$-a.e.  There is a corresponding weak solution of \eqref{eq:s2} whose state-path law on $[0,r]$}
is $\mathcal L_{\mathbb P}(X_{[0,r]}\mid Y=y)$, which is given by the solution of \eqref{eq:s} under the measure $Z^y_t(X) d\mathbb P$ with the driving Brownian motion $B_t - \int_0^t u_s(X_s,y)ds$.   Among the admissible
{Girsanov shifts $v$, $u$ uniquely minimizes
$\mathbb E^{\mathbb P^y}\int_0^r|v_t|^2\,dt$.}
\end{theorem}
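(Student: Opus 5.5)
We represent the density process $Z^y(X)$ as a stochastic exponential with respect to the \emph{projected} Brownian motion $M_t:=\int_0^t P_s(X)\,dB_s$, then read $u$ off by Girsanov.

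\textbf{Step 1: martingale representation.} Fix $y$ and $r<T$. By Assumption~\ref{ass:2}, $Z_t:=Z^y_t(X)$ is a square-integrable $(\mathcal F_t(X),\mathbb P)$-martingale on $[0,r]$. Weak uniqueness for \eqref{eq:s} from $\mu_0$-a.e.\ initial point (Assumption~\ref{ass:1}) gives well-posedness of the associated martingale problem. Following \cite{ust_mart}, the predictable representation property then holds for $\mathcal F(X)$, conditionally on $X_0$, with respect to $M$. Every $\mathcal F(X)$-martingale is therefore $\mathcal F_0(X)$-measurable plus a stochastic integral against $M$.

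To handle the random initialization, we disintegrate along the regular family $(\mathbb P_x)$. We apply the representation under each $\mathbb P_x$ and glue the integrands measurably in $x$; measurability of $x\mapsto\mathbb P_x$ makes this possible. This yields
\[
Z_t=Z_0+\int_0^t \xi_s^\top P_s(X)\,dB_s
\]
with $\xi$ $\mathcal F(X)$-predictable and $\mathbb E\int_0^r|P_s\xi_s|^2ds<\infty$. We may replace $\xi$ by $P\xi$.

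\textbf{Step 2: Girsanov, localization, and the Markov reduction.} Define $u_t:=P_t\xi_t/Z_t$ on $\{Z_t>0\}$; this set has full $\mathbb P^y$-measure. To avoid the strict positivity assumed in \cite{ust_mart}, we localize with $\tau_n:=\inf\{t:Z_t\le 1/n\}\wedge r$. On $[0,\tau_n]$ we have $dZ=Z\,u^\top dB$, so $Z$ is the Dol\'eans exponential of $\int u\,dB$. Girsanov then makes $B-\int_0^{\cdot}u_s\,ds$ a Brownian motion under $Z\,d\mathbb P=\mathbb P^y$ up to $\tau_n$. Since $\tau_n\uparrow r$ $\mathbb P^y$-a.s., this gives the weak solution of \eqref{eq:s2} with the conditional path law. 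Here $\sigma u=\sigma P\xi/Z$, which gives the drift shift.

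Finite $\mathbb P^y$-energy follows from the entropy identity
\[
\mathbb E^{\mathbb P^y}\int_0^{\tau_n}|u|^2\,dt=2\,\mathbb E[Z_{\tau_n}\log Z_{\tau_n}]-2\,\mathbb E[Z_0\log Z_0],
\]
whose right side is bounded uniformly in $n$ by Remark~\ref{rem:l2-entropy} and Doob's inequality; letting $n\to\infty$ gives $u\in L^2_{a,\mathrm{loc}}$. By construction $Pu=u$.

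For the feedback form, we use Assumption~\ref{ass:feedback}. Because $X$ is Markov and $Y=G(X_T)$, we have $Z_t=h_t(X_t)/\,\mathbb P(Y\in dy)$-type, so $Z_t$ is a function of $(t,X_t)$. The integrand $u_t$ is then $\sigma(X_t)\vee\sigma(Y)$-measurable; this is the delicate point. I would derive it from the local formula below: the right side of \eqref{eq:control} is conditioned only on $X_t$, and Markovianity under $\mathbb P^y$ (an $h$-transform of a Markov process) makes the conditional expectation given $X_t$ agree with the one given $\mathcal F_t(X)$.

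\textbf{Step 3: the Tweedie limit.} Under $\mathbb P^y$,
\[
\tfrac1\varepsilon\int_t^{t+\varepsilon}P_s\,dB_s=\tfrac1\varepsilon\int_t^{t+\varepsilon}P_s\,d\tilde B_s+\tfrac1\varepsilon\int_t^{t+\varepsilon}u_s\,ds .
\]
The conditional expectation of the first term given $\mathcal F_t(X)$ vanishes. The second term converges to $u_t$ in $L^2(dt\otimes d\mathbb P^y)$ by Lebesgue differentiation in $L^2$. This convergence uses the continuity of translations in $L^2$, and the contraction property of conditional expectation handles the conditioning.

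The second equality in \eqref{eq:control} is just the definition of $\mathbb P^y$. Uniqueness of $u$ follows because it is the $L^2$ limit.

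\textbf{Step 4: minimum energy.} Let $v$ be any admissible shift, meaning $B-\int v$ is a $\mathbb P^y$-Brownian motion and the controlled state law matches. Then $v-u$ is $\mathcal F(X)$-predictable and the drift difference $\sigma(v-u)$ vanishes, so $P(v-u)=0$. Since $u=Pu$, the two pieces are orthogonal and
\[
|v|^2=|u|^2+|v-u|^2 .
\]
Hence $u$ is the unique minimizer. Pathwise uniqueness of \eqref{eq:s} is what identifies shifts of equal law with shifts of equal drift.

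I expect the main obstacle to be Step 1: transferring \"Ust\"unel's representation to a random $X_0$, together with the localization that drops strict positivity of $Z$.
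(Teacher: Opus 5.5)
Your Steps 1--3 follow the paper's chain: the representation of $Z^y$ via Theorem~\ref{thrm:rep} and Lemma~\ref{lem:density-martingale}, then localization, Girsanov and $L^2$ Lebesgue differentiation as in Theorem~\ref{thm:main}. Your minimum-energy argument, however, is genuinely different.

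\textbf{Comparison.} The paper argues through relative entropy. It proves the exact identity $\frac12\mathbb E^{\mathbb P^y}\int_0^r|u_t|^2dt=D_{\mathrm{KL}}(\mathbb P^y|_{\mathcal F_r(X)}\Vert\mathbb P|_{\mathcal F_r(X)})-H_0^y$, using two-sided localization and then uniform integrability of $Z_{\tau_n}\log Z_{\tau_n}$ with Vitali. It combines this with the entropy constraint built into $\mathcal A_r^y$ and the data-processing inequality. In the equality case it first obtains $\mathbb Q=\mathbb Q^{y,r}$ on $\mathcal G_r$ and only then $v=u$.

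Your decomposition $|v_t|^2=|u_t|^2+|(I-P_t(X))v_t|^2$ turns the heuristic stated after Corollary~\ref{cor:sd} into a proof. It needs only finiteness of the energy, so your one-sided localization with a uniform bound and monotone convergence suffices. It ignores the entropy constraint on competitors and shows the excess energy explicitly. In exchange, it gives neither the $D_{\mathrm{KL}}$/causal-transport characterization nor the identification of the competitor's measure on $\mathcal G_r$. As written, it also uses that $v$ is $\mathcal F(X)$-predictable; the paper's class imposes this too, but the data-processing step itself does not need it.

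Your feedback-form argument is also more explicit than the paper's bare appeal to Assumption~\ref{ass:feedback}. The target is a functional of $X_{[t,t+\varepsilon]}$, since $\int_t^{\cdot}P_s(X)\,dB_s=\int_t^{\cdot}\sigma(s,X_s)^\dagger(dX_s-b(s,X_s)\,ds)$. The $h$-transformed law is Markov, so conditioning on $\mathcal F_t(X)$ and on $X_t$ agree. Hence $u_t$ is an $L^2$-limit of $\sigma(X_t)$-measurable variables.

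\textbf{What needs fixing.} First, in Step~4 the competitor must be a pair $(\mathbb Q,v)$ with $\mathbb Q|_{\mathcal F_r(X)}=\mathbb P^y|_{\mathcal F_r(X)}$, and its energy must be measured under $\mathbb Q$. If $B-\int_0^\cdot v_s\,ds$ is Brownian under the same measure as $B-\int_0^\cdot u_s\,ds$, then $v=u$ trivially and nothing is compared.

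Second, $\sigma(v-u)=0$ does not come from pathwise uniqueness. Both $X-X_0-\int_0^\cdot(b+\sigma v)\,ds$ and $X-X_0-\int_0^\cdot(b+\sigma u)\,ds$ are continuous $(\mathcal F(X),\mathbb Q)$-martingales. The first is so because $v$ is $\mathcal F(X)$-predictable, which is where that hypothesis enters. The second is so because $\mathbb Q$ and $\mathbb Q^{y,r}$ agree on $\mathcal F_r(X)$. Their difference is therefore a continuous finite-variation martingale, hence zero. Also, $\mathbb E^{\mathbb Q}\int_0^r|u_t|^2dt=\mathbb E^{\mathbb P^y}\int_0^r|u_t|^2dt$ because $\int_0^r|u_t|^2dt$ is $\mathcal F_r(X)$-measurable.

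Third, in Step~1, measurability of $x\mapsto\mathbb P_x$ alone does not let you glue the fiberwise integrands, which are classes in different $L^2(\mathbb P_x)$ spaces. The paper avoids gluing by proving $\mathcal K^\perp=\{0\}$, transferring orthogonality to almost every fiber through a countable dense family of raw-predictable integrands. You should simply invoke Theorem~\ref{thrm:rep}.

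Finally, run Steps~2--3 on $[0,R]$ with $r<R<T$, so that $t+\varepsilon\le R$. Note that $Z_R(X)\,d\mathbb P$ on $\mathcal G_R$ is $\mathbb Q^{y,R}$, which is only shown to agree with $\mathbb P^y$ on $\mathcal F_R(X)$; that agreement is all your argument needs.
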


\begin{remark}[Independent simulation]\label{rem:controlled-uniqueness}
{The proof relies on Girsanov theorem, which proves existence of a weak controlled}
solution with the desired law.  If the controlled \eqref{eq:s2} is
weakly unique, then every weak solution of that equation has the same law.
Weak uniqueness of the original equation alone does not imply weak
uniqueness after adding a merely measurable control.
\end{remark}

\begin{corollary}[Degenerate Tweedie formula-2]\label{cor:sd}
\Eqref{eq:control} can also be written as
\begin{equation}\label{eq:control2}
u_t(X_t,y)
= \lim_{\varepsilon\downarrow0} 
\mathbb E^{\mathbb P^y} \left[ \left. \frac{B_{t+\varepsilon}-B_t}\varepsilon \right| X_t \right]
= \lim_{\varepsilon\downarrow0} 
\mathbb E^{\mathbb P} \left[ \left. \frac{B_{t+\varepsilon}-B_t}\varepsilon \right| X_t , Y=y\right]. 
\end{equation}
\end{corollary}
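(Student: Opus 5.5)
We plan to show that the two limits in \eqref{eq:control2} agree with the corresponding limits in \eqref{eq:control}. Decompose the Brownian increment orthogonally:
\[
B_{t+\varepsilon}-B_t=\int_t^{t+\varepsilon}P_s(X)\,dB_s+\int_t^{t+\varepsilon}\bigl(I-P_s(X)\bigr)\,dB_s .
\]
By Theorem~\ref{thrm:quaso}, the first term, divided by $\varepsilon$ and conditioned under $\mathbb P^y$ on $X_t$, converges in $L^2([0,r]\times\Omega,dt\otimes d\mathbb P^y)$ to $u_t(X_t,y)$. It therefore suffices to show that
\[
\frac1\varepsilon\,\mathbb E^{\mathbb P^y}\Bigl[\int_t^{t+\varepsilon}(I-P_s(X))\,dB_s\,\Big|\,X_t\Bigr]\to0
\]
in the same $L^2$ sense. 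The second equality in \eqref{eq:control2} is the same statement written with the regular conditional probability $\mathbb P^y=\mathbb P(\cdot\mid Y=y)$, exactly as in \eqref{eq:control}.

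To handle the orthogonal part, we use the Girsanov description in Theorem~\ref{thrm:quaso}. Under $\mathbb P^y$ (on $\mathcal F_r$, with $t+\varepsilon\le r$),
\[
\widetilde B_t:=B_t-\int_0^t u_s(X_s,y)\,ds
\]
is a Brownian motion. This needs the Girsanov statement on the ambient filtration. If the theorem only gives it on $\mathcal F(X)$, we extend it by noting that $Z^y$ is an $\mathcal F(X)$-martingale whose predictable representation, by the extended Üstünel result, is driven by $\int P\,dB$. Its stochastic logarithm $u$ then satisfies $Pu=u$, and the Girsanov theorem on $(\mathcal G_t)$ applies to $\mathbb P^y$ restricted to $\mathcal G_r$, whose density $\mathbb E^{\mathbb P}[Z^y_T\mid\mathcal G_r]$ we expect to equal $Z^y_r(X)$ by the Markov property in Assumption~\ref{ass:feedback}.

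Since $(I-P_s)u_s=0$, we get
\[
\int_t^{t+\varepsilon}(I-P_s)\,dB_s=\int_t^{t+\varepsilon}(I-P_s)\,d\widetilde B_s .
\]
This is a $\mathbb P^y$-martingale increment: the integrand is bounded because $I-P_s$ is a projection. Hence its conditional expectation given $\mathcal G_t\supseteq\sigma(X_t)$ vanishes. So the orthogonal term is identically zero for every $\varepsilon$, and the two limits coincide.

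The main obstacle is the identification step in the previous paragraph: the shifted process $\widetilde B$ must be a $\mathbb P^y$-Brownian motion with respect to a filtration containing $\sigma(X_t)$ and the future Brownian increments. This requires $\mathbb P^y|_{\mathcal G_r}$ to have density $Z^y_r(X)$ relative to $\mathbb P|_{\mathcal G_r}$, that is, that conditioning on $Y=G(X_T)$ sees $\mathcal G_r$ only through $X_r$.

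We intend to derive this from Assumption~\ref{ass:feedback}. Because $X$ is Markov with respect to $(\mathcal G_t)$, as the strong solution of \eqref{eq:s}, we have
\[
\mathbb P(Y\in A\mid\mathcal G_r)=\mathbb P(Y\in A\mid X_r),
\]
and this gives the density identification.

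If this route fails, we fall back on a direct estimate. The conditional expectation of the orthogonal term given $\mathcal F_t(X)$ equals
\[
\frac1\varepsilon\,\mathbb E^{\mathbb P}\Bigl[Z^y_{t+\varepsilon}\int_t^{t+\varepsilon}(I-P_s)\,dB_s\,\Big|\,\mathcal F_t\Bigr]\Big/Z^y_t .
\]
By the representation $dZ^y=Z^y u^\top P\,dB$, the covariation of $Z^y$ with $\int(I-P)\,dB$ is $\int Z^y u^\top P(I-P)\,ds=0$. The expectation therefore vanishes again.
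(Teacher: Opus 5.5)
Your decomposition $B_{t+\varepsilon}-B_t=\int_t^{t+\varepsilon}P_s(X)\,dB_s+W$, with $W:=\int_t^{t+\varepsilon}(I-P_s(X))\,dB_s$, is the right one, and your conclusion that $W$ has zero conditional mean for every $\varepsilon$ is true. The gap is in how you prove it.

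Theorem~\ref{thrm:quaso} provides the Girsanov shift under $\mathbb Q^{y,r}=Z^y_r(X)\cdot\mathbb P$. That measure agrees with $\mathbb P^y$ only on $\mathcal F_r(X)$, and $W$ is in general not $\mathcal F(X)$-measurable. So, as you note, everything hinges on $\mathbb P^y=\mathbb Q^{y,r}$ on $\mathcal G_r$. Your justification, that $X$ is Markov with respect to $(\mathcal G_t)$ ``as the strong solution'', is an extra hypothesis, not a consequence of the stated ones:
\begin{itemize}
\item Assumption~\ref{ass:1} gives strong existence and pathwise uniqueness only for the equation started at time $0$, not a flow from arbitrary $(r,x)$.
\item Assumption~\ref{ass:feedback} only says solutions are Markov, without reference to the ambient filtration.
\end{itemize}
The expression $\mathbb E^{\mathbb P}[Z^y_T\mid\mathcal G_r]$ is also undefined, since $Y$ is $\mathcal F_T(X)$-measurable and $\mathbb P^y$ is in general singular to $\mathbb P$ on $\mathcal F_T(X)$. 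Your fallback does not avoid the issue. The Bayes formula with density $Z^y_{t+\varepsilon}$ is valid only for $\mathcal F_{t+\varepsilon}(X)$-measurable variables, so applying it to $W$ silently assumes the same identification on $\mathcal G_{t+\varepsilon}$.

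The missing idea is that Theorem~\ref{thrm:rep} settles the matter directly, with no Girsanov step and no Markov property. Take $V=\phi(X_t)\psi(Y)$ with $\phi,\psi$ bounded. Since $Y=G(X_T)$, we have $V\in L^2(\mathcal F_T(X),\mathbb P)$, so $V=\mathbb E^{\mathbb P}[V\mid X_0]+\int_0^T\langle H_s,dB_s\rangle$ with $H$ $\mathcal F(X)$-predictable and $P_s(X)H_s=H_s$. By It\^o's isometry,
\[
\mathbb E^{\mathbb P}[W^jV]=\mathbb E^{\mathbb P}\int_t^{t+\varepsilon}\langle e_j,(I-P_s(X))H_s\rangle\,ds=0,
\]
where the $X_0$-term drops out because $\mathbb E^{\mathbb P}[W\mid\mathcal G_t]=0$. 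Hence $\mathbb E^{\mathbb P}[W\mid X_t,Y]=0$ under $\mathbb P$ itself, and disintegrating in $y$ gives the claim.

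Equivalently, $\mathbb E^{\mathbb P}[B_{t+\varepsilon}-B_t\mid\mathcal F_T(X)]=\int_t^{t+\varepsilon}P_s(X)\,dB_s$. This is exactly the paper's argument: Theorem~\ref{thrm} with the deterministic integrand $\mathbf 1_{(t,t+\varepsilon]}(s)e_j$, followed by the tower property over $\sigma(X_t,Y)\subseteq\mathcal F_T(X)$. Your covariation computation in the fallback is this same mechanism, applied to the wrong martingale.

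If you want to keep your Girsanov route, the same representation supplies the identification you were missing. The process $\mathbb E^{\mathbb P}[\psi(Y)\mid\mathcal G_r]=\mathbb E^{\mathbb P}[\psi(Y)\mid X_0]+\int_0^r\langle H_s,dB_s\rangle$ is $\mathcal F_r(X)$-measurable, because $P_sH_s=H_s$. Therefore $\mathbb P(Y\in\cdot\mid\mathcal G_r)=\mathbb P(Y\in\cdot\mid\mathcal F_r(X))$, and $\mathbb P^y=\mathbb Q^{y,r}$ on $\mathcal G_r$ without any Markov hypothesis. At that point, though, the detour through $\mathbb Q^{y,r}$ is superfluous.
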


{A Girsanov shift $u$ changes the state drift only through $\sigma u$.  A component}
of $u$ in $\ker\sigma$ would therefore leave the state law unchanged while
increasing its energy.  The minimum-energy control must consequently satisfy
{$P_t(X)u_t=u_t$ a.e. Corollary~\ref{cor:sd} is useful in simulation because the} Brownian increments are already available and no numerical projection is
required.

{If only the sample trajectories of the states $X$ and the condition $Y$ are observed, one can instead
learn the induced shift $\sigma u$ on the drift of \eqref{eq:s} instead of $u$.}

\begin{theorem}[Degenerate Tweedie formula-3]
\label{thm:state-tweedie}
Let $u_t(X_t,y)$ be the control obtained from \eqref{eq:control}.
Then, for every $r<T$,
\begin{equation}
\label{eq:state-tweedie}
\sigma(t,X_t)u_t(X_t,y)
=
\lim_{\varepsilon\downarrow0}
\mathbb E^{\mathbb P^y}
\left[
\left.
\frac{X_{t+\varepsilon}-X_t}{\varepsilon}
-
b(t,X_{t})
\right|
 X_t
\right]
\end{equation}
exists in $L^2\bigl([0,r]\times\Omega,dt\otimes d\mathbb P^y\bigr).$
\end{theorem}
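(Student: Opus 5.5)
We plan to reduce Theorem~\ref{thm:state-tweedie} to Corollary~\ref{cor:sd} and Theorem~\ref{thrm:quaso} by writing the state increment through the original SDE under $\mathbb P$. Pathwise,
\[
\frac{X_{t+\varepsilon}-X_t}{\varepsilon}-b(t,X_t)
=\frac1\varepsilon\int_t^{t+\varepsilon}\bigl(b(s,X_s)-b(t,X_t)\bigr)\,ds
+\frac1\varepsilon\int_t^{t+\varepsilon}\sigma(s,X_s)\,dB_s .
\]
Since $\mathbb P^y\ll\mathbb P$ on $\mathcal F_{t+\varepsilon}(X)$ for $t+\varepsilon<T$, this identity also holds $\mathbb P^y$-a.s. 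Under $\mathbb P^y$, Theorem~\ref{thrm:quaso} gives $B_s=\widetilde B_s+\int_0^s u_\tau(X_\tau,y)\,d\tau$ with $\widetilde B$ a Brownian motion. So the stochastic integral splits into a $\widetilde B$-martingale part and the drift part $\frac1\varepsilon\int_t^{t+\varepsilon}\sigma(s,X_s)u_s(X_s,y)\,ds$.

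We take $\mathbb E^{\mathbb P^y}[\cdot\mid X_t]$ and handle the three terms in turn.

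\textbf{Martingale term.} By the tower property through $\mathcal F_t(X)$ (enlarged by $\sigma(Y)$ if needed, so that $\widetilde B$ is a Brownian motion in a filtration containing $\sigma(X_t)$), the $\widetilde B$-integral has zero conditional mean. Boundedness of $\sigma$ gives square integrability.

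\textbf{Drift term.} We argue that $\frac1\varepsilon\int_t^{t+\varepsilon}\sigma(s,X_s)u_s\,ds\to\sigma(t,X_t)u_t$ in $L^2([0,r]\times\Omega,dt\otimes d\mathbb P^y)$, using $u\in L^2_{a,\mathrm{loc}}$ and boundedness of $\sigma$. Write $f_s:=\sigma(s,X_s)u_s(X_s,y)$, which lies in $L^2([0,r']\times\Omega)$ for some $r<r'<T$. The Lebesgue differentiation theorem in $L^2$ (the Steklov averages $\frac1\varepsilon\int_t^{t+\varepsilon}f_s\,ds\to f_t$ in $L^2(dt)$, approximated by continuous processes) gives this convergence pathwise in $L^2(dt)$. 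Dominated convergence in $\omega$, via the maximal-function bound, then upgrades it to $L^2(dt\otimes d\mathbb P^y)$. Conditional expectation given $X_t$ is an $L^2(\mathbb P^y)$-contraction, so the conditioned averages converge to $\mathbb E^{\mathbb P^y}[\sigma(t,X_t)u_t(X_t,y)\mid X_t]$. This equals $\sigma(t,X_t)u_t(X_t,y)$ because the latter is $\sigma(X_t)$-measurable; here we use the feedback form from Assumption~\ref{ass:feedback}.

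\textbf{Remainder term.} The last term is $\frac1\varepsilon\int_t^{t+\varepsilon}(b(s,X_s)-b(t,X_t))\,ds$, and it must vanish in $L^2(dt\otimes d\mathbb P^y)$. This is the main obstacle, since $b$ is only bounded and measurable and $s\mapsto b(s,X_s)$ need not be continuous. Our approach is to apply the same Steklov argument to the bounded process $g_s:=b(s,X_s)$: the averages $\frac1\varepsilon\int_t^{t+\varepsilon}g_s\,ds\to g_t$ in $L^2(dt)$ for every path, and boundedness gives domination. So no continuity of $b$ is needed, only joint measurability of $(s,\omega)\mapsto b(s,X_s)$, which follows from Borel measurability of $b$ and progressive measurability of $X$.

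Combining the three terms yields the limit \eqref{eq:state-tweedie} in $L^2([0,r]\times\Omega,dt\otimes d\mathbb P^y)$. As a consistency check, $\sigma P_t=\sigma$, so the limit agrees with applying $\sigma(t,X_t)$ to \eqref{eq:control2}.
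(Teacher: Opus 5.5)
Your proposal is correct and follows essentially the paper's proof: split the increment into the averaged $b(s,X_s)-b(t,X_t)$ term, the averaged $\sigma u$ term and a Girsanov martingale increment. The martingale increment has zero conditional mean, and the other two terms are handled by the $L^2$ Lebesgue differentiation theorem plus $L^2$-contractivity of $\mathbb E^{\mathbb P^y}[\,\cdot\mid X_t]$, with the feedback form making $\sigma(t,X_t)u_t(X_t,y)$ $\sigma(X_t)$-measurable.

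One correction: Theorem~\ref{thrm:quaso} makes $B-\int_0^\cdot u_s\,ds$ Brownian under $\mathbb Q^{y,R}$ (density $Z_R(X)$ on $\mathcal G_R$), not a priori under $\mathbb P^y$. As the paper does, take the zero conditional mean under $\mathbb Q^{y,R}$ and transfer it to $\mathbb P^y$. This works because the two measures agree on $\mathcal F_R(X)$ and the martingale increment equals $X_{t+\varepsilon}-X_t-\int_t^{t+\varepsilon}(b+\sigma u)\,ds$, which is $\mathcal F_R(X)$-measurable. No enlargement by $\sigma(Y)$ is needed.
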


Under the conditioned measure $\mathbb P^y$, enlarging the information by $Y$ changes the
semimartingale decomposition of $X$.  The predictable-representation result
below (Theorem \ref{thrm:rep}) identifies the new finite-variation term as $\sigma u$ and rules out an
additional orthogonal martingale contribution.  Although $\sigma u$ does not
identify an arbitrary Brownian control modulo $\ker\sigma$, it does identify
{the minimum-energy control. Indeed,}
\[
\sigma(t,X_t)^\dagger\sigma(t,X_t)u_t
=P_t(X)u_t=u_t,
\]
where
{$\sigma^\dagger=\sigma^\top(\sigma\sigma^\top)^\dagger$.  Hence, \eqref{eq:state-tweedie} recovers the control $u$ through the
Moore--Penrose inverse even when $\sigma$ is noninjective, which then can be used as a Girsanov shift as described in Theorem \ref{thrm:quaso}.}

The next corollary recovers the classical Tweedie formula when it is
available.  Notice that the preceding recovery of the  control
does not recover the entire Brownian path.  In a degenerate model, $B$ cannot
generally be reconstructed from
$\int_0^\cdot\sigma(s,X_s)\,dB_s$. The missing kernel component is irrelevant
to the minimum-energy control and to the controlled state \eqref{eq:s2}.

\begin{corollary}\label{cor:2}
{Suppose that $Z_t^y=h(t,X_t;y)$ with $h>0$, and that $h$ admits the It\^o differential}
\[
dh(t,X_t;y)
=\left\langle\sigma(t,X_t)^\top\nabla_xh(t,X_t;y),dB_t\right\rangle;
\]
for example, this holds for a $C^{1,2}$ solution of the corresponding
backward equation.  Then,
\[
u_t(X_t,y)=\sigma(t,X_t)^\top\nabla_x\log h(t,X_t;y)
\quad dt\otimes d\mathbb P^y\text{-a.e.}
\]
\end{corollary}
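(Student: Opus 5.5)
The plan is to identify $u$ from Theorem~\ref{thrm:quaso} with the integrand in the martingale representation of $Z^y$, and then read off that integrand from Itô's formula for $h$. By Theorem~\ref{thrm:quaso} and the predictable-representation argument behind it (Theorem~\ref{thrm:rep}), the $\mathbb P$-martingale $Z_t^y(X)$, $t<T$, has the representation
\[
Z_t^y(X)=Z_0^y(X)+\int_0^t\langle \alpha_s, P_s(X)\,dB_s\rangle ,
\]
with $\alpha$ predictable and $P\alpha=\alpha$. The minimum-energy Girsanov shift is $u_t=\alpha_t/Z_t^y$ on $\{Z^y>0\}$, which has full $\mathbb P^y$-measure.

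First, I would check that $\sigma^\top\nabla_x h$ already lies in the range of the projection. Since $P_t(X)=\Pi(\sigma(t,X_t))$ projects onto $\operatorname{Ran}(\sigma^\top)$, we have $P_t(X)\sigma(t,X_t)^\top\nabla_x h=\sigma(t,X_t)^\top\nabla_x h$. The assumed Itô differential therefore gives a second representation of the same martingale,
\[
h(t,X_t;y)=h(0,X_0;y)+\int_0^t\langle \sigma^\top\nabla_x h,\,P_s(X)\,dB_s\rangle .
\]
Uniqueness of the integrand, taken in the class of $P$-invariant predictable processes, then forces $\alpha=\sigma^\top\nabla_xh$ $dt\otimes d\mathbb P$-a.e. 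This uniqueness follows from the Itô isometry: the difference $\beta$ of the two integrands satisfies $\mathbb E\int_0^r|P\beta|^2\,dt=0$, and since $P\beta=\beta$, $\beta=0$. Passing from $dt\otimes d\mathbb P$ to $dt\otimes d\mathbb P^y$ is harmless because $\mathbb P^y|_{\mathcal F_r(X)}\ll\mathbb P|_{\mathcal F_r(X)}$.

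Second, I would compute $u$ and verify that it is the Girsanov shift, to avoid depending on the exact normalization used inside the proof of Theorem~\ref{thrm:quaso}. Because $h>0$, we have $Z_t^y=\mathcal E\bigl(\int_0^\cdot\langle \sigma^\top\nabla_x\log h,dB\rangle\bigr)_t\,Z_0^y$. Girsanov's theorem under $Z^y\,d\mathbb P$ then makes $B_t-\int_0^t\sigma^\top\nabla_x\log h\,ds$ a Brownian motion on $[0,r]$. This shift belongs to $L^2_{a,\mathrm{loc}}$ under $\mathbb P^y$, it is $P$-invariant, and it depends only on $(t,X_t,y)$. The uniqueness statement of Theorem~\ref{thrm:quaso} (the minimum-energy characterization, with $P u=u$) then identifies it with $u_t(X_t,y)$.

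The main obstacle is the integrability step. One must show that $\mathbb E^{\mathbb P^y}\int_0^r|\sigma^\top\nabla_x\log h|^2dt=\mathbb E^{\mathbb P}\int_0^r|\sigma^\top\nabla_xh|^2/h\,dt<\infty$. This makes the candidate admissible and lets the uniqueness clause apply. I would try a localization by the stopping times $\tau_m=\inf\{t:h(t,X_t;y)\le 1/m \text{ or } \ge m\}$: identify the shifts on $[0,\tau_m\wedge r]$ using the Itô-isometry uniqueness above, and then let $m\to\infty$. Since $h>0$ is continuous along paths, $\tau_m\uparrow r$ $\mathbb P^y$-a.s., so a.e. equality holds on all of $[0,r]$ without needing the finiteness a priori. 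Finite energy then follows a posteriori from $u\in L^2_{a,\mathrm{loc}}$ given by Theorem~\ref{thrm:quaso}.
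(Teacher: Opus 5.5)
Your first step is already the complete proof and matches the argument the paper intends (it gives no separate proof). Since $P_t(X)\sigma(t,X_t)^\top=\sigma(t,X_t)^\top$, uniqueness of the integrand in Lemma~\ref{lem:density-martingale} via Theorem~\ref{thrm:rep} forces $\xi=\sigma^\top\nabla_x h$; so does the simpler fact that $\int_0^r|\xi_s-\sigma^\top\nabla_x h|^2\,ds$ is the quadratic variation of the zero process. Hence $u=\xi/Z=\sigma^\top\nabla_x h/h$. The Girsanov re-verification and the localization you call the main obstacle are unnecessary, because the finite energy of $u=\xi/Z$ is already supplied by Theorem~\ref{thm:main}. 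If you keep that step, identify the two shifts through uniqueness of the semimartingale decomposition of $B$ under the common measure $Z_r\,d\mathbb P=\mathbb Q^{y,r}$, rather than through the minimum-energy clause, which presupposes equality of energies.
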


The desired control $u$ can be approximated by the following local regression problem.

\begin{proposition}\label{prop}
Fix $r<T$ and $0<\varepsilon<T-r$.
The problem
\begin{equation}\label{eq:amortized-projected-loss}
\inf_{v\in L^2_a(\mathcal F(X),dt\otimes d\mathbb P^y;\mathbb R^d)}
\mathbb E^{\mathbb P^y}
\int_0^r
\left|
v_t-
\frac1\varepsilon
\int_t^{t+\varepsilon}P_s(X)dB_s
\right|^2dt.
\end{equation}
has the unique minimizer given by the product-space conditional expectation that satisfies
\[
u_\varepsilon(t,X_t,y)
=\mathbb E^{\mathbb P^y}\left[
\left.\frac1\varepsilon\int_t^{t+\varepsilon}P_s(X)\,dB_s
\right|X_t\right]
\quad dt\otimes d\mathbb P^y\text{-a.e.}
\]
Moreover, $u_{\varepsilon}\to u_{\cdot}(X_\cdot,y)$
in $L^2([0,r]\times\Omega,dt\otimes d\mathbb P^y)$.
\end{proposition}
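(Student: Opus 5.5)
The plan is to treat \eqref{eq:amortized-projected-loss} as an orthogonal projection in the Hilbert space $H:=L^2([0,r]\times\Omega,dt\otimes d\mathbb P^y;\mathbb R^d)$. First I will check that the target $\xi^\varepsilon_t:=\varepsilon^{-1}\int_t^{t+\varepsilon}P_s(X)\,dB_s$ lies in $H$. Since $P_s$ is an orthogonal projection, $\|P_s\|_{\mathrm F}^2\le d$, so the It\^o isometry under $\mathbb P$ gives $\mathbb E^{\mathbb P}|\xi^\varepsilon_t|^2\le d/\varepsilon$. Under $\mathbb P^y$ on $\mathcal F_{t+\varepsilon}(X)\vee\sigma(B)$ we only have a density, so I will instead work on $[0,r]$ with $r+\varepsilon<T$. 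The density of $\mathbb P^y$ restricted to $\mathcal G_{r+\varepsilon}$ is $\mathbb E^{\mathbb P}[Z^y_{r+\varepsilon}\mid\mathcal G_{r+\varepsilon}]$ in the Markov setting of Assumption~\ref{ass:feedback}; it is in $L^2$ by Assumption~\ref{ass:2}. Cauchy--Schwarz together with the BDG bound $\mathbb E^{\mathbb P}|\xi^\varepsilon_t|^4\le C/\varepsilon^2$ then gives $\xi^\varepsilon\in H$.

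The admissible set is the closed subspace $K$ of $\mathcal F(X)$-predictable elements of $H$. For $v\in K$, the Pythagorean decomposition $\|v-\xi^\varepsilon\|^2=\|v-\Pi_K\xi^\varepsilon\|^2+\|\Pi_K\xi^\varepsilon-\xi^\varepsilon\|^2$ gives existence and uniqueness of the minimizer, namely $\Pi_K\xi^\varepsilon$. The projection onto predictable processes is the predictable projection, realized in product space as conditional expectation given the predictable $\sigma$-field. I then need to identify it with $\mathbb E^{\mathbb P^y}[\xi^\varepsilon_t\mid X_t]$. Two facts do this. By Assumption~\ref{ass:feedback}, under $\mathbb P^y$ the process $X$ is still Markov: it is an $h$-transform, since $Z^y_t$ is a function of $(t,X_t)$ by the Markov property. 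Second, $\xi^\varepsilon_t$ depends only on the future increment after $t$, and its conditional law given $\mathcal F_t(X)$ under $\mathbb P^y$ depends on $X_t$ alone. Hence $\mathbb E^{\mathbb P^y}[\xi^\varepsilon_t\mid\mathcal F_t(X)]=\mathbb E^{\mathbb P^y}[\xi^\varepsilon_t\mid X_t]$, and this is a continuous-in-$t$-in-$L^2$ family, so the optional and predictable versions coincide $dt\otimes d\mathbb P^y$-a.e. This shows the minimizer has the stated form $u_\varepsilon(t,X_t,y)$.

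The convergence $u_\varepsilon\to u_\cdot(X_\cdot,y)$ in $H$ is exactly the first limit in \eqref{eq:control} of Theorem~\ref{thrm:quaso}, which is already established on $[0,r]$. If a direct argument is wanted, I would write $B=\tilde B+\int_0^\cdot u_s\,ds$ with $\tilde B$ a $\mathbb P^y$-Brownian motion, which is legitimate by Theorem~\ref{thrm:quaso}. Then $\xi^\varepsilon_t=\varepsilon^{-1}\int_t^{t+\varepsilon}P_s\,d\tilde B_s+\varepsilon^{-1}\int_t^{t+\varepsilon}P_su_s\,ds$. The martingale part has zero conditional expectation given $\mathcal F_t(X)$. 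The second part equals $\varepsilon^{-1}\int_t^{t+\varepsilon}u_s\,ds$ because $P_su_s=u_s$. Its conditional expectation converges to $u_t$ in $H$ by the $L^2$ Lebesgue differentiation theorem, which holds for translations in $L^2(dt;L^2(\Omega))$, together with the contractivity of conditional expectation.

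I expect the main obstacle to be the integrability of $u$ on $[0,r+\varepsilon]$, which the differentiation step needs. It holds because $r+\varepsilon<T$ and $u\in L^2_{a,\mathrm{loc}}([0,T))$.
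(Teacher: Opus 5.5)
Your proposal is correct and follows the paper's route: the minimizer is the $dt\otimes d\mathbb P^y$ orthogonal projection onto the closed $\mathcal F(X)$-predictable subspace, and the convergence $u_\varepsilon\to u$ is taken from Theorem~\ref{thrm:quaso}, whose proof (via Theorem~\ref{thm:main}) is essentially your direct Girsanov-plus-Lebesgue-differentiation argument. Your integrability check and the $h$-transform Markov identification of the projection with $\mathbb E^{\mathbb P^y}[\,\cdot\mid X_t]$ fill in steps the paper leaves implicit, and two of them can be simplified. First, the target equals $\varepsilon^{-1}\int_t^{t+\varepsilon}\sigma(s,X_s)^\dagger\bigl(dX_s-b(s,X_s)\,ds\bigr)$, so it is $\mathcal F_{t+\varepsilon}(X)$-measurable and the density $Z^y_{r+\varepsilon}$ from Assumption~\ref{ass:2} suffices without passing to $\mathcal G_{r+\varepsilon}$. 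Second, optional and predictable projections of any square-integrable measurable process agree $dt\otimes d\mathbb P^y$-a.e., so you do not need the $L^2$-continuity in $t$ that you assert but do not justify.
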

\begin{proof}
Conditional expectation is the orthogonal projection onto the closed
product-space subspace $L^2_a(\mathcal F(X),dt\otimes d\mathbb P^y;\mathbb R^d)$.  The convergence is from
Theorem~\ref{thrm:quaso}.
\end{proof}

\begin{remark}
Analogous projection statements hold for Corollary~\ref{cor:sd} and
Theorem~\ref{thm:state-tweedie}.  Replacing the targets $S_{t_j}^i$ in
Algorithm~\ref{alg:training-step} leads to the corresponding algorithms.
\end{remark}

\begin{algorithm}[t]
\caption{Training with \eqref{eq:amortized-projected-loss}}
\label{alg:training-step}
\begin{algorithmic}[1]
\Require SDE \eqref{eq:s}, batch size $N$, current control approximation
$u^\theta$, uniform grid $t_k=kh$, window $\varepsilon=qh$, and $r<T-\varepsilon$.

\For{$i=1$ to $N$}
    \State Sample a path $X^i$ and its corresponding Brownian path $B^i$
    from \eqref{eq:s}.
    \State Compute the corresponding condition $Y^i=G(X_T^i)$.
    \State For every $t_j\leq r$, compute the discrete projected-noise target
    \Statex
    \begin{equation}\label{eq:proj}
    S_{t_j}^i
    =
    \frac{1}{qh}
    \sum_{k=j}^{j+q-1}
    P_{t_k}(X^i)
    \bigl(B_{t_{k+1}}^i-B_{t_k}^i\bigr).
    \end{equation}
    \State Calculate the single-path loss
    \Statex
    \begin{equation}\label{eq:reg}
    \ell_i(\theta)
    =
    h\sum_{\{j:\,t_j\leq r\}}
    \left\|
    u_{t_j}^{\theta}(X_{t_j}^i,Y^i)-S_{t_j}^i
    \right\|^2.
    \end{equation}
\EndFor

\State Calculate the Monte Carlo full-batch loss
\Statex
\[
\mathcal L^N(\theta)
=
\frac{1}{N}\sum_{i=1}^{N}\ell_i(\theta).
\]
\State Take a gradient step on $\mathcal L^N(\theta)$.
\end{algorithmic}
\end{algorithm}

In Algorithm~\ref{alg:training-step} we adapt
\cite[Algorithm~1]{pidstrigach2025conditioning} to the degenerate setting under our loss functions.

For example, for an overparameterized ReLU parametrization with finite-dimensional separated inputs, vanishing finite-sample optimization error can be justified under the random-initialization, width, data-separation, and learning-rate conditions of \citet{allenzhu2019convergence}, which are sufficient to establish convergence of the algorithm considered in \citet{tzen2019theoretical}. In particular, \citet{allenzhu2019convergence} establish linear convergence of gradient descent and stochastic gradient descent for $\ell_2$ regression. When these specific conditions are not imposed, we retain vanishing optimization error as an explicit assumption. We discuss the corresponding implementation details in Appendix~\ref{app:training-consistency}. We emphasize, however, that the regression step \eqref{eq:reg} must in practice be approximated using a neural network.

\section{Applications}

\subsection{A Stochastic Control Problem}\label{sect:control}

The variational formula of \cite{boue1998variational} is interpreted in
\cite{tzen2019theoretical} as a stochastic control problem with a quadratic
running cost. This control problem approach has been used in the generative model for conditioning in recent works, for instance, see \cite{domingo2025adjoint}. We next extend this interpretation to the degenerate setting.
The statement is informal; Theorem~\ref{thrm:10} in Appendix \ref{app:d} gives the precise version
for an arbitrary target endpoint law.

\begin{theorem}[Informal]\label{thm:degenerate-stochastic-control}
Suppose that $X_0=x_0$ is deterministic, and let
\[
z_r^y :=
\frac{d\mathcal L_{\mathbb P}(X_r\mid Y=y)}
     {d\mathcal L_{\mathbb P}(X_r)}.
\]
{Define the endpoint density process and the control by}
\[
Z_t^{y,r}(X):=
\mathbb E^{\mathbb P}[z_r^y\mid\mathcal F_t(X)]
=1+\int_0^t\langle\xi_s^{y,r},dB_s\rangle,
\qquad
u_t^{y,r}:=\frac{\xi_t^{y,r}}{Z_t^{y,r}(X)}
\quad\text{on }\{Z_t^{y,r}(X)>0\}.
\]
Consider
\begin{equation}
\label{eq:control-problem-main}
V(r,y)
:=
\inf_{(\mathbb Q,v)}
\mathbb E^{\mathbb Q}
\left[
\frac12\int_0^r|v_t|^2dt
-
\log z_r^y
\right],
\end{equation}
{where the infimum is taken over admissible Girsanov shifts.}

Then, $V(r,y)=0$.  The endpoint control $u^{y,r}$ is the unique minimizer in
{a class of admissible one-sided Girsanov shifts, and the measure}
$\mathbb Q^{y,r}$ satisfies
\[
\mathcal L_{\mathbb Q^{y,r}}(X_r)
=
\mathcal L_{\mathbb P}(X_r\mid Y=y).
\]
{Among all admissible Girsanov shifts of \eqref{eq:s} that satisfies this conditional
terminal law, $u^{y,r}$ is the unique Girsanov shift with the minimum-entropy property:}
\[
\frac12
\mathbb E^{\mathbb Q^{y,r}}
\int_0^r|u_t^{y,r}|^2dt
=
\inf_{\substack{(\mathbb Q,v):\\
\mathcal L_{\mathbb Q}(X_r)
=
\mathcal L_{\mathbb P}(X_r\mid Y=y)}}
\frac12
\mathbb E^{\mathbb Q}
\int_0^r|v_t|^2dt.
\]
\end{theorem}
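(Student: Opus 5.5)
The plan is to reduce the control problem to the Donsker--Varadhan/Gibbs variational identity for relative entropy, combined with the Girsanov representation of entropy along $\mathcal F(X)$.

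\emph{Step 1 (entropy of an admissible shift).} Fix an admissible pair $(\mathbb Q,v)$. Here $\mathbb Q\ll\mathbb P$ on $\mathcal F_r$, $v$ is $\mathcal F(X)$-predictable, and $B-\int_0^\cdot v_s\,ds$ is a $\mathbb Q$-Brownian motion with $\mathbb E^{\mathbb Q}\int_0^r|v_t|^2dt<\infty$. Under these conditions I will show
\[
D_{\mathrm{KL}}\bigl(\mathbb Q|_{\mathcal F_r(X)}\Vert\mathbb P|_{\mathcal F_r(X)}\bigr)\le \tfrac12\mathbb E^{\mathbb Q}\int_0^r|v_t|^2dt .
\]
Equality holds exactly when $v$ equals the F\"ollmer drift of the density process of $\mathbb Q|_{\mathcal F(X)}$ with respect to $\mathbb P|_{\mathcal F(X)}$. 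For this I will use the predictable representation of Theorem~\ref{thrm:rep}. The density $M_t=d\mathbb Q|_{\mathcal F_t(X)}/d\mathbb P|_{\mathcal F_t(X)}$ is written as $1+\int_0^t\langle \xi_s,dB_s\rangle$ with $P_s\xi_s=\xi_s$. A localization in the spirit of the paper removes the strict positivity requirement of \"Ust\"unel: stop at $\tau_n=\inf\{t:M_t\le 1/n\}$, then let $n\to\infty$. Itô's formula for $\log M$ under $\mathbb Q$ then gives the entropy as $\frac12\mathbb E^{\mathbb Q}\int_0^r|\xi/M|^2dt$. Finally, projecting $v$ onto $\mathcal F(X)$ (optional projection under $\mathbb Q$) shows that $\xi/M$ is the $L^2(\mathbb Q)$-projection of $v$. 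This gives the inequality by Jensen, with equality iff $v=\xi/M$ a.e.

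\emph{Step 2 (value equals zero).} The Gibbs inequality gives
\[
\mathbb E^{\mathbb Q}[-\log z_r^y]\ge -D_{\mathrm{KL}}(\mathbb Q|_{\sigma(X_r)}\Vert \mathbb P|_{\sigma(X_r)})-\log\mathbb E^{\mathbb P}[z_r^y]=-D_{\mathrm{KL}}(\cdot).
\]
The last equality uses $\mathbb E^{\mathbb P}z_r^y=1$. By data processing, $D_{\mathrm{KL}}$ on $\sigma(X_r)$ is at most $D_{\mathrm{KL}}$ on $\mathcal F_r(X)$. Combining this with Step 1 shows that the cost is $\ge0$ for every admissible pair. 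For the matching upper bound, take $d\mathbb Q^{y,r}=z_r^y\,d\mathbb P$. Its density process on $\mathcal F(X)$ is $Z^{y,r}$, and by Girsanov its shift is $u^{y,r}=\xi^{y,r}/Z^{y,r}$. The entropy of this measure equals $\mathbb E^{\mathbb Q}\log z_r^y$, since the density is $\sigma(X_r)$-measurable. So the cost is exactly $0$, and $V(r,y)=0$.

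The identity $\mathcal L_{\mathbb Q^{y,r}}(X_r)=\mathcal L_{\mathbb P}(X_r\mid Y=y)$ follows directly from the definition of $z_r^y$. Here $X_0=x_0$ is deterministic, so no initial reweighting is needed. Finite entropy requires $z_r^y\log z_r^y\in L^1$, which Remark~\ref{rem:l2-entropy} provides.

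\emph{Step 3 (uniqueness and minimum entropy).} Suppose a pair attains cost $0$. Then all three inequalities above are equalities.
\begin{itemize}
\item Equality in Gibbs forces $d\mathbb Q|_{\sigma(X_r)}=z_r^y\,d\mathbb P$.
\item Equality in data processing forces the $\mathcal F_r(X)$-density to be $\sigma(X_r)$-measurable, hence equal to $z_r^y$.
\item Equality in Step 1 forces $v=u^{y,r}$ $dt\otimes d\mathbb Q$-a.e.
\end{itemize}
For the constrained entropy statement, any admissible $(\mathbb Q,v)$ with the prescribed terminal law satisfies
\[
\tfrac12\mathbb E^{\mathbb Q}\int_0^r|v|^2\ge D_{\mathrm{KL}}(\mathcal L_{\mathbb Q}(X_r)\Vert\mathcal L_{\mathbb P}(X_r))=\mathbb E^{\mathbb Q^{y,r}}\log z_r^y=\tfrac12\mathbb E^{\mathbb Q^{y,r}}\int_0^r|u^{y,r}|^2 .
\]
Uniqueness follows by the same equality analysis.

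I expect Step 1 to be the main obstacle. Two issues arise in the degenerate setting. First, $v$ may be only $\mathcal G$-adapted rather than $\mathcal F(X)$-adapted, which is why the filtration projection is needed; second, $v$ may carry components in $\ker\sigma$. The first point needs the representation theorem under $\mathbb P$ restricted to $\mathcal F(X)$. To control the set $\{Z=0\}$, I will first prove everything on $[0,\tau_n]$ and pass to the limit by monotone convergence of the entropy terms.
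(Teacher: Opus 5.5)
Your proposal is correct and has the same skeleton as the paper's proof of Theorem~\ref{thrm:10}: Gibbs/Donsker--Varadhan for the term $-\log z_r^y$, data processing down to $\sigma(X_r)$, the candidate $d\mathbb Q^{y,r}=z_r^y\,d\mathbb P$ handled by Girsanov plus the localization of Theorem~\ref{thm:main}, and an equality analysis. The real difference is how a competitor's energy is tied to entropy.

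The paper works on all of $\mathcal G_r$ and simply writes $\frac12\mathbb E^{\mathbb Q}\int_0^r|v_t|^2dt=D_{\mathrm{KL}}(\mathbb Q\Vert\mathbb P)$ for every $(\mathbb Q,v)\in\mathcal A_r$. It then gets uniqueness by first forcing $d\mathbb Q/d\mathbb P=z_r$ on $\mathcal G_r$, and only afterwards compares semimartingale decompositions. That identity is built into the class $\mathcal A_r^y$ of Theorem~\ref{thrm:causal}, but not into the class $\mathcal A_r$ of Theorem~\ref{thrm:10}. In general only $D_{\mathrm{KL}}(\mathbb Q\Vert\mathbb P)\ge\frac12\mathbb E^{\mathbb Q}\int_0^r|v_t|^2dt$ holds on $\mathcal G_r$, which is the wrong direction for the lower bound.

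Your Step~1 proves the inequality that is actually needed, $\frac12\mathbb E^{\mathbb Q}\int_0^r|v_t|^2dt\ge D_{\mathrm{KL}}(\mathbb Q|_{\mathcal F_r(X)}\Vert\mathbb P|_{\mathcal F_r(X)})$, for arbitrary $\mathcal G$-predictable shifts. So your lower bound covers the class as it is actually defined. What you give up is uniqueness of $\mathbb Q$ on $\mathcal G_r$: you identify only $v$ and $\mathbb Q|_{\mathcal F_r(X)}$. That is the correct statement at this generality. Take $U$ a $\mathcal G_0$-measurable variable independent of $B$ and $g$ with $\mathbb E^{\mathbb P}g(U)=1$; replacing $z_r^y$ by $z_r^y\,g(U)$ gives another zero-cost pair with the same shift $u^{y,r}$. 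Uniqueness of the shift is also exactly what the informal statement asserts.

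Two details in Step~1 need attention.

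First, Theorem~\ref{thrm:rep} is stated for $L^2(\mathbb P)$ functionals, but the $\mathcal F_r(X)$-density of a general competitor is only known to be in $L^1(\mathbb P)$. You need the routine extension of the representation to local martingales. Alternatively, bypass it: under Assumption~\ref{ass:1} with $X_0=x_0$ we have $\mathcal F_r(X)\subseteq\mathcal F_r(B)$. Data processing then reduces Step~1, including its equality case, to the classical F\"ollmer bound on the Brownian filtration, and the degenerate representation is needed only for the candidate.

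Second, comparing the two $(\mathcal F(X),\mathbb Q)$-drifts of $\int_0^\cdot P_s(X)\,dB_s$ gives $\xi_t/M_t=P_t(X)\widehat v_t$, not $\widehat v_t$. This still suffices, because $|P_t(X)\widehat v_t|\le|\widehat v_t|$, and equality forces $v=\xi/M$.
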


\subsection{Degenerate Nonlinear Diffusion}
  \label{sec:degenerate-diffusion}

  We consider the nonlinear diffusion
  \begin{equation}
      dX_t=b(X_t)\,dt+\sigma\,dB_t,
      \qquad
      X_0=(1,0,0.4)^\top,
      \qquad
      Y=X_T^{(1)},
  \end{equation}
  where $B_t$ is a standard three-dimensional Brownian motion and
  \begin{equation}
      b(x)=
      \begin{pmatrix}
          20x_1(1-x_1^2)\\
          -x_2+0.35x_1\\
          -0.8x_3
      \end{pmatrix},
      \qquad
      \sigma=
      \begin{pmatrix}
          1&1&1\\
          0&1&2\\
          0&2&4
      \end{pmatrix}.
  \end{equation}
  Since
  \(\operatorname{rank}(\sigma)=2<3\), the instantaneous covariance
  $
      \sigma\sigma^\top
      =
      \begin{pmatrix}
          3&3&6\\
          3&5&10\\
          6&10&20
      \end{pmatrix}
  $
  is singular. The corresponding projection to $\mathrm{Ran}(\sigma^{\top})$ is
  $
      P=\sigma^\dagger\sigma
      =
      \begin{pmatrix}
          5/6&1/3&-1/6\\
          1/3&1/3&1/3\\
          -1/6&1/3&5/6
      \end{pmatrix}.
  $

  For each \(\varepsilon\in\{0.005,0.01,0.1\}\), we train a separate control
  using the fixed-\(\varepsilon\) loss
  \begin{equation}
      \mathcal L_\varepsilon(\theta)
      =
      \mathbb E\left[
          \left\|
              P u_\theta(t,X_t,Y)
              -
              P\frac{B_{t+\varepsilon}-B_t}{\varepsilon}
          \right\|^2
      \right].
      \label{eq:degenerate-fixed-epsilon-loss}
  \end{equation}
  The training pool contains \(80{,}000\) independent reference
  trajectories. Each model is trained for \(20{,}000\) Adam updates with
  batch size \(2048\).

  At evaluation time, we impose the terminal condition \(Y=-1\) and
  simulate
  \begin{equation}
      d\widehat X_t
      =
      \left[
          b(\widehat X_t)
          +\sigma P u_\theta(t,\widehat X_t,-1)
      \right]dt
      +\sigma\,dB_t,
  \end{equation}
  where $Pu_{\theta}$ is obtained from \eqref{eq:degenerate-fixed-epsilon-loss} through a neural network \cite{tzen2019theoretical}.
  Because \(X_t^{(1)}\) is autonomous, the ground-truth can be
  computed from the one-dimensional backward Kolmogorov equation. Writing
  \begin{equation}
      h(t,x_1)
      =
      p\!\left(
          X_T^{(1)}=-1
          \,\middle|\,
          X_t^{(1)}=x_1
      \right),
  \end{equation}
  the corresponding backward Kolmogorov equation has diffusion term
{  \(\frac{3}{2}\partial_{x_1x_1}h\). The corresponding exact Girsanov shift is}
  \begin{equation}
      u^\star(t,x)
      =
      \sigma^\top\nabla_x\log h(t,x_1)
      =
      \begin{pmatrix}
          \partial_{x_1}\log h(t,x_1)\\
          \partial_{x_1}\log h(t,x_1)\\
          \partial_{x_1}\log h(t,x_1)
      \end{pmatrix},
  \end{equation}
{  and this induces the following shift on the drift term of \eqref{eq:s}}
  \begin{equation}
      \sigma u^\star(t,x)
      =
      (3,3,6)^{\top}
      \partial_{x_1}\log h(t,x_1),
  \end{equation}
  which we use to compare our empirical results against the ground-truth.
  We evaluate each method using \(15{,}000\) independently simulated
  trajectories. Sample trajectories obtained from this experiment can be
  seen on Figure \ref{fig:degenerate-brownian-trajectories}.

  \begin{figure}[t]
      \centering
      \includegraphics[width=\linewidth]{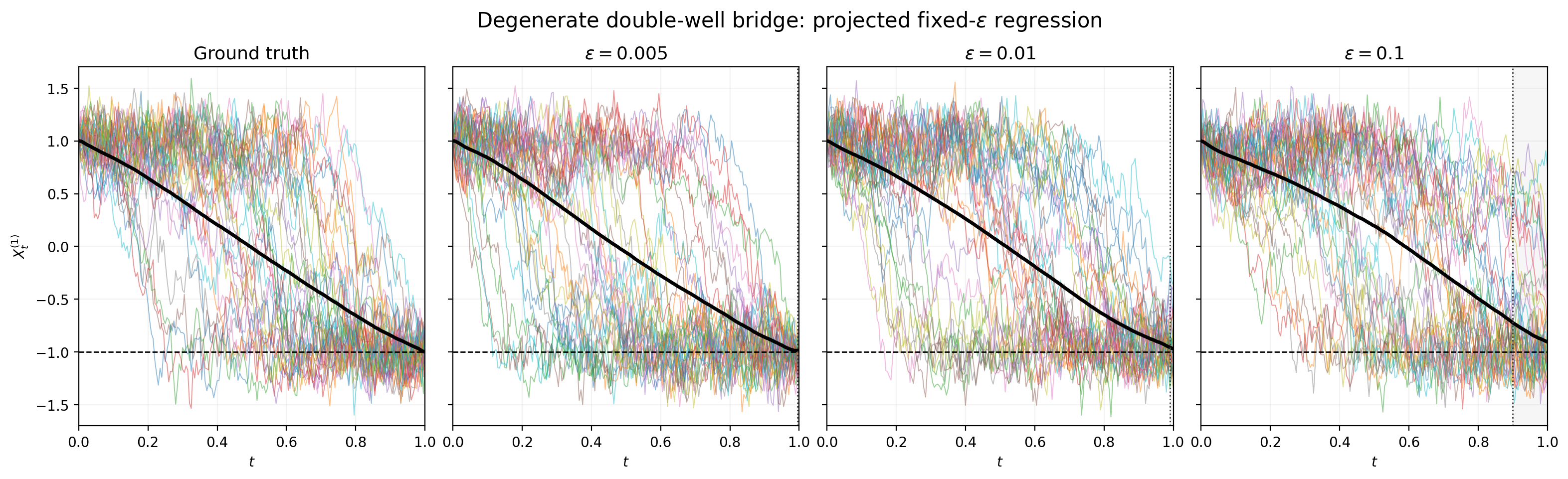}
      \caption{
          Sample paths from the numerical ground-truth Doob bridge and
          from the learned controls with
          \(\varepsilon=0.005\), \(\varepsilon=0.01\), and
          \(\varepsilon=0.1\).
          Thin curves represent individual trajectories, the bold black
          curve is the empirical mean, and the horizontal dashed line
          denotes the terminal target \(X_T^{(1)}=-1\).
      }
      \label{fig:degenerate-brownian-trajectories}
  \end{figure}

\subsection{Conditioned Image Generation}\label{sect:cond}

We next give a proof-of-concept application to image inpainting.  In contrast
to Algorithm~\ref{alg:training-step}, the terminal samples in this experiment
are drawn from an empirical data law rather than from the terminal law of the
reference SDE. Our implementation of this experiment closely follows the one in \cite{baker2024conditioning}.

Degenerate models are natural when the variability relevant to the
conditioning task is intrinsically lower-dimensional \cite{li2026back}. For example, in an
inpainting problem in which only a localized feature is missing, one can use
the accessible directions of the diffusion to represent the variability of
the missing region while conditioning on the observed context.

In our numerical illustration, we train on the Shirt class of
Fashion-MNIST \cite{xiao2017fashion}, condition on the left half of a $28\times28$ image, and
generate its missing right half. The forward model is
\[
dX_t=AS\,dB_t,
\qquad X_0=x_0,
\]
where the deterministic $x_0$ is the empirical training mean, $A$
embeds the accessible latent directions into the image space, and $S$
rescales these directions according to their empirical standard
deviations. The latent coordinates comprise the $392$ whitened visible
pixels and $256$ whitened principal components of the hidden half.  Before
training, each hidden half is projected onto these retained principal
components, so the reconstructed terminal target lies in the affine support
of the model.
Hence, the accessible dimension is $648<784$, and thus every generated path
remains on a fixed affine subspace and no image-space density
or score {function} is required.

Let $\nu_T$ be the empirical law of the projected terminal images.  We first
sample a target $E\sim\nu_T$ and then sample an additive Gaussian bridge from
$x_0$ to $E$.  This defines a bridge-mixture law $\mathbb R^{\nu_T}$; it is
not the reference law of $dX_t=AS\,dB_t$.  Although $\nu_T$ is atomic and
singular at $T$, for every observed condition $y$ and every $r<T$, the
conditional bridge-mixture marginal
$\nu_r^y=\mathcal L_{\mathbb R^{\nu_T}}(X_r\mid Y=y)$ is absolutely
continuous with respect to the reference marginal $\mu_r$. 

The regression expectation in this experiment is taken under
$\mathbb R^{\nu_T}$, not under the reference path law $\mathbb P$.  We use
the finite target
$
\frac{B_{t+\varepsilon}-B_t}{\varepsilon},.
$
This choice does not send $\varepsilon$ to zero.

Since the diffusion is supported on a $648$-dimensional affine subspace of
the $784$-dimensional image space, $X_t$ is singular with respect to the
{Lebesgue measure and there is no available score function}
$\nabla_x\log p_t(x)$.  Our construction requires neither such a score {function} nor a
smooth extension of the conditional likelihood.  Since the left half
of the terminal image is observed, its corresponding control is available
analytically.  A small U-Net therefore learns only the unknown hidden
component.  We train its parameters using AdamW.  Its inputs are the current
decoded image, the observed left half, the inpainting mask, and time, and its
image-shaped output is mapped onto the retained PCA coordinates of the hidden
half.

At generation time, an exponential moving average of the trained
network repeatedly predicts the hidden control. Each Euler--Maruyama
step combines the predicted control with a new Gaussian increment in
the accessible coordinates and decodes the updated coordinates back
into image space. The time grid is refined near the terminal time,
producing multiple right-half completions consistent with the supplied
left half.  Results are shown in Figure~\ref{fig:completion-grids}.

\begin{figure}
    \centering
    \includegraphics[width=1\linewidth]{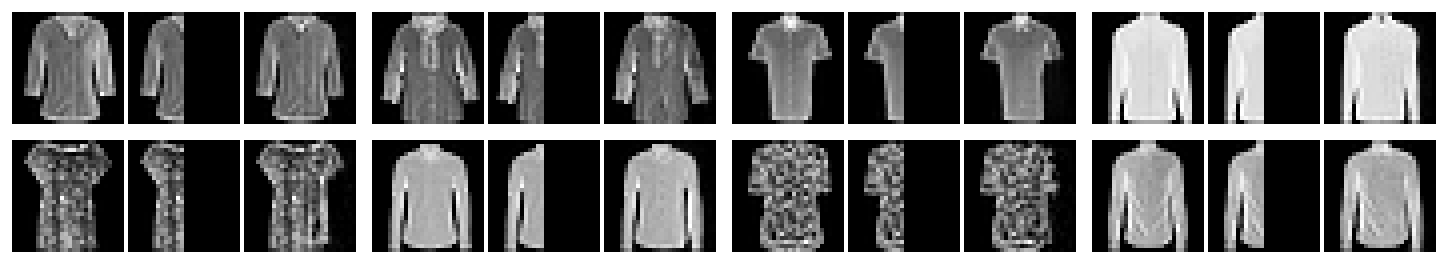}
    \caption{Conditioned image-generation results for eight held-out examples. First column is the ground truth, the second column is the given condition, and the last column is the generated image. The SDE we used for conditioned model is degenerate, cannot be ``time-reverted'' via the methods used in the existing literature and $\nabla_x \log p(Y=y|X_t=x)$, and $\sigma^{-1}$ do not exist.}
    \label{fig:completion-grids}
\end{figure}
\section{Predictable Representation Property and Causal Optimal Transport}\label{sect:6}

The loss functions we derived rely on the predictable representation property of diffusion processes when the corresponding martingale problem is well-posed. To derive those loss functions, we extend the martingale representation result of \cite{ust_mart} to a
random initialization in the conditioned setting.  Related constant-rank results for manifold-valued
diffusions appear in \cite{elworthy2010geometry}.  However, when the rank is not
constant, the associated geometric distributions need not retain the same
smoothness; see \cite[Theorem 10.34]{lee2013smooth} and
\cite[Subsection 9.2]{elworthy2010geometry}.

\begin{theorem}\label{thrm:rep}
Let $(X,B)$ be a weak solution of \eqref{eq:s} on a complete filtered
probability space and $b$, and $\sigma$ have linear growth, where $B$ is a $(\mathcal G_t)$-Brownian motion and
$X_0$ is $\mathcal G_0$-measurable.  Let
$\mu_0=\mathcal L_{\mathbb P}(X_0)$, and suppose that for
$\mu_0$-a.e.\ $x$ \eqref{eq:s} with $X_0=x$ has a weak solution
that is unique in law. 
Then, every $F\in L^2(\mathcal F_T(X),\mathbb P;\mathbb R)$ has a
representation
\begin{equation}\label{eq:rep}
F=\mathbb E^{\mathbb P}[F\mid X_0]
+\int_0^T\langle H_s,dB_s\rangle,
\end{equation}
for a unique
$H\in L_a^2(\mathcal F(X),\mathbb P;\mathbb R^d)$ satisfying
$P_t(X)H_t=H_t$ $dt\otimes d\mathbb P$-a.e.  Conversely, if
$g(X_0)\in L^2(\sigma(X_0),\mathbb P)$ and $H$ has these properties, then
$g(X_0)+\int_0^T\langle H_s,dB_s\rangle$ belongs to
$L^2(\mathcal F_T(X),\mathbb P;\mathbb R)$.
\end{theorem}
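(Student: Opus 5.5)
The plan is to reduce to the deterministic-start case via the regular conditional family $(\mathbb P_x)$ and then invoke the Stroock--Yor/Üstünel result. That result says that for a weakly unique SDE started at a point $x$, every $\mathcal F_T(X)$-measurable $L^2$ functional is $\mathbb E^{\mathbb P_x}[F]$ plus a stochastic integral against $B$ with an $\mathcal F(X)$-predictable integrand. The argument has four steps.

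\emph{Step 1 (fixed start).} Weak uniqueness for $\mu_0$-a.e.\ $x$ implies the martingale problem for the generator $L_t=\langle b,\nabla\rangle+\frac12\operatorname{tr}(\sigma\sigma^\top\nabla^2)$ is well posed from $x$. So $\mathbb P_x\circ X^{-1}$ is an extremal solution. By Jacod--Yor extremality, every $(\mathcal F_t(X),\mathbb P_x)$-martingale is a stochastic integral against the continuous local martingale part
\[
M_t=X_t-X_0-\int_0^t b(s,X_s)\,ds=\int_0^t\sigma(s,X_s)\,dB_s .
\]
Hence, for $F\in L^2$, we get $F=\mathbb E^{\mathbb P_x}[F]+\int_0^T\langle K_s,dM_s\rangle$ with $K$ an $\mathcal F(X)$-predictable $\mathbb R^n$-valued process. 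Set $H_s:=\sigma(s,X_s)^\top K_s$. Then $\int\langle H,dB\rangle=\int\langle K,dM\rangle$, and $H$ is automatically in $\operatorname{Ran}\sigma^\top$, so $P_sH_s=H_s$. Its $L^2$ norm is controlled by the isometry $\mathbb E\int|H|^2=\mathbb E\int K^\top\sigma\sigma^\top K\le \mathbb E F^2<\infty$.

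\emph{Step 2 (gluing over $x$).} Disintegrate $\mathbb P=\int\mathbb P_x\,\mu_0(dx)$. For $\mu_0$-a.e.\ $x$, $F\in L^2(\mathbb P_x)$ and $\mathbb E^{\mathbb P_x}[F]=\mathbb E^{\mathbb P}[F\mid X_0=x]$. Under $\mathbb P_x$ the pair $(X,B)$ is still a weak solution started at $x$, because $B$ is a $(\mathcal G_t)$-Brownian motion independent of $\mathcal G_0\ni X_0$ and conditioning on $X_0$ preserves this. I expect the main obstacle here to be measurability: the integrands $H^x$ must be chosen jointly measurable in $(x,t,\omega)$ so that $H_t(\omega):=H^{X_0(\omega)}_t(\omega)$ is $\mathcal F(X)$-predictable. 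To avoid a measurable-selection argument, I would instead work directly under $\mathbb P$ with the filtration $\mathcal F(X)$, whose $\mathcal F_0$ contains $\sigma(X_0)$. The plan is to show that the martingale problem with initial law $\mu_0$ is well posed (a mixture of well-posed problems is well posed), so the law is extremal among solutions with the same $\mathcal F_0$-trace. Then Jacod's representation relative to $\mathcal F_0(X)$ gives $F=\mathbb E[F\mid\mathcal F_0(X)]+\int\langle K,dM\rangle$ directly. The projection and isometry arguments of Step 1 carry over unchanged, with $\mathbb E[F\mid \mathcal F_0(X)]=\mathbb E[F\mid X_0]$ after augmentation.

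\emph{Step 3 (uniqueness).} Suppose $H$ and $H'$ both represent $F$. Taking $\mathcal F_0$-conditional expectations of the difference gives $\int_0^T\langle H-H',dB\rangle=0$. The Itô isometry then yields $H=H'$ $dt\otimes d\mathbb P$-a.e. Note that uniqueness holds only within the class $P_tH_t=H_t$: in general one would only get $\sigma(H-H')=0$, but here both are already in $\operatorname{Ran}\sigma^\top$, and in any case the isometry on $B$ directly forces equality.

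\emph{Step 4 (converse).} The stochastic integral is $\mathcal F_T(X)$-measurable, since $H$ lies in $\operatorname{Ran}\sigma^\top$ and can therefore be written as $H=\sigma^\top(\sigma\sigma^\top)^\dagger\sigma H$. This gives $\int\langle H,dB\rangle=\int\langle (\sigma\sigma^\top)^\dagger\sigma H,dM\rangle$, an integral against the $X$-measurable martingale $M$, defined as a limit of $\mathcal F(X)$-measurable approximations. It lies in $L^2$ by the isometry, and $g(X_0)\in L^2$ by assumption. The linear growth hypothesis is used only to ensure that $M$ is a genuine square-integrable local martingale with localizing times in $\mathcal F(X)$.
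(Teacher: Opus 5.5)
Your proof is correct in outline, but it takes a different route from the paper.

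\textbf{The paper's route.} It stays fiberwise throughout. Its lemmas show that $B$ remains Brownian under $\mathbb P_x$, and that the SDE, together with a fixed version of each stochastic integral, survives disintegration. It then takes $Z$ orthogonal to every $g(X_0)+\int\langle H,dB\rangle$ with $PH=H$. Using a countable dense family $H^m=PJ^m$ of raw predictable simple processes, it transfers this orthogonality to $\mathbb P_x$ for $\mu_0$-a.e.\ $x$ simultaneously. It then applies \"Ust\"unel's fixed-start theorem on each fiber to get $Z=0$, and finishes with an It\^o-isometry closure argument.

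\textbf{Your route and what it buys.} Your final (global) version of Step~2 disintegrates only at the level of laws, to show that the martingale problem with initial law $\mu_0$ is well posed. It then invokes the Jacod--Yor extremality theorem once, relative to the initial $\sigma$-field. The substitution $H=\sigma^\top K$ gives $P_tH_t=H_t$ for free, and the representation comes out directly with no closure step. No measurable selection and no matching of stochastic-integral versions across $\mathbb P$ and $\mathbb P_x$ is needed, so your fiberwise Step~1 and the gluing discussion can simply be dropped. Your Step~4 also makes explicit the $\mathcal F_T(X)$-measurability of $\int\langle H,dB\rangle=\int\langle(\sigma\sigma^\top)^\dagger\sigma H,dM\rangle$, with $\int K^\top\sigma\sigma^\top K\,ds=\int|H|^2ds$. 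The paper leaves this implicit when it places $\mathcal K$ inside $L^2(\mathcal F_T(X))$.

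\textbf{What it costs.} You rely on heavier black boxes than the paper's fixed-start result: Jacod's theorem with a nontrivial initial $\sigma$-field, the SDE/martingale-problem correspondence, and the fact that conditional laws of a solution given $X_0$ solve the problem from $x$ for a.e.\ $x$.

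\textbf{One point to tighten.} $\mathbb E[F\mid\mathcal F_0(X)]=\mathbb E[F\mid X_0]$ does not follow from ``augmentation'', because the usual augmentation includes the germ field $\mathcal F_{0+}$. It does follow from the same uniqueness. Take a bounded $\mathcal F_0(X)$-measurable $\xi$ with $|\xi|\le 1$ and $\mathbb E[\xi\mid X_0]=0$. Then $(1+\xi/2)\,d\mathbb P$ is again a solution with initial law $\mu_0$, which forces $\xi=0$.

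\textbf{A small correction to Step~3.} The It\^o isometry in $B$ gives $H=H'$ without any appeal to the projection class. Non-uniqueness modulo $\ker\sigma$ affects only $K$, not $H$.
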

\begin{proof}
    See Appendix \ref{app:c} for a complete proof.
\end{proof}

{In particular, predictability and square integrability of the integrand is not sufficient to ensure that $F$ is $\mathcal F_T(X)$ measurable. In our next result, we essentially show that the corresponding minimum entropy Girsanov shifts therefore need to be of the form $\sigma u$ as in \eqref{eq:s2}, which generalizes \cite[Theorem 3]{ust_mart} when the diffusion has random initialization.}

\begin{theorem}\label{thrm}
Under the hypotheses of Theorem~\ref{thrm:rep}, let
$\mathcal H_t:=\sigma(X_0)\vee\mathcal F_t(B)$, with the usual
augmentation, and suppose $F\in L^2(\mathcal H_T,\mathbb P;\mathbb R)$.
Write
\begin{equation}\label{bm:rep}
F=\mathbb E^{\mathbb P}[F\mid X_0]
+\int_0^T\langle\xi_s,dB_s\rangle,
\end{equation}
where
$\xi\in L_a^2((\mathcal H_t)_{t\leq T},\mathbb P;\mathbb R^d)$.
Then,
\[
\mathbb E^{\mathbb P}[F\mid\mathcal F_T(X)]
=\mathbb E^{\mathbb P}[F\mid X_0]
+\int_0^T\langle P_s(X)\mathbb E^{\mathbb P}[\xi_s|\mathcal F_s(X)],dB_s\rangle.
\]
\end{theorem}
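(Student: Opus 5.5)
Set $G:=\mathbb E^{\mathbb P}[F\mid\mathcal F_T(X)]$. Since $X$ is a strong solution driven by $B$ from $X_0$, we have $\mathcal F_t(X)\subseteq\mathcal H_t$, so $G\in L^2(\mathcal F_T(X),\mathbb P)$. Theorem~\ref{thrm:rep} then gives
\[
G=\mathbb E^{\mathbb P}[G\mid X_0]+\int_0^T\langle H_s,dB_s\rangle
\]
for a unique $H\in L_a^2(\mathcal F(X),\mathbb P;\mathbb R^d)$ with $P_s(X)H_s=H_s$. Because $\sigma(X_0)\subseteq\mathcal F_T(X)$, the tower property gives $\mathbb E[G\mid X_0]=\mathbb E[F\mid X_0]$. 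It remains to identify $H_s=P_s(X)\,\mathbb E^{\mathbb P}[\xi_s\mid\mathcal F_s(X)]$ $dt\otimes d\mathbb P$-a.e.

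I would identify $H$ by testing against the representable class. Take an arbitrary $K\in L_a^2(\mathcal F(X),\mathbb P;\mathbb R^d)$ with $P_s(X)K_s=K_s$. By the converse part of Theorem~\ref{thrm:rep}, $J:=\int_0^T\langle K_s,dB_s\rangle$ is $\mathcal F_T(X)$-measurable, and it has conditional mean zero given $X_0$. Hence $\mathbb E[FJ]=\mathbb E[GJ]$. Both $F$ and $G$ are stochastic integrals against the same $(\mathcal G_t)$-Brownian motion; $\xi$ is $\mathcal H$-predictable with $\mathcal H_t\subseteq\mathcal G_t$, and $B$ is also an $(\mathcal H_t)$-Brownian motion. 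The $X_0$-measurable parts are orthogonal to $J$, since $\mathbb E[J\mid\mathcal G_0]=0$. The Itô isometry therefore yields
\[
\mathbb E\int_0^T\langle\xi_s,K_s\rangle\,ds=\mathbb E\int_0^T\langle H_s,K_s\rangle\,ds .
\]
On the left, condition on $\mathcal F_s(X)$ inside the $ds\otimes d\mathbb P$ integral and use the symmetry of $P_s(X)$ together with $K_s=P_s(X)K_s$. This rewrites the left side as $\mathbb E\int_0^T\langle P_s(X)\hat\xi_s,K_s\rangle ds$, where $\hat\xi$ denotes a predictable (optional-projection) version of $\mathbb E[\xi_s\mid\mathcal F_s(X)]$. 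Then $H-P(X)\hat\xi$ lies in the closed subspace $\{K:\,P(X)K=K\}$ and is orthogonal to all of it, so it vanishes. Substituting back gives the claimed formula.

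The main obstacle is measurability in the product space. The map $(s,\omega)\mapsto\mathbb E[\xi_s\mid\mathcal F_s(X)]$ must admit an $\mathcal F(X)$-predictable version lying in $L^2_a$. I would obtain it as the predictable projection of $\xi$ onto the filtration $\mathcal F(X)$, equivalently the orthogonal projection in $L^2(dt\otimes d\mathbb P)$ onto $\mathcal F(X)$-predictable processes, which agrees with the conditional expectation for a.e.\ $s$. Two further checks are needed. First, $P_s(X)$ is Borel in $\sigma(s,X_s)$, so it is $\mathcal F(X)$-predictable by continuity of $X$ (the pseudoinverse is Borel measurable). Second, $P(X)\hat\xi$ stays in $L^2_a$ because $\|P\|\le1$. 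If the strong-solution inclusion $\mathcal F(X)\subseteq\mathcal H$ were unavailable, I would instead require it as a hypothesis, since the hypotheses of Theorem~\ref{thrm:rep} only give weak solutions.
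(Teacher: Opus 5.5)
Your argument is correct and is essentially the paper's proof run in mirror image. The paper builds the candidate $\mathbb E^{\mathbb P}[F\mid X_0]+\int_0^T\langle P_s(X)\widehat\xi_s,dB_s\rangle$, which is $\mathcal F_T(X)$-measurable by the converse of Theorem~\ref{thrm:rep}, and checks the conditional-expectation property against every $Z\in L^2(\mathcal F_T(X))$ represented via Theorem~\ref{thrm:rep}. You instead represent $\mathbb E^{\mathbb P}[F\mid\mathcal F_T(X)]$ first and identify its integrand by testing against the integrals $J$, using the same ingredients: the $L^2$-predictable projection $\widehat\xi$, It\^o's isometry in $(\mathcal G_t)$, and the symmetry of $P_s(X)$. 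The strong-solution inclusion $\mathcal F(X)\subseteq\mathcal H$ is never actually used, since $G\in L^2(\mathcal F_T(X))$ holds automatically for a conditional expectation and the isometry is computed in $(\mathcal G_t)$. You should therefore drop it rather than add it as a hypothesis, which would needlessly exclude weak-only solutions.
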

\begin{proof}
    See Appendix~\ref{app:c} for a complete proof.
\end{proof}

The representation in \eqref{bm:rep} is satisfied for any such $F$. Using this representation, we derive a nonsmooth analogue of the Doob $h$-transform and connect it to causal optimal transport
{\cite{lassalle2012invertibility,lassalle2018causal}. In particular, using Theorem~\ref{thrm}, we present a method to approximate the corresponding minimum entropy shift over all Girsanov shifts. As shown in Corollary~\ref{cor:2}, this naturally generalizes the Doob $h$-transform while preserving the desired properties.}

\begin{theorem}\label{thm:main}
Suppose that Assumptions~\ref{ass:1}--\ref{ass:2} hold.  Let the locally
square-integrable 
integrand $\xi$ be defined by
\[
Z_t(X)=Z_0(X)+\int_0^t\langle\xi_s,dB_s\rangle,
\qquad P_s(X)\xi_s=\xi_s,
\]
which exists and is unique by Assumption~\ref{ass:2} and Theorem~\ref{thrm:rep}. 
Set $u_t=\xi_t/Z_t(X)$ on $\{Z_t(X)>0\}$ and define it arbitrarily on
$\{Z_t(X)=0\}$.  Then, for every $r<T$,
\begin{equation}\label{limit}
\lim_{\varepsilon\downarrow0}
\mathbb E^{\mathbb P^y}\left[
\left.\frac1\varepsilon\int_t^{t+\varepsilon}P_s(X)\,dB_s
\right|X_t\right]
=u_t(X_t,y)
\end{equation}
in $L^2([0,r]\times\Omega,dt\otimes d\mathbb P^y)$.
\end{theorem}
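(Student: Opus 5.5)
The plan is to identify the conditioned semimartingale decomposition of $\int P\,dB$ under $\mathbb P^y$ via Girsanov, and then show that the $\varepsilon$-averaged conditional expectations converge to $u$ by a Lebesgue-differentiation argument in $L^2(dt\otimes d\mathbb P^y)$.

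\emph{Girsanov step.} Fix $r<T$. By Assumption~\ref{ass:2} and Theorem~\ref{thrm:rep}, $Z_t(X)=Z_0(X)+\int_0^t\langle \xi_s,dB_s\rangle$ on $[0,r]$, with $P_s(X)\xi_s=\xi_s$. Under $\mathbb P^y$ we have $Z_t(X)>0$ a.s. for every $t\le r$, so $u=\xi/Z$ is well defined $dt\otimes d\mathbb P^y$-a.e.

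I would localize with the stopping times $\tau_n=\inf\{t:Z_t\le 1/n\ \text{or}\ Z_t\ge n\}\wedge r$. The conditional-density interpretation then gives the following: for each $\mathcal F_t(X)$-measurable bounded $\Phi$, the process
\[
\widetilde B_t=B_t-\int_0^t u_s\,ds
\]
satisfies, by Girsanov/Lenglart,
\[
\mathbb E^{\mathbb P^y}\Bigl[\Phi\Bigl(\int_t^{t+\varepsilon}P_s\,d\widetilde B_s\Bigr)\Bigr]=0 .
\]
This holds because $\int P\,dB$ is an $\mathcal F(X)$-adapted $\mathbb P$-martingale (it equals $\sigma^\dagger(X-X_0-\int b)$), and its cross-variation with $Z$ is $\int P\xi\,ds=\int \xi\,ds$.

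To get the identity with only $\mathcal F(X)$-information, I would apply Girsanov on the filtration $\mathcal F(X)$ itself. This uses that $Z$ is the density on $\mathcal F_t(X)$, and it is exactly why $P_s$, which is $\mathcal F(X)$-measurable, is needed rather than raw $B$. Hence
\[
\mathbb E^{\mathbb P^y}\Bigl[\tfrac1\varepsilon\int_t^{t+\varepsilon}P_s\,dB_s\,\Big|\,\mathcal F_t(X)\Bigr]
=\mathbb E^{\mathbb P^y}\Bigl[\tfrac1\varepsilon\int_t^{t+\varepsilon}u_s\,ds\,\Big|\,\mathcal F_t(X)\Bigr],
\]
where $Pu=u$ has been used. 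By Assumption~\ref{ass:feedback}, the Markov property and $Y=G(X_T)$ mean that $Z_t$, and hence $u_t$, depends on $X_t$ only. The conditional law of the future given $\mathcal F_t(X)$ under $\mathbb P^y$ is then a function of $X_t$, so conditioning on $\mathcal F_t(X)$ may be replaced by conditioning on $X_t$. The second expression in \eqref{eq:control} follows by disintegrating $\mathbb P^y$.

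\emph{Convergence step.} It remains to show that
\[
w_\varepsilon(t):=\mathbb E^{\mathbb P^y}\Bigl[\tfrac1\varepsilon\int_t^{t+\varepsilon}u_s\,ds\,\Big|\,X_t\Bigr]\to u_t(X_t,y)
\]
in $L^2([0,r]\times\Omega)$. This requires $u\in L^2([0,r+\varepsilon_0]\times\Omega,\mathbb P^y)$. I would obtain this from the finite-entropy bound of Remark~\ref{rem:l2-entropy}:
\[
\tfrac12\,\mathbb E^{\mathbb P^y}\int_0^{r'}|u_s|^2\,ds=D_{\mathrm{KL}}\bigl(\mathbb P^y|_{\mathcal F_{r'}(X)}\Vert\mathbb P|_{\mathcal F_{r'}(X)}\bigr)-D_{\mathrm{KL}}\text{ at time }0,
\]
proved with the localization $\tau_n$ and monotone convergence. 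This is the main obstacle, because Assumption~\ref{ass:2} only gives $L^2$ of $Z$, not directly of $u$. The localization must handle the possibility that $Z$ hits zero under $\mathbb P$, which is harmless under $\mathbb P^y$.

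Given $u\in L^2$, I would write
\[
w_\varepsilon-u_t=\mathbb E\Bigl[\tfrac1\varepsilon\int_t^{t+\varepsilon}(u_s-u_t)\,ds\,\Big|\,X_t\Bigr].
\]
Conditional Jensen then gives
\[
\|w_\varepsilon-u\|^2_{L^2}\le \int_0^r \tfrac1\varepsilon\int_0^\varepsilon \mathbb E|u_{t+h}-u_t|^2\,dh\,dt .
\]
This tends to $0$ by continuity of translations in $L^2([0,r']\times\Omega)$: approximate $u$ by continuous simple processes. Uniqueness in $L^2_{a,\mathrm{loc}}$ is immediate from uniqueness of $L^2$ limits.
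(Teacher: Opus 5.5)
Your proposal is correct and is essentially the paper's proof. The common steps are: Girsanov on $\mathcal F(X)$ makes $M:=\int_0^\cdot P_s(X)\,dB_s-\int_0^\cdot u_s\,ds$ a $\mathbb P^y$-martingale; the localized $\log Z$ identity gives $u\in L^2$ on some $[0,R]$ with $R>r$; and the $L^2$ Lebesgue differentiation theorem, plus the contraction property of conditional expectation, finishes, with Assumption~\ref{ass:feedback} supplying $u_t=u_t(X_t,y)$ exactly as in the paper. Three small points: you should add, as the paper does, that $M$ is a true and not merely local martingale because $\operatorname{tr}\langle M\rangle_t\le d\,t$; passing from $\mathcal F_t(X)$ to $X_t$ needs only the tower property, not the Markov property under $\mathbb P^y$; and for $\mathbb E^{\mathbb P^y}\int_0^R|u_s|^2ds<\infty$, monotone convergence with the Jensen bound $\mathbb E^{\mathbb P}[Z_{\tau_n}\log Z_{\tau_n}]\le\mathbb E^{\mathbb P}[Z_R\log Z_R]$ suffices, whereas the entropy equality you state needs the paper's uniform-integrability argument. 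Your other variants are sound: Lenglart's Girsanov replaces the paper's localize-then-remove step, and adaptedness of $\int P\,dB$ via $\int\sigma(s,X_s)^\dagger\,d\bigl(X_s-\int_0^s b\,du\bigr)$ replaces the appeal to the representation theorems.
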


\begin{proof}
    See Appendix~\ref{app:c} for a complete proof.
\end{proof}

{Our next result verifies the minimum-energy property of $u$ without requiring $Z$}
to be strictly positive under $dt\otimes d\mathbb P$, which extends \cite[Theorem 7]{ust_mart} to our setting. In particular, we verify that the limit \eqref{limit} has the minimum entropy property, as desired.

\begin{theorem}\label{thrm:causal}
Suppose that Assumptions~\ref{ass:1}--\ref{ass:2} hold, fix $r<T$, and let
$\xi$ be the local  integrand from
Lemma~\ref{lem:density-martingale}.  Set
$u_t=\xi_t/Z_t(X)$ on $\{Z_t(X)>0\}$ and $u_t=0$ otherwise.  Define
$H_0^y:=D_{\mathrm{KL}}\bigl(
\mathbb P^y|_{\mathcal F_0(X)}\Vert
\mathbb P|_{\mathcal F_0(X)}\bigr)$.  Let $\mathcal A_r^y$ be the class
of pairs $(\mathbb Q,v)$ such that $\mathbb Q\ll\mathbb P$ on
$\mathcal G_r$,
\[
\frac{d(\mathbb Q|_{\mathcal G_0})}
     {d(\mathbb P|_{\mathcal G_0})}=Z_0(X),
\]
$v$ is $(\mathcal G_t)$-predictable,
$B_t^{\mathbb Q}:=B_t-\int_0^t v_s\,ds$ is a
$(\mathcal G_t,\mathbb Q)$-Brownian motion,
$\mathbb E^{\mathbb Q}\int_0^r|v_t|^2dt<\infty$, and
\[
D_{\mathrm{KL}}(\mathbb Q\Vert\mathbb P)
=H_0^y+\frac12\mathbb E^{\mathbb Q}\int_0^r|v_t|^2dt.
\]
{For every $(\mathbb Q,v)\in\mathcal A_r^y$ such that $v$ is $\mathcal F(X)$ adapted and the equation}
\begin{equation}\label{eq:s:v}
dX_t
=
\bigl(b(t,X_t)+\sigma(t,X_t)v_t\bigr)dt
+
\sigma(t,X_t)dB_t^{\mathbb Q}
\end{equation}
{has the law $\mathcal L_{\mathbb P}(X_{[0,r]}|Y=y)$},
\begin{equation}\label{eq:comp}
\mathbb E^{\mathbb Q}\int_0^r |v_t|^2dt
\geq
\mathbb E^{\mathbb P^y}\int_0^r
|u_t|^2dt.
\end{equation}
{The pair $(\mathbb Q^{y,r},u)$, defined by}
$d(\mathbb Q^{y,r}|_{\mathcal G_r})/
d(\mathbb P|_{\mathcal G_r})=Z_r(X)$, belongs to
$\mathcal A_r^y$ and attains equality.  Equality holds for \eqref{eq:comp} if,
and only if, $\mathbb Q=\mathbb Q^{y,r}$ and
{$(v_t)_t=(u_t)_t$ $dt\otimes d\mathbb Q^{y,r}$-a.e.}
\end{theorem}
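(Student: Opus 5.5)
The plan is to reduce the energy comparison to a relative-entropy comparison on $\mathcal F_r(X)$ and then invoke the data-processing inequality. Equality will come from the fact that $\mathbb Q^{y,r}$ is the unique minimizer of entropy among measures with the prescribed state law.

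\textbf{Step 1: $(\mathbb Q^{y,r},u)\in\mathcal A_r^y$.} Since $Z$ is a $\mathbb P$-martingale with $Z_t=Z_0+\int_0^t\langle\xi_s,dB_s\rangle$ (Lemma~\ref{lem:density-martingale}), the process $Z_t=Z_0+\int_0^t Z_s\langle u_s,dB_s\rangle$ solves the stochastic exponential equation. This uses $\xi=0$ a.e.\ on $\{Z=0\}$, because a nonnegative martingale stays at zero after hitting it. Girsanov's theorem under the measure $Z_r\,d\mathbb P$, which needs no positivity, shows that $B-\int_0^\cdot u_s\,ds$ is a $\mathbb Q^{y,r}$-Brownian motion. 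Hence $X$ solves \eqref{eq:s:v} with $v=u$, and since $Z_r$ is the density on $\mathcal F_r(X)$, the law is $\mathcal L_{\mathbb P}(X_{[0,r]}\mid Y=y)$.

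For the entropy identity I localize with $\tau_n=\inf\{t:Z_t\le 1/n\text{ or }\int_0^t|u_s|^2ds\ge n\}\wedge r$. By It\^o's formula,
\[
\log Z_{\tau_n}=\log Z_0+\int_0^{\tau_n}\langle u,dB^{\mathbb Q}\rangle+\tfrac12\int_0^{\tau_n}|u|^2dt.
\]
Taking $\mathbb Q^{y,r}$-expectations and letting $n\to\infty$ should give the identity with finite energy. Finiteness comes from $\mathbb E[Z_r\log Z_r]<\infty$ (Remark~\ref{rem:l2-entropy}). Under $\mathbb Q^{y,r}$ one has $Z>0$ a.s., so $\tau_n\uparrow r$ $\mathbb Q$-a.s. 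The limit uses monotone convergence on the energy term and uniform integrability of $Z_{\tau_n}\log Z_{\tau_n}$, which follows from the $L^2$ bound via Doob's inequality. This localization is where the strict positivity of \cite{ust_mart} is removed, and I expect it to be the main technical obstacle.

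\textbf{Step 2: the inequality.} Fix an admissible $(\mathbb Q,v)$ with $v$ being $\mathcal F(X)$-adapted and with the required state law. Then $\mathbb Q|_{\mathcal F_r(X)}=\mathbb P^y|_{\mathcal F_r(X)}$, since the density on $\mathcal F_r(X)$ is $Z_r$. By the data-processing (conditioning) inequality for relative entropy,
\[
H_0^y+\tfrac12\mathbb E^{\mathbb Q}\int_0^r|v|^2dt=D_{\mathrm{KL}}(\mathbb Q\Vert\mathbb P)\ge D_{\mathrm{KL}}(\mathbb Q|_{\mathcal F_r(X)}\Vert\mathbb P|_{\mathcal F_r(X)})=\mathbb E^{\mathbb P}[Z_r\log Z_r].
\]
By Step~1 the right-hand side equals $H_0^y+\frac12\mathbb E^{\mathbb P^y}\int_0^r|u|^2dt$. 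Here $\mathbb Q^{y,r}$ and $\mathbb P^y$ agree on $\mathcal F_r(X)$ and $u$ is $\mathcal F(X)$-adapted, so the energy of $u$ is the same under either measure. This proves \eqref{eq:comp}.

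\textbf{Step 3: the equality case.} Equality in data processing holds iff $d\mathbb Q/d\mathbb P$ restricted to $\mathcal G_r$ is $\mathcal F_r(X)$-measurable, hence equals $Z_r$, i.e.\ $\mathbb Q=\mathbb Q^{y,r}$. Both $B-\int v$ and $B-\int u$ are then $\mathbb Q^{y,r}$-Brownian motions. Their difference $\int_0^\cdot(u-v)\,ds$ is therefore a continuous finite-variation martingale, so it vanishes, which gives $v=u$ $dt\otimes d\mathbb Q^{y,r}$-a.e. The converse direction is immediate. An alternative route to uniqueness is to project $v$ via Theorem~\ref{thrm}, using that $P_s(X)\mathbb E[v_s\mid\mathcal F_s(X)]$ must represent the same density. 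I would keep that as a cross-check, since the entropy argument already suffices.
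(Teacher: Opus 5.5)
Your proposal is correct and follows essentially the same route as the paper. Both arguments build $\mathbb Q^{y,r}$ from the density $Z_r(X)$ and obtain the identity $D_{\mathrm{KL}}(\mathbb P^y|_{\mathcal F_r(X)}\Vert\mathbb P|_{\mathcal F_r(X)})=H_0^y+\frac12\mathbb E^{\mathbb P^y}\int_0^r|u_t|^2dt$ by localization and uniform integrability, then compare competitors through the data-processing inequality, and finally derive uniqueness from its equality case ($d\mathbb Q/d\mathbb P$ is $\mathcal F_r(X)$-measurable, hence $\mathbb Q=\mathbb Q^{y,r}$) together with uniqueness of the continuous semimartingale decomposition. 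The differences are only technical: you use the absolutely continuous (Lenglart) form of Girsanov directly, put an energy cap in $\tau_n$, and get uniform integrability from Doob's maximal inequality, whereas the paper localizes on $Z\in(1/n,n)$ and uses conditional Jensen.
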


\begin{proof}
    See Appendix~\ref{app:c} for a complete proof.
\end{proof}

\begin{remark}
{The preceding theorem compares Girsanov shifts on a fixed probability space and therefore}
does not require uniqueness of the controlled equation.  Weak uniqueness of
{\eqref{eq:s2} is needed only to transfer the law $\mathcal L_{\mathbb Q^{y,r}}(X)$ to every
independently constructed controlled solution through Girsanov's theorem; see}
Remark~\ref{rem:controlled-uniqueness}.  Assumption~\ref{ass:feedback} is
{not needed to compare the relative entropy of these transformations; it is imposed in the regression
results only to express the Girsanov shift as a function of $(t,X_t,y)$ under the probability measure $\mathbb P^y$.}
\end{remark}

\section{Conclusion}

In this work, we have introduced regression losses for computing minimum-entropy
adapted shifts when the diffusion is degenerate and the conditional law may
be nonsmooth or singular.  The construction uses the predictable representation property of the diffusion and does not require a score function or an invertible diffusion coefficient. Under mild conditions, we have demonstrated that the obtained controls satisfy the minimum-entropy in a class of ``admissible controls''.

Extensions to manifold-valued and infinite-dimensional
diffusions are natural directions for future work.

\section{Impact Statement}

This work has connections to diffusion models, so it shares the societal and ethical implications of generative modelling.

\bibliographystyle{iclr2026_conference}
\bibliography{references}

@article {ust_mart,
    AUTHOR = {\"Ust\"unel, A. S.},
     TITLE = {Martingale representation for degenerate diffusions},
   JOURNAL = {J. Funct. Anal.},
  FJOURNAL = {Journal of Functional Analysis},
    VOLUME = {276},
      YEAR = {2019},
    NUMBER = {11},
     PAGES = {3468--3483},
}

@article{lassalle2018causal,
  title={Causal transport plans and their {M}onge--{K}antorovich problems},
  author={Lassalle, R{\'e}mi},
  journal={Stochastic Analysis and Applications},
  volume={36},
  number={3},
  pages={452--484},
  year={2018},
  publisher={Taylor \& Francis}
}

@article{lassalle2012invertibility,
  title={Invertibility of adapted perturbations of the identity on abstract {W}iener space},
  author={Lassalle, R{\'e}mi},
  journal={Journal of Functional Analysis},
  volume={262},
  number={6},
  pages={2734--2776},
  year={2012},
  publisher={Elsevier}
}

@article{bolhuis2002transition,
  title={Transition path sampling: Throwing ropes over rough mountain passes, in the dark},
  author={Bolhuis, Peter G and Chandler, David and Dellago, Christoph and Geissler, Phillip L},
  journal={Annual {R}eview of {P}hysical {C}hemistry},
  volume={53},
  number={1},
  pages={291--318},
  year={2002},
  publisher={Annual Reviews 4139 El Camino Way, PO Box 10139, Palo Alto, CA 94303-0139, USA}
}

@article{wang2011quantifying,
  title={Quantifying the {W}addington landscape and biological paths for development and differentiation},
  author={Wang, Jin and Zhang, Kun and Xu, Li and Wang, Erkang},
  journal={Proceedings of the National Academy of Sciences},
  volume={108},
  number={20},
  pages={8257--8262},
  year={2011},
  publisher={National Academy of Sciences}
}

@article{elerian2001likelihood,
  title={Likelihood inference for discretely observed nonlinear diffusions},
  author={Elerian, Ola and Chib, Siddhartha and Shephard, Neil},
  journal={Econometrica},
  volume={69},
  number={4},
  pages={959--993},
  year={2001},
  publisher={Wiley Online Library}
}

@inproceedings{zhang2023adding,
  title={Adding conditional control to text-to-image diffusion models},
  author={Zhang, Lvmin and Rao, Anyi and Agrawala, Maneesh},
  booktitle={2023 IEEE/CVF International Conference on Computer Vision (ICCV)},
  pages={3813--3824},
  year={2023},
  organization={IEEE}
}

@inproceedings{tzen2019theoretical,
  title={Theoretical guarantees for sampling and inference in generative models with latent diffusions},
  author={Tzen, Belinda and Raginsky, Maxim},
  booktitle={Conference on Learning Theory},
  pages={3084--3114},
  year={2019},
  organization={PMLR}
}

@inproceedings{
pidstrigach2025conditioning,
title={Conditioning Diffusions Using {M}alliavin Calculus},
author={Jakiw Pidstrigach and Elizabeth Louise Baker and Carles Domingo-Enrich and George Deligiannidis and Nikolas N{\"u}sken},
booktitle={Forty-second International Conference on Machine Learning},
year={2025},
}

@article{mirafzali2025malliavin,
  title={Malliavin calculus for score-based diffusion models},
  author={Mirafzali, Ehsan and Gupta, Utkarsh and Wyrod, Patrick and Proske, Frank and Venturi, Daniele and Marinescu, Razvan},
  journal={arXiv preprint arXiv:2503.16917},
  year={2025}
}

@inproceedings{blessing2503underdamped,
 author = {Blessing, Denis and Berner, Julius and Richter, Lorenz and Neumann, Gerhard},
 booktitle = {International Conference on Learning Representations},
 editor = {Y. Yue and A. Garg and N. Peng and F. Sha and R. Yu},
 pages = {2970--3002},
 title = {Underdamped Diffusion Bridges with Applications to Sampling},
 volume = {2025},
 year = {2025}
}

@article{conforti2025kl,
  title={{KL} convergence guarantees for score diffusion models under minimal data assumptions},
  author={Conforti, Giovanni and Durmus, Alain and Silveri, Marta Gentiloni},
  journal={SIAM Journal on Mathematics of Data Science},
  volume={7},
  number={1},
  pages={86--109},
  year={2025},
  publisher={SIAM}
}

@article {fabrice,
    AUTHOR = {Baudoin, Fabrice},
     TITLE = {Conditioned stochastic differential equations: theory,
              examples and application to finance},
   JOURNAL = {Stochastic Process. Appl.},
  FJOURNAL = {Stochastic Processes and their Applications},
    VOLUME = {100},
      YEAR = {2002},
     PAGES = {109--145},
}

@article{boue1998variational,
  title={A variational representation for certain functionals of {B}rownian motion},
  author={Bou{\'e}, Michelle and Dupuis, Paul},
  journal={The Annals of Probability},
  volume={26},
  number={4},
  pages={1641--1659},
  year={1998},
  publisher={Institute of Mathematical Statistics}
}

@article{dockhorn2021score,
  title={Score-based generative modeling with critically-damped {L}angevin diffusion},
  author={Dockhorn, Tim and Vahdat, Arash and Kreis, Karsten},
  journal={arXiv preprint arXiv:2112.07068},
  year={2021}
}

@article{debortoli2022convergence,
title={Convergence of denoising diffusion models under the manifold hypothesis},
author={De Bortoli, Valentin },
journal={Transactions on Machine Learning Research},
issn={2835-8856},
year={2022},
}

@inproceedings{de2022riemannian,
  title={Riemannian score-based generative modelling},
  author={De Bortoli, Valentin and Mathieu, Emile and Hutchinson, Michael and Thornton, James and Teh, Yee Whye and Doucet, Arnaud},
  booktitle={Advances in {N}eural {I}nformation {P}rocessing {S}ystems},
  volume={35},
  pages={2406--2422},
  year={2022}
}

@article{stroock1980extremal,
  title={On extremal solutions of martingale problems},
  author={Stroock, Daniel W and Yor, Marc},
  journal={Annales {S}cientifiques de l'{\'E}cole Normale Sup{\'e}rieure},
  volume={13},
  number={1},
  pages={95--164},
  year={1980}
}

@book{elworthy2010geometry,
  title={The {G}eometry of {F}iltering},
  author={Elworthy, K David and Le Jan, Yves and Li, Xue-Mei},
  year={2010},
  publisher={Springer Science \& Business Media}
}

@InProceedings{allenzhu2019convergence,
  title     = {A Convergence Theory for Deep Learning via Over-Parameterization},
  author    = {Allen-Zhu, Zeyuan and Li, Yuanzhi and Song, Zhao},
  booktitle = {Proceedings of the 36th International Conference on Machine Learning},
  pages     = {242--252},
  year      = {2019},
  volume    = {97},
  series    = {Proceedings of Machine Learning Research}
}

@article{xiao2017fashion,
  title={Fashion-{MNIST}: {A} novel image dataset for benchmarking machine learning algorithms},
  author={Xiao, Han and Rasul, Kashif and Vollgraf, Roland},
  journal={arXiv preprint arXiv:1708.07747},
  year={2017}
}

@book{lee2013smooth,
  title={Smooth {M}anifolds},
  author={Lee, John M},
  booktitle={Introduction to Smooth Manifolds},
  pages={1--31},
  year={2013},
  publisher={Springer}
}

@book{ikeda2014stochastic,
  title={Stochastic Differential Equations and Diffusion Processes},
  author={Ikeda, Nobuyuki and Watanabe, Shinzo},
  volume={24},
  year={2014},
  publisher={Elsevier}
}

@article{amendinger2000martingale,
  title={Martingale representation theorems for initially enlarged filtrations},
  author={Amendinger, J{\"u}rgen},
  journal={Stochastic Processes and Their Applications},
  volume={89},
  number={1},
  pages={101--116},
  year={2000},
  publisher={Elsevier}
}

@article{EJP431,
	author = {Thomas Kurtz},
	title = {The {Y}amada-{W}atanabe-{E}ngelbert theorem for general stochastic equations and inequalities},
	journal = {Electron. J. Probab.},
	fjournal = {Electronic Journal of Probability},
	volume = {12},
	year = {2007},
}

@book{revuz2013continuous,
  title={Continuous Martingales and Brownian Motion},
  author={Revuz, Daniel and Yor, Marc},
  year={2013},
  publisher={Springer Science \& Business Media}
}

@book {horn-MA,
    AUTHOR = {Horn, Roger A. and Johnson, Charles R.},
     TITLE = {Matrix {A}nalysis},
   EDITION = {Second},
 PUBLISHER = {Cambridge University Press, Cambridge},
      YEAR = {2013},
     PAGES = {xviii+643},
}

@inproceedings{baker2024conditioning,
  title={Conditioning non-linear and infinite-dimensional diffusion processes},
  author={Baker, Elizabeth L and Yang, Gefan and Severinsen, Michael L and Hipsley, Christy A and Sommer, Stefan},
  booktitle={Advances in Neural Information Processing Systems},
  volume={37},
  pages={10801--10826},
  year={2024}
}

@book{kuo2006gaussian,
  title={Gaussian {M}easures in {B}anach {S}paces},
  author={Kuo, Hui-Hsiung},
  pages={1--109},
  year={2006},
  publisher={Springer}
}

@book {MR1214374,
    AUTHOR = {Kloeden, Peter E. and Platen, Eckhard},
     TITLE = {Numerical {S}olution of {S}tochastic {D}ifferential {E}quations},
    SERIES = {Applications of Mathematics (New York)},
    VOLUME = {23},
 PUBLISHER = {Springer-Verlag, Berlin},
      YEAR = {1992},
     PAGES = {xxxvi+632},
}

@inproceedings{li2026back,
  title={Back to basics: Let denoising generative models denoise},
  author={Li, Tianhong and He, Kaiming},
  booktitle={Proceedings of the IEEE/CVF Conference on Computer Vision and Pattern Recognition},
  pages={36115--36125},
  year={2026}
}

@inproceedings{domingo2025adjoint,
  title={Adjoint matching: Fine-tuning flow and diffusion generative models with memoryless stochastic optimal control},
  author={Domingo i Enrich, Carles and Drozdzal, Michal and Karrer, Brian and Chen, Ricky TQ},
  booktitle={International Conference on Learning Representations},
  volume={2025},
  pages={53791--53846},
  year={2025}
}

@article {MR2280298,
    AUTHOR = {Nikeghbali, Ashkan},
     TITLE = {An essay on the general theory of stochastic processes},
   JOURNAL = {Probab. Surv.},
  FJOURNAL = {Probability Surveys},
    VOLUME = {3},
      YEAR = {2006},
     PAGES = {345--412},
}

@article{hairer2016advanced,
  title={Advanced stochastic analysis},
  author={Hairer, Martin},
  journal={Lecture Notes, https://www. hairer. org},
  year={2026}
}

\appendix

\section{A Primer on Stochastic Analysis}

We recall the notions of weak and strong well-posedness used in the paper.

\begin{definition}[Brownian motion]
Let $(\Omega,\mathcal F,(\mathcal G_t)_t,\mathbb P)$ be a complete
filtered probability space.  A $d$-dimensional Brownian motion is a
$(\mathcal G_t)$-adapted process $B$ with continuous paths such that

\begin{enumerate}
    \item $B_0=0$ $\mathbb P$-a.s.;
    \item for every $0\leq s<t$, the increment $B_t-B_s$ is independent of $\mathcal G_s$;
    \item for every $0\leq s<t$,
    $
    B_t-B_s\sim\mathcal N(0,(t-s)I_d).
    $
\end{enumerate}
\end{definition}

\begin{definition}[Weak solution]
A weak solution of \eqref{eq:s} consists of a filtered probability space
$(\Omega,\mathcal F,(\mathcal G_t)_t,\mathbb P)$, a
$(\mathcal G_t)$-Brownian motion $B$, and a continuous adapted process $X$
with the prescribed initial law such that
\[
X_t=X_0+\int_0^t b(s,X_s)\,ds
       +\int_0^t\sigma(s,X_s)\,dB_s
\quad dt\otimes \mathbb P\text{-a.s.}
\]
\end{definition}

\begin{remark}
In a weak solution, $X$ need not be adapted to the filtration generated by
$B$ alone; the filtration {generated by $B$} may contain additional randomness.
\end{remark}

\begin{definition}[Weak uniqueness]
\Eqref{eq:s} has \emph{weak uniqueness}, or uniqueness in law, if
any two weak solutions with the same initial law induce the same law for the
state process.  When needed, joint uniqueness refers to the law of $(X,B)$.
\end{definition}

\begin{definition}[Strong solution]
A solution is \emph{strong} if $X$ is adapted to the usual augmentation of
$\sigma(X_0)\vee\mathcal F_t(B)$; equivalently, it is constructed from the
given initial variable and driving Brownian motion without additional
randomness.
\end{definition}

\begin{definition}[Pathwise uniqueness]
\Eqref{eq:s} has \emph{pathwise uniqueness} if any two solutions on
the same filtered probability space, driven by the same Brownian motion and
with the same initial variable, are indistinguishable.
\end{definition}

\begin{theorem}[Yamada-Watanabe]
A weak solution exists and pathwise uniqueness holds if, and only if, a strong
solution exists and uniqueness in law holds.
\end{theorem}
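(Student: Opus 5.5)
The statement is the Yamada--Watanabe theorem, and I would prove the two implications separately. I would work throughout on the canonical space $\mathbb R^n\times C([0,T];\mathbb R^d)\times C([0,T];\mathbb R^n)$ carrying $(x_0,w,x)$. The one structural fact I need is that every weak solution $(X,B)$ has a joint law that disintegrates as
\[
\mu_0(dx_0)\,W(dw)\,Q(x_0,w;dx),
\]
where $W$ is Wiener measure. This disintegration holds because $X_0$ is $\mathcal G_0$-measurable and $B$ is a $(\mathcal G_t)$-Brownian motion, hence independent of $X_0$.

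\textbf{Weak existence and pathwise uniqueness $\Rightarrow$ strong existence and uniqueness in law.}
\begin{enumerate}
\item Take two weak solutions $(X^1,B^1)$ and $(X^2,B^2)$ with the same initial law $\mu_0$, and let $Q^1,Q^2$ be their kernels.
\item Define a measure on $\mathbb R^n\times C\times C\times C$ by
\[
\pi(dx_0,dw,dx^1,dx^2)=\mu_0(dx_0)\,W(dw)\,Q^1(x_0,w;dx^1)\,Q^2(x_0,w;dx^2).
\]
\item Show that $w$ is a Brownian motion for the augmented filtration generated by $(x_0,w_{\le t},x^1_{\le t},x^2_{\le t})$. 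The key lemma is that, for $A\in\mathcal F_t^0(x)$, the map $w\mapsto Q^i(x_0,w;A)$ is measurable with respect to $\sigma(x_0)\vee\mathcal F_t(w)$ up to null sets. This follows because the future increments $w_{t+\cdot}-w_t$ are independent of $(X^i_0,X^i_{\le t},B^i_{\le t})$ under each original solution.
\item Since the stochastic integral is preserved under equality in law of (integrand, integrator), both $x^1$ and $x^2$ solve \eqref{eq:s} on this space, driven by the same $w$ and with the same initial value $x_0$.
\item Pathwise uniqueness then gives $x^1=x^2$ $\pi$-a.s. Hence $Q^1(x_0,w;\cdot)\otimes Q^2(x_0,w;\cdot)$ is carried by the diagonal for $\mu_0\otimes W$-a.e.\ $(x_0,w)$.
\item A product measure carried by the diagonal forces both factors to be the same Dirac mass $\delta_{F(x_0,w)}$. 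Measurability of $F$ with respect to the augmented $\sigma(x_0)\vee\mathcal F_t(w)$ follows from step 3.
\item Consequently $X^i=F(X^i_0,B^i)$ a.s. This gives a strong solution, and $\mathcal L(X^1)=\mathcal L(X^2)$ gives uniqueness in law.
\end{enumerate}

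\textbf{Strong existence and uniqueness in law $\Rightarrow$ weak existence and pathwise uniqueness.} Weak existence is immediate from strong existence. For pathwise uniqueness, let $X,X'$ be two solutions on one space with the same $B$ and the same $X_0$, and let $F$ be the strong-solution functional. It suffices to prove \emph{joint} uniqueness in law of $(X_0,X,B)$. Indeed, $(F(X_0,B),B)$ is itself a solution, so joint uniqueness gives
\[
\mathcal L(X_0,X,B)=\mathcal L\bigl(X_0,F(X_0,B),B\bigr),
\]
whence $X=F(X_0,B)$ a.s. The same holds for $X'$, so $X=X'$ a.s.

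\textbf{Main obstacle: joint uniqueness from uniqueness in law of $X$ alone (Cherny's argument).} The difficulty is that $B$ is not a function of $X$ when $\sigma$ is degenerate. My first attempt is to disintegrate the law of $(X,B)$ over $X$, glue two solutions along a common $X$ by the same product-kernel construction as above, and then show that the conditional law of $B$ given $X$ is determined. For the last point I would use the decomposition
\[
B=\int_0^\cdot \sigma^\dagger\,(dX-b\,dt)+\int_0^\cdot \bigl(I-\Pi(\sigma)\bigr)\,dB .
\]
The first term is a functional of $X$. The second term is a continuous local martingale whose quadratic variation, and whose covariation with the first term, are functionals of $X$. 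By a L\'evy-type characterization, its conditional law given $X$ should therefore be forced, and it is here that strong existence is used to identify that law. Checking the measurability and martingale property after this gluing is the delicate step.
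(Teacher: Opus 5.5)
The paper gives no proof of this statement. It recalls it as the classical Yamada--Watanabe theorem and points to \cite{EJP431} for extensions. Your forward direction is the standard gluing argument and is fine in outline. Your reduction of the converse to joint uniqueness in law, followed by comparison with $(F(X_0,B),B)$, is also correct.

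The gap is the step you flag as the main obstacle: the L\'evy-type argument you propose there does not work. Write $M=\int_0^\cdot\sigma^\dagger(dX-b\,dt)$ and $N=\int_0^\cdot(I-\Pi(\sigma(s,X_s)))\,dB_s$ for your two terms. $N$ is a local martingale for $(\mathcal G_t)$. A conditional L\'evy argument, however, would need $N$ to be a local martingale for the initially enlarged filtration $\mathcal F_T(X)\vee\mathcal F_t(N)$, and that is exactly what is at stake. Knowing $\langle N\rangle$ and $\langle N,M\rangle$ as functionals of $X$ does not prevent increments of $N$ from being correlated with the future of $X$.

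For instance, take $dX_t=|X_t|^{\alpha}dB^1_t$ with $\alpha<1/2$, $X_0=0$, $d=2$. One can build a solution that sits at $0$ up to time $1$ and afterwards follows a nontrivial solution only on $\{B^2_1>0\}$. Every property you list for $N$ holds, yet $B^2$ (a component of $N$) is not conditionally Gaussian given $X$. So uniqueness in law has to enter this step in an essential way, and your sketch never uses it there. Attributing the identification of the law to strong existence is also misplaced. Joint uniqueness follows from uniqueness in law alone (Cherny), and strong existence is needed only in the final comparison you already carried out.

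The missing idea is a change of measure in the kernel directions. For bounded deterministic $g$, let $d\mathbb Q=\mathcal E\bigl(\int_0^\cdot\langle g_s,dN_s\rangle\bigr)_T\,d\mathbb P$. By Girsanov, $B-\int_0^\cdot(I-\Pi(\sigma(s,X_s)))g_s\,ds$ is a $(\mathcal G_t,\mathbb Q)$-Brownian motion. Since $\sigma(I-\Pi(\sigma))=0$ and $\mathbb Q=\mathbb P$ on $\mathcal G_0$, $X$ solves the same equation with the same initial law under $\mathbb Q$. Uniqueness in law then gives $\mathcal L_{\mathbb Q}(X)=\mathcal L_{\mathbb P}(X)$, that is, $\mathbb E^{\mathbb P}\bigl[\exp\bigl(\int_0^T\langle g_s,dN_s\rangle\bigr)\bigm|\mathcal F_T(X)\bigr]=\exp\bigl(\tfrac12\int_0^T\langle g_s,(I-\Pi(\sigma(s,X_s)))g_s\rangle\,ds\bigr)$.

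Running $g$ over a countable family of step functions shows that, given $X$, $N$ is a centered Gaussian process with covariance $\int(I-\Pi(\sigma))\,ds$. Hence the law of $(X,B)=(X,M+N)$ is determined by $\mathcal L(X)$. Alternatively, a predictable representation property for $\mathcal F(X)$ such as Theorem~\ref{thrm:rep} (itself a consequence of uniqueness in law) gives the same conclusion, because $\langle M,N\rangle=0$. Either route yields joint uniqueness directly. The gluing along $X$ then becomes unnecessary, and on its own it would need this very property to keep each $B^i$ Brownian in the glued filtration.
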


The classical statement is for deterministic initial conditions; extensions
to more general stochastic equations are discussed in \cite{EJP431} and the
references therein.

In general, weak uniqueness need not imply pathwise uniqueness or strong existence.  A
standard example is Tanaka's equation
\[
dX_t = \mathrm{sign}(X_t)dB_t.
\]
Another example is Tsirelson's equation, which instead has a nonanticipative path-dependent drift,
and has the following form
\[
dX_t=b(t,X_{[0,t]})\,dt+dB_t.
\]
Our martingale representation theorem requires weak existence and uniqueness in law for almost every deterministic initial state $x$ under the conditioned measure $\mathbb P(\cdot | X_0=x)$; thus, it is applicable for the SDEs above although they do not admit a strong solution.

\begin{definition}
    Let $(\mathcal F_t^0)_{t\geq 0}$ be a filtration and let
$\mathcal N^{\mathbb P}$ denote the collection of all subsets of
$\mathbb P$-null sets in $\mathcal F_\infty^0$. The
\emph{$\mathbb P$-usual augmentation} of $(\mathcal F_t^0)_{t\geq 0}$ is the
filtration $(\mathcal F_t^{\mathbb P})_{t\geq 0}$ defined by
\[
\mathcal F_t^{\mathbb P}
:=
\bigcap_{s>t}
\left(
\mathcal F_s^0\vee\mathcal N^{\mathbb P}
\right),
\qquad t\geq 0.
\]
It is the smallest right-continuous and $\mathbb P$-complete
filtration containing $(\mathcal F_t^0)_{t\geq 0}$.
\end{definition}

{\begin{definition}[Girsanov shift]
Let $(\Omega,\mathcal F,(\mathcal F_t)_{t\in[0,T]},\mathbb P)$ support a
$d$-dimensional Brownian motion $B$, and let $\mathbb Q\ll\mathbb P$ on
$\mathcal F_T$ with density process
\[
Z_t
:=
\mathbb E^{\mathbb P}
\left[
\left.
\frac{d\mathbb Q}{d\mathbb P}
\right|
\mathcal F_t
\right].
\]
An $(\mathcal F_t)$-progressively measurable process
$u=(u_t)_{t\in[0,T]}$ is called a \emph{Girsanov shift} of $\mathbb Q$
relative to $\mathbb P$ if
\[
\int_0^T |u_t|^2\,dt<\infty
\qquad
\mathbb Q\text{-a.s.},
\text{ and }
B_t^{\mathbb Q}
:=
B_t-\int_0^t u_s\,ds,
\qquad 0\leq t\leq T,
\]
is a Brownian motion under $\mathbb Q$.
\end{definition}
}
\section{A Discussion of the Tweedie Formulae}

We first recall the Kunita--Watanabe decomposition, which explains the
potential orthogonal martingale component when the diffusion is degenerate.

\begin{theorem}[Kunita--Watanabe decomposition] 
Let \((\Omega,\mathcal F,(\mathcal F_t)_{t\ge0},\mathbb P)\) satisfy the
usual conditions.  Let \(M=(M^1,\dots,M^d)\) be a continuous
square-integrable martingale and let \(N\) be a square-integrable martingale.
Then, there exist a predictable \(M\)-integrable process \(H\) and a
square-integrable martingale \(L\) such that
\[N_t=N_0+\int_0^t\langle H_s,dM_s\rangle+L_t,\]
with
\[\langle L,M^i\rangle_t=0,
\qquad i=1,\dots,d,\quad t\ge0.\]

The decomposition is unique up to the usual
$d\langle M\rangle\otimes d\mathbb P$ equivalence for $H$ and
indistinguishability for $L$, where $\langle A,B\rangle_t$ denotes the quadratic variation between $A$ and $B$.
\end{theorem}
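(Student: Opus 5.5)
The statement immediately above is the Kunita--Watanabe decomposition. I would prove it by orthogonal projection in the Hilbert space $\mathcal M^2$ of square-integrable martingales.

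\textbf{Setup.} Let $\mathcal M^2_0$ denote the square-integrable martingales on $[0,T]$ started at $0$, with inner product $(N,N')\mapsto\mathbb E[N_TN'_T]$. By Doob's $L^2$ inequality and the usual conditions, $\mathcal M^2_0$ is a Hilbert space. It is isometric to the closed subspace of $L^2(\mathcal F_T)$ of centered terminal values. For a horizon $T=\infty$, I would work with $\mathcal M^2$ bounded in $L^2$ and use terminal values $N_\infty$; the argument is identical.

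\textbf{Step 1: the stable subspace.} Consider the space $\mathcal I$ of stochastic integrals $\int_0^\cdot\langle H_s,dM_s\rangle$, with $H$ predictable and
\[
\mathbb E\int_0^T H_s^\top\,d\langle M\rangle_s\,H_s<\infty .
\]
The It\^o isometry identifies $\mathcal I$ isometrically with $L^2(d\langle M\rangle\otimes d\mathbb P)$. Here I mean the matrix-valued measure, which I handle by writing $d\langle M\rangle_s=c_s\,dA_s$ with $A=\sum_i\langle M^i\rangle$ and $c$ a nonnegative-definite predictable density. The weighted $L^2$ space is complete, modulo the kernel of $c$, which gives the stated uniqueness class for $H$. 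Hence $\mathcal I$ is a closed subspace of $\mathcal M^2_0$.

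\textbf{Step 2: projection.} Project $N-N_0$ orthogonally onto $\mathcal I$. The projection is $\int\langle H,dM\rangle$, and the remainder is $L:=N-N_0-\int\langle H,dM\rangle$. Orthogonality gives $\mathbb E[L_T K_T]=0$ for every $K\in\mathcal I$.

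\textbf{Step 3: upgrading to strong orthogonality.} This is the main obstacle: I need $\langle L,M^i\rangle=0$, not just $\mathbb E[L_TK_T]=0$ for $K\in\mathcal I$. I would use the stability of $\mathcal I$ under stopping. For a stopping time $\tau$, take $K=\int \mathbf 1_{[0,\tau]}e_i\,dM$. Then
\[
0=\mathbb E[L_TK_T]=\mathbb E\bigl[L_\tau M^i_\tau\bigr],
\]
after first localizing $M^i$ so that $M^i_\tau\in L^2$. Since this holds for all bounded stopping times, the optional-sampling characterization shows that $LM^i$ is a martingale. Equivalently, $\langle L,M^i\rangle=0$.

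\textbf{Step 4: uniqueness.} Suppose there are two decompositions. Their difference gives $\int\langle H-H',dM\rangle=L'-L$. This process is both in $\mathcal I$ and strongly orthogonal to $M$. So its bracket with itself is
\[
\int (H-H')^\top\,d\langle M\rangle\,(H-H')=\langle L'-L,\textstyle\int\langle H-H',dM\rangle\rangle=0.
\]
Hence $H=H'$ in $L^2(d\langle M\rangle\otimes d\mathbb P)$, and $L=L'$ up to indistinguishability because both are continuous-in-law martingales with equal terminal values.
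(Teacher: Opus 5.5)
The paper gives no proof of this statement. It recalls the Kunita--Watanabe decomposition in Appendix B as classical background, so there is no authors' argument to compare against. Your proposal is the standard stable-subspace (Hilbert-projection) proof found in Revuz--Yor or Jacod--Shiryaev, and it is correct.

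Writing $d\langle M\rangle=c\,dA$ is exactly what makes $\mathcal I$ closed when the $M^i$ are not mutually orthogonal. You assert completeness of the weighted space without proof; it can be justified as follows. Map $H\mapsto c^{1/2}H$ isometrically into $L^2(dA\otimes d\mathbb P)$. Note that a limit $G$ stays in $\operatorname{Ran}(c^{1/2})$ a.e. Then recover $H=(c^{1/2})^\dagger G$, which is predictable because the pseudoinverse is Borel. This is the same fact the paper uses in Lemma~\ref{lem:projection-measurable}.

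A few points to tidy:
\begin{itemize}
\item Unless $\mathcal F_0$ is trivial, $N\mapsto N_T$ maps $\mathcal M^2_0$ onto $\{\xi\in L^2(\mathcal F_T):\mathbb E[\xi\mid\mathcal F_0]=0\}$, not onto all centered variables. Only closedness is used, so nothing breaks.
\item In Step 3 no localization is needed. $M^i$ is already square-integrable, so $\mathbf 1_{[0,\tau]}e_i$ is an admissible integrand. The identity $\mathbb E[L_T(M^i_\tau-M^i_0)]=\mathbb E[L_\tau M^i_\tau]$ uses only optional sampling and $L_0=0$.
\item In Step 4, replace ``continuous-in-law'' with the actual reason: $L'-L=\int\langle H-H',dM\rangle$ is a continuous local martingale with zero bracket, hence identically zero.
\item Your bracket identity relies on $\langle\int\langle K,dM\rangle,L\rangle=\int K^\top d\langle M,L\rangle$ for vector integrands, which you should cite.
\item Uniqueness in the statement's wider class, with $H$ merely $M$-integrable, reduces to your $L^2$ class. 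This holds because $\int\langle H,dM\rangle=N-N_0-L$ is square-integrable, so $\mathbb E\int H^\top d\langle M\rangle H<\infty$.
\item If ``square-integrable'' on $[0,\infty)$ means $\mathbb E N_t^2<\infty$ for each $t$ rather than $L^2$-boundedness, run the argument on each $[0,n]$ and glue the pieces using uniqueness.
\end{itemize}
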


When $(\mathcal F_t)_t$ is a Brownian filtration, the Brownian martingale-representation theorem of It\^o
forces $L\equiv0$ when $M=B$.
{For a degenerate equation, however, {the filtration $(\mathcal F_t(X))_t$} generated by $X$ can be strictly smaller}
than the filtration generated by the driving Brownian motion $B$.  It is therefore not obvious that the
{orthogonal term $L$ vanishes after conditioning.  The martingale representation result}
for degenerate diffusions shows that the relevant Brownian integrand is unique; its
image $\sigma(t,X_t)u_t$ is the added state drift in \eqref{eq:s2}, where $(u_t)_t$ is $(\mathcal F_t(X))_t$-adapted. Thus, adaptiveness of $(u_t)_t$ is not sufficient for a stochastic integral of the form  $\int_0^{\cdot} \langle u_s,dB_s\rangle$ to be $(\mathcal F_t(X))_t$-adapted.

\section{Proofs of the Theorems in Section \ref{sect:6}}\label{app:c}

We first show that we can choose the projection $(P_t(X))_t$ as a Borel function of the coefficient.
{This choice is independent of any completion of the filtration generated by the state-process $X$ that solves \eqref{eq:s} and can}
therefore be used under both $\mathbb P$ and the conditional measures
$\mathbb P_x=\mathbb P(\,\cdot\mid X_0=x)$. This distinction is important, because in the original results of \cite{ust_mart}, the process $(P_t(X))_t$ is chosen via a measurable selection theorem; thus, it is not implementable in practice.

\subsection{Auxiliary Results}

\begin{lemma}\label{lem:projection-measurable}
{Let $(\mathcal G_t)_t$ be a filtration. Let $K=(K_t)$ be a $(\mathcal G_t)_t$-predictable process. Let \(z(\cdot,\cdot):[0,T]\times\mathbb R^n\to\mathbb R^{n\times d}\) be Borel measurable.  Define}
\[
\mathfrak q_t(\cdot)
:=
z(t,K_t(\cdot))^\top(z(t,K_t(\cdot))z(t,K_t(\cdot))^\top)^\dagger z(t,K_t(\cdot)).
\]
Then, \(\mathfrak q=(\mathfrak q_t)_t\) is $(\mathcal G_t)$-predictable. Moreover, \(\mathfrak q_t(\cdot)\)
is the orthogonal projection onto
$\operatorname{Ran}(z(t,K_t(\cdot))^\top)=(\ker z(t,K_t(\cdot)))^\perp.$
\end{lemma}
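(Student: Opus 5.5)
The plan is to split the statement into a deterministic matrix-analysis part and a measurability part. The key observation is that $\mathfrak q_t(\omega)=\Pi(z(t,K_t(\omega)))$, where $\Pi:\mathbb R^{n\times d}\to\mathbb R^{d\times d}$ is the fixed map
\[
\Pi(a)=a^\top(aa^\top)^\dagger a .
\]
Predictability of $\mathfrak q$ will then follow by composing three measurable maps:
\[
(t,\omega)\mapsto (t,K_t(\omega)),\qquad (t,x)\mapsto z(t,x),\qquad a\mapsto \Pi(a).
\]

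\textbf{Step 1 (the map $(t,\omega)\mapsto(t,K_t(\omega))$).} Since $K$ is $(\mathcal G_t)$-predictable, this map is measurable from the predictable $\sigma$-field $\mathcal P$ on $[0,T]\times\Omega$ to $\mathcal B([0,T])\otimes\mathcal B(\mathbb R^n)$. The reason is that $t\mapsto t$ is itself a deterministic continuous adapted process, hence predictable. Composing with the Borel map $z$ shows that $(t,\omega)\mapsto z(t,K_t(\omega))$ is $\mathcal P$-measurable.

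\textbf{Step 2 (Borel measurability of $\Pi$).} This is the main obstacle, because the pseudoinverse is discontinuous at rank changes. I would first show that $M\mapsto M^\dagger$ is Borel on $\mathbb R^{n\times n}$, using the limit formula
\[
M^\dagger=\lim_{\delta\downarrow0}(M^\top M+\delta I)^{-1}M^\top .
\]
This limit exists for every $M$, as one checks from the singular value decomposition. Each approximant is continuous in $M$, since $M^\top M+\delta I$ is invertible. A pointwise limit along $\delta=1/k$ of continuous maps is Borel, so $M\mapsto M^\dagger$ is Borel. Then $\Pi(a)=a^\top(aa^\top)^\dagger a$ is a product of Borel maps, and hence Borel.

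Combining Steps 1 and 2, $\mathfrak q=\Pi\circ z\circ(t,K_t)$ is $\mathcal P$-measurable, i.e.\ predictable. No completion of the filtration is used, which is the point stressed before the lemma.

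\textbf{Step 3 (the projection property).} Fix a matrix $a$ and let $a=U\Sigma V^\top$ be its singular value decomposition with rank $k$. Then
\[
aa^\top=U\Sigma\Sigma^\top U^\top,\qquad (aa^\top)^\dagger=U(\Sigma\Sigma^\top)^\dagger U^\top,
\]
and therefore
\[
\Pi(a)=V\Sigma^\top(\Sigma\Sigma^\top)^\dagger\Sigma V^\top=V\,\mathrm{diag}(I_k,0)\,V^\top .
\]
This matrix is symmetric and idempotent, and its range is spanned by the first $k$ right singular vectors, which is $\operatorname{Ran}(a^\top)=(\ker a)^\perp$. Equivalently, one can use the Penrose identities: $\Pi(a)=a^\dagger a$ with $a^\dagger=a^\top(aa^\top)^\dagger$. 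Applying this with $a=z(t,K_t(\omega))$ pointwise gives the claim.
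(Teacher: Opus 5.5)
Your proof is correct and follows the paper's argument. Both show that $a\mapsto\Pi(a)$ is Borel as a pointwise limit of Tikhonov-regularized continuous maps, compose it with the predictable map $(t,\omega)\mapsto z(t,K_t(\omega))$, and get the projection property from the Moore--Penrose identities, which you verify explicitly through the SVD. The only cosmetic difference is that the paper regularizes $\Pi$ directly, via $\Pi(a)=\lim_{k\to\infty}a^\top(aa^\top+k^{-1}I_n)^{-1}a$, whereas you first show $M\mapsto M^\dagger$ is Borel and then compose; both are equally valid.
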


\begin{proof}
For \(a\in\mathbb R^{n\times d}\), {recall that}
$\Pi(a):=a^\top(aa^\top)^\dagger a$, where $(aa^\top)^\dagger$ is the
Moore--Penrose inverse of $aa^{\top}$ \cite[7.3P7]{horn-MA}.
By \cite[7.3P15]{horn-MA},
\[
\Pi(a)
=
\lim_{k\to\infty}
a^\top\bigl(aa^\top+k^{-1}I_n\bigr)^{-1}a,
\]
and for each $k$, the map on the right-hand side is continuous in $a$.
Consequently, $\Pi$ is Borel measurable.  Thus,
$\mathfrak q=(\Pi(z(t,K_t)))_t$ is predictable \cite[Chapter IX, Proposition 1.1]{revuz2013continuous}.  The projection property follows
directly from the Moore--Penrose pseudoinverse identities \cite[7.3P7]{horn-MA}.
\end{proof}

\begin{remark}
Since $(t,\omega)\mapsto(t,X_t(\omega))$ is predictable and $\sigma$ is
{Borel, $(\sigma(t,X_t))_t$ is predictable.  The projection process $P_{\cdot}(X)$ need not be}
continuous when the rank of $\sigma$ is not constant, but predictability is sufficient to have well-defined stochastic integrals. We also remark that, we did not assume that $(\mathcal G_t)_t$ is augmented.
\end{remark}

Second, for completeness, we show that $(Z_t(X))_t$ is a martingale.

\begin{lemma}\label{lem:density-martingale}
Suppose that Assumptions~\ref{ass:1}--\ref{ass:2} hold.  For
$\mathcal L_{\mathbb P}(Y)$-a.e.\ $y$, the process
\[
Z_t(X)
:=
\frac{d(\mathbb P^y|_{\mathcal F_t(X)})}
     {d(\mathbb P|_{\mathcal F_t(X)})},
\qquad 0\leq t<T,
\]
is a nonnegative $(\mathcal F_t(X),\mathbb P)$-martingale.  Moreover,
for every $R<T$, it admits a continuous version of the form
\[
Z_t(X)
=
Z_0(X)+\int_0^t\langle \xi_s,dB_s\rangle,
\qquad 0\leq t\leq R,
\]
where $P_s(X)\xi_s=\xi_s$ $dt\otimes d\mathbb P$-a.e., and
\[
Z_0(X) = \frac{d(\mathbb P^y|_{\mathcal F_0(X)})}
     {d(\mathbb P|_{\mathcal F_0(X)})}.
\]
\end{lemma}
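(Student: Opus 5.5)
We prove Lemma~\ref{lem:density-martingale} in three steps: first the martingale property of $Z(X)$, then the $L^2$ representation via Theorem~\ref{thrm:rep}, then the identification of $Z_0(X)$ and continuity.

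\textbf{Martingale property.} Fix $y$ outside the exceptional null set of Assumption~\ref{ass:2}. For $s\le t<T$ and $A\in\mathcal F_s(X)\subset\mathcal F_t(X)$ we have
\[
\mathbb E^{\mathbb P}[Z_t(X)\mathbf 1_A]=\mathbb P^y(A)=\mathbb E^{\mathbb P}[Z_s(X)\mathbf 1_A].
\]
Hence $\mathbb E^{\mathbb P}[Z_t(X)\mid\mathcal F_s(X)]=Z_s(X)$. Nonnegativity holds because Radon--Nikodym derivatives of probability measures are nonnegative. Integrability follows from $Z_t\in L^2\subset L^1$. One point needs care: $\mathcal F_t(X)$ is the $\mathbb P$-augmentation. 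Since $\mathbb P^y|_{\mathcal F_t(X)}\ll\mathbb P|_{\mathcal F_t(X)}$, $\mathbb P$-null sets are $\mathbb P^y$-null, so the restriction to augmented $\sigma$-fields is consistent.

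\textbf{Representation on $[0,R]$.} Fix $R<T$ and set $F:=Z_R(X)\in L^2(\mathcal F_R(X),\mathbb P)$. We apply Theorem~\ref{thrm:rep} on the horizon $[0,R]$ in place of $[0,T]$. Its hypotheses hold: the coefficients are bounded, hence of linear growth, and Assumption~\ref{ass:1} gives weak uniqueness for $\mu_0$-a.e.\ initial point. The theorem yields
\[
F=\mathbb E^{\mathbb P}[F\mid X_0]+\int_0^R\langle\xi_s,dB_s\rangle,
\]
with a unique $\xi\in L^2_a([0,R],\mathcal F(X),\mathbb P;\mathbb R^d)$ satisfying $P_s(X)\xi_s=\xi_s$.

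Taking $\mathbb E^{\mathbb P}[\,\cdot\mid\mathcal F_t(X)]$ for $t\le R$ uses two facts:
\begin{itemize}
\item[(i)] The stochastic integral is an $(\mathcal F_t(X))$-martingale. It is $\mathcal F(X)$-adapted because the converse part of Theorem~\ref{thrm:rep}, applied on $[0,t]$ to the truncated integrand $\xi\mathbf 1_{[0,t]}$, shows $\int_0^t\langle\xi_s,dB_s\rangle\in L^2(\mathcal F_t(X))$. It is also a $(\mathcal G_t)$-martingale, and conditioning down to the smaller filtration preserves the martingale property.
\item[(ii)] $\mathcal F_0(X)=\sigma(X_0)$ up to null sets. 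This requires right-continuity at $0$, i.e.\ a Blumenthal-type $0$--$1$ law. We would obtain it from Theorem~\ref{thrm:rep} itself: any $\mathcal F_{0+}(X)$-measurable bounded $F$ equals $\mathbb E[F\mid X_0]$ plus an integral whose integrand vanishes, because the integral over $[0,\delta]$ tends to $0$ as $\delta\downarrow0$.
\end{itemize}
Together these give
\[
Z_t(X)=\mathbb E^{\mathbb P}[F\mid X_0]+\int_0^t\langle\xi_s,dB_s\rangle,\qquad t\le R.
\]
The right-hand side is a continuous version. At $t=0$ it gives $Z_0(X)=\mathbb E^{\mathbb P}[F\mid X_0]$, which equals the Radon--Nikodym derivative on $\mathcal F_0(X)$ by the martingale property.

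\textbf{Consistency in $R$ and main obstacle.} Uniqueness of $\xi$ in Theorem~\ref{thrm:rep} makes the integrands for $R<R'$ agree on $[0,R]$, so $\xi$ is well defined as a local integrand on $[0,T)$. The main obstacle is step (ii) in the representation step: identifying $\mathcal F_0(X)$ with $\sigma(X_0)$ modulo null sets, together with the $\mathcal F(X)$-adaptedness of the stochastic integral. If needed, we would instead deduce (ii) by disintegrating along the regular conditional family $(\mathbb P_x)$ from Assumption~\ref{ass:1} and using the $0$--$1$ law for the well-posed martingale problem started at each fixed $x$.
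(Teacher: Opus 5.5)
Your proposal is correct and follows essentially the same route as the paper's proof: the tower-property martingale argument, Theorem~\ref{thrm:rep} applied to $Z_R(X)$, conditioning on $\mathcal F_t(X)$ using that the stochastic integral is an $\mathcal F(X)$-martingale, and gluing the integrands over $R$ by uniqueness. Your justification of adaptedness in (i) fills in a detail the paper leaves implicit, but (ii) is not actually an obstacle: conditioning the representation on $\mathcal F_0(X)$ and using (i) already gives $Z_0(X)=\mathbb E^{\mathbb P}[Z_R(X)\mid\mathcal F_0(X)]=\mathbb E^{\mathbb P}[Z_R(X)\mid X_0]$, which is exactly the identification the paper makes ``by the martingale property''.
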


\begin{proof}
For $0\leq s\leq t<T$ and $A\in\mathcal F_s(X)$, using the tower property and Assumption~\ref{ass:2}, we obtain
\[
\mathbb E^{\mathbb P}
\left[Z_t(X)\mathbf 1_A\right]
=
\mathbb P^y(A)
=
\mathbb E^{\mathbb P}
\left[Z_s(X)\mathbf 1_A\right].
\]
Hence
\[
\mathbb E^{\mathbb P}
\left[Z_t(X)\mid\mathcal F_s(X)\right]
=
Z_s(X) \qquad \mathbb P\text{-a.s.},
\]
and thus $Z$ is a nonnegative $(\mathcal F_t(X),\mathbb P)$-martingale.

Fix $R<T$. By Assumption \ref{ass:2},
$Z_R(X)\in L^2(\mathcal F_R(X))$. Therefore, Theorem
\ref{thrm:rep} gives an $\mathcal F(X)$-predictable process
$\xi^R\in L_a^2(\mathcal F(X),\mathbb P;\mathbb R^d)$ such that
\[
Z_R(X)
=
\mathbb E^{\mathbb P}[Z_R(X)\mid X_0]
+
\int_0^R\langle\xi_s^R,dB_s\rangle.
\]
By the martingale property,
\[
\mathbb E^{\mathbb P}[Z_R(X)\mid X_0]
=
Z_0(X).
\]
The  stochastic integral is an $\mathcal F(X)$-martingale.
Taking conditional expectation with respect to $\mathcal F_t(X)$ therefore
yields
\[
Z_t(X)
=
Z_0(X)+\int_0^t\langle\xi_s^R,dB_s\rangle,
\qquad 0\leq t\leq R.
\]
The right-hand side is continuous.  Uniqueness of the  integrand
shows that these representations agree on overlapping intervals, and thus they
define a single locally square-integrable process $\xi$ on $[0,T)$.
\end{proof}
\subsection{Proof of Theorem~\ref{thrm:rep}}

Conditional versions of \cite[Theorems 1 and 2]{ust_mart}, even under
sufficiently regular assumptions, require significant
technical care because the projection process $(P_t(X_t))_t$ need not be
continuous. In particular, switching between probability measures changes
the corresponding completed filtrations and therefore requires care when
passing to enlargements of the filtration generated by the diffusion
process. In the proof of Theorem~\ref{thrm:rep}, for this reason we use predictable $\sigma$-field associated with the raw filtration generated by $X$ to obtain the desired predictable representation property of $X$.

{The proof is organized into several parts. We start from a probability space $(\Omega,\mathcal F,\mathbb P)$ such that a weak solution $(X,B)$ to \eqref{eq:s} lies on. Then, we show that $(X,B)$ is also a solution to \eqref{eq:s} under the conditioned measure $\mathbb P_x(\,\cdot\,) = \mathbb P(\,\cdot\,\mid X_0=x).$ This requires $B$ to be a Brownian motion under $\mathbb P_x$ too. Proofs of these statements are relatively standard; however, since we were not able to find a source for these statements, we provide complete proofs.}

\begin{lemma}
\label{lem:fiber-brownian}
Let $(X,B)$ be defined on
$(\Omega,\mathcal F,(\mathcal G_t)_{t\leq T},\mathbb P)$, where
$B$ is a $d$-dimensional $(\mathcal G_t,\mathbb P)$-Brownian motion,
$X$ is continuous and $(\mathcal G_t)$-adapted, and
$X_0$ is $\mathcal G_0$-measurable. Let
$\mu_0:=\mathcal L_{\mathbb P}(X_0)$, and let
$(\mathbb P_x)_{x\in\mathbb R^n}$ be a regular conditional
probability of $\mathbb P$ given $X_0$. Define the raw $\sigma$-algebra
\begin{equation}
\mathcal G_t^{0}
:=
\sigma(X_s,B_s:0\leq s\leq t),
\end{equation}
and let $(\mathcal G_t^x)_{t\leq T}$ be the
$\mathbb P_x$-usual augmentation of $(\mathcal G_t^0)_{t\leq T}$.
Then, for $\mu_0$-a.e.\ $x$, $B$ is a $d$-dimensional
$(\mathcal G_t^x,\mathbb P_x)$-Brownian motion.
\end{lemma}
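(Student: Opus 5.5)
We use Lévy's characterization on the fibers. Since $B$ is a continuous process with $B_0=0$, it suffices to show that for $\mu_0$-a.e.\ $x$, for all $0\le s<t\le T$, every bounded $\mathcal G_s^0$-measurable $\Phi$ and every $\lambda\in\mathbb R^d$,
\begin{equation*}
\mathbb E^{\mathbb P_x}\bigl[\Phi\, e^{i\langle\lambda,B_t-B_s\rangle}\bigr]
=e^{-|\lambda|^2(t-s)/2}\,\mathbb E^{\mathbb P_x}[\Phi].
\end{equation*}
This says that $B_t-B_s$ is $\mathcal N(0,(t-s)I_d)$ and independent of $\mathcal G_s^0$ under $\mathbb P_x$.

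\textbf{Step 1: the identity holds after integrating in $x$.} Take a bounded Borel $g$ on $\mathbb R^n$. Since $X_0$ is $\mathcal G_0$-measurable and $\mathcal G_s^0\subset\mathcal G_s$, the product $g(X_0)\Phi$ is $\mathcal G_s$-measurable. Because $B_t-B_s$ is independent of $\mathcal G_s$ under $\mathbb P$,
\[
\int g(x)\,\mathbb E^{\mathbb P_x}\bigl[\Phi e^{i\langle\lambda,B_t-B_s\rangle}\bigr]\,\mu_0(dx)
=\mathbb E^{\mathbb P}\bigl[g(X_0)\Phi e^{i\langle\lambda,B_t-B_s\rangle}\bigr]
=e^{-|\lambda|^2(t-s)/2}\int g(x)\,\mathbb E^{\mathbb P_x}[\Phi]\,\mu_0(dx).
\]
The outer equalities use the defining disintegration property of the regular conditional probability. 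Since $g$ is arbitrary, the fiber identity holds for $\mu_0$-a.e.\ $x$. The exceptional null set depends on $(s,t,\Phi,\lambda)$.

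\textbf{Step 2: remove the dependence of the null set (main obstacle).} Let $s,t$ range over $\mathbb Q\cap[0,T]$ together with $T$, and let $\lambda$ range over $\mathbb Q^d$. Let $\Phi$ range over a countable family of the form $\prod_{j} f_j(X_{s_j},B_{s_j})$, where the $s_j\le s$ are rational and the $f_j$ come from a countable convergence-determining class of bounded continuous functions. Using continuity of paths, this family generates $\mathcal G_s^0$ and is closed under products. A countable union of null sets gives a single $\mu_0$-null set $N$. For $x\notin N$, we extend in two stages:
\begin{enumerate}
\item A monotone class argument (the $\pi$–$\lambda$ theorem applied to the multiplicative class) extends the identity to all bounded $\mathcal G_s^0$-measurable $\Phi$ for rational $s,t$.
\item Path continuity and dominated convergence extend it to all real $s<t$ and all real $\lambda$. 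For real $s$, we approximate by rational $s_n\downarrow s$ and note that $\mathcal G_s^0\subset\mathcal G^0_{s_n}$.
\end{enumerate}
This shows that $B$ is a $(\mathcal G_t^0,\mathbb P_x)$-Brownian motion for $x\notin N$. I expect this step to be the main technical point. The care needed is in choosing the countable generating class correctly and in making sure the exceptional set does not depend on $(s,t,\Phi,\lambda)$.

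\textbf{Step 3: pass to the $\mathbb P_x$-usual augmentation.} Adding $\mathbb P_x$-null sets does not change any of the conditional expectations involved, so independence of increments is preserved under completion. For right-continuity, take $A\in\mathcal G_{s+}^0$. Then $A\in\mathcal G_{s+1/n}^0$ for every $n$, so the identity holds with $s+1/n$ in place of $s$. Letting $n\to\infty$ and using continuity of $B$ and dominated convergence shows that $B_t-B_s$ is independent of $\mathcal G_{s+}^0$. Completion of $\mathcal G_{s+}^0$ then gives independence from $\mathcal G_s^x$. Hence $B$ is a $(\mathcal G_t^x,\mathbb P_x)$-Brownian motion for $\mu_0$-a.e.\ $x$.
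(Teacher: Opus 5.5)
Your proposal is correct and follows essentially the same route as the paper. Both arguments disintegrate the Gaussian characteristic-function identity for increments against $g(X_0)$, make the exceptional set uniform using a countable generating class together with rational $(s,t,\lambda)$, and then extend by monotone class, path continuity, and passage to the $\mathbb P_x$-usual augmentation. You use a multiplicative family of continuous test functions where the paper uses a countable $\pi$-system of rational cylinder events, and your explicit right-limit argument spells out the augmentation step that the paper only asserts. The one point you leave implicit is that $\mathbb P_x(B_0=0)=1$, and path continuity if it holds only $\mathbb P$-a.s., must also come from disintegrating the corresponding $\mathbb P$-full events, at the cost of one more $\mu_0$-null set, as the paper does.
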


\begin{proof}
Since $X$ and $B$ are $(\mathcal G_t)$-adapted,
$\mathcal G_t^0\subseteq\mathcal G_t$ for every $t\leq T$.
Hence, under $\mathbb P$, every increment $B_t-B_s$ is independent of
$\mathcal G_s^0$.

For each rational $s\in[0,T]$, choose a countable $\pi$-system
$\mathcal C_s$ generating $\mathcal G_s^0$. Such a class may be
constructed from finite-dimensional cylinder events of
$(X_q,B_q)$, $q\in\mathbb Q\cap[0,s]$, since $X$ and $B$ have
continuous paths \cite{kuo2006gaussian}. Fix rational $0\leq s<t\leq T$,
$A\in\mathcal C_s$, and $\theta\in\mathbb Q^d$. For every bounded
Borel function $f:\mathbb R^n\to\mathbb R$, the random variable
$f(X_0)\mathbf 1_A$ is $\mathcal G_s$-measurable. Therefore,
\begin{equation}
\mathbb E^{\mathbb P}
\left[
f(X_0)\mathbf 1_A
e^{i\langle\theta,B_t-B_s\rangle}
\right]
=
e^{-\frac12|\theta|^2(t-s)}
\mathbb E^{\mathbb P}
\left[
f(X_0)\mathbf 1_A
\right].
\end{equation}
Disintegrating with respect to $X_0$ yields
\begin{align}
&\int_{\mathbb R^n}
f(x)
\mathbb E^{\mathbb P_x}
\left[
\mathbf 1_A
e^{i\langle\theta,B_t-B_s\rangle}
\right]
\mu_0(dx)
=
\int_{\mathbb R^n}
f(x)
e^{-\frac12|\theta|^2(t-s)}
\mathbb P_x(A)
\,\mu_0(dx).
\end{align}
Since this holds for every bounded Borel $f$,
\begin{equation}
\mathbb E^{\mathbb P_x}
\left[
\mathbf 1_A
e^{i\langle\theta,B_t-B_s\rangle}
\right]
=
e^{-\frac12|\theta|^2(t-s)}
\mathbb P_x(A)
\end{equation}
for $\mu_0$-a.e.\ $x$.

There are only countably many choices of
\begin{equation}
{(s,t,A,\theta) \in \mathbb Q\cap [0,T] \times \mathbb Q\cap [0,T] \times \mathcal C_s \times \mathbb Q^d.}
\end{equation}
Consequently, there exists a single $\mu_0$-null set $N$ such that,
for every $x\notin N$, all of the preceding identities hold
simultaneously. Enlarging $N$ if necessary, disintegration of the
$\mathbb P$-a.s.\ events $\{B_0=0\}$ and
$\{B\text{ has continuous paths}\}$ also gives, for every
$x\notin N$,
$
\mathbb P_x(B_0=0)=1
$
and $B$ has continuous paths $\mathbb P_x$-a.s.

Fix $x\notin N$. For rational $s<t$, the monotone class theorem
extends the preceding identity from $A\in\mathcal C_s$ to every
$A\in\mathcal G_s^0$. Continuity in $\theta$ then extends it from
$\theta\in\mathbb Q^d$ to every $\theta\in\mathbb R^d$. Thus,
\begin{equation}
\mathbb E^{\mathbb P_x}
\left[
\mathbf 1_A
e^{i\langle\theta,B_t-B_s\rangle}
\right]
=
\mathbb P_x(A)
e^{-\frac12|\theta|^2(t-s)},
\qquad
A\in\mathcal G_s^0.
\end{equation}
By uniqueness of characteristic functions,
$
B_t-B_s\sim\mathcal N(0,(t-s)I_d)
$
under $\mathbb P_x$, and $B_t-B_s$ is independent of
$\mathcal G_s^0$.

Now let $0\leq s<t\leq T$ be arbitrary. Choose rational sequences
$s_k\downarrow s$ and $t_k\to t$ such that $s_k<t_k$. If
$A\in\mathcal G_s^0$, then $A\in\mathcal G_{s_k}^0$, and hence
\begin{equation}
\mathbb E^{\mathbb P_x}
\left[
\mathbf 1_A
e^{i\langle\theta,B_{t_k}-B_{s_k}\rangle}
\right]
=
\mathbb P_x(A)
e^{-\frac12|\theta|^2(t_k-s_k)}.
\end{equation}
By continuity of $B$ and dominated convergence, letting
$k\to\infty$ gives
\begin{equation}
\mathbb E^{\mathbb P_x}
\left[
\mathbf 1_A
e^{i\langle\theta,B_t-B_s\rangle}
\right]
=
\mathbb P_x(A)
e^{-\frac12|\theta|^2(t-s)}.
\end{equation}
Thus $B_t-B_s$ is independent of $\mathcal G_s^0$ and has law
$\mathcal N(0,(t-s)I_d)$ for every $0\leq s<t\leq T$.

Finally, these properties are preserved under the
$\mathbb P_x$-usual augmentation of the raw filtration. Therefore,
$B$ is a $d$-dimensional
$(\mathcal G_t^x,\mathbb P_x)$-Brownian motion.
\end{proof}

Next, we show that a solution of $(X,B)$ of \eqref{eq:s} under $\mathbb P$ is also a solution under $\mathbb P_x$. This essentially amounts to picking predictable approximations of $(X,B)$ under the raw filtration, and showing that these approximations also work under $\mathbb P_x$

\begin{lemma}
\label{lem:fiber-sde}
Suppose, in addition to the hypotheses of Lemma~\ref{lem:fiber-brownian}, that
$b:[0,T]\times\mathbb R^n\to\mathbb R^n$ and
$\sigma:[0,T]\times\mathbb R^n\to\mathbb R^{n\times d}$ are bounded
Borel functions and that
\begin{equation}
X_t
=
X_0+\int_0^t b(s,X_s)\,ds
+\int_0^t\sigma(s,X_s)\,dB_s,
\qquad 0\leq t\leq T,
\end{equation}
holds $\mathbb P$-a.s. Then, for $\mu_0$-a.e.\ $x$,
\[
\mathbb P_x(X_0=x)=1,
\]
and, with respect to the filtration $(\mathcal G_t^x)_{t\leq T}$,
\begin{equation}
X_t
=
x+\int_0^t b(s,X_s)\,ds
+\int_0^t\sigma(s,X_s)\,dB_s,
\qquad 0\leq t\leq T,
\quad \mathbb P_x\text{-a.s.}
\end{equation}
Consequently, for $\mu_0$-a.e.\ $x$, $(X,B)$ is a weak solution of
\eqref{eq:s} with deterministic initial condition $X_0=x$ under
$\mathbb P_x$.
\end{lemma}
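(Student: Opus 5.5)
The plan is to transfer the $\mathbb P$-a.s. integral identity to $\mathbb P_x$ for $\mu_0$-a.e.\ $x$ by disintegration. The only real difficulty is that stochastic integrals are defined up to null sets that depend on the measure and the filtration. I will build everything from objects measurable with respect to the raw filtration $(\mathcal G_t^0)$, and use approximations whose convergence can be disintegrated.

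\emph{Initial condition.} Since $(\mathbb P_x)$ is a regular conditional probability given $X_0$, disintegrating $\mathbf 1_{\{X_0\in A\}}$ over a countable generating family of Borel sets $A$ gives $\mathbb P_x(X_0\in A)=\mathbf 1_A(x)$ for all such $A$, off a single $\mu_0$-null set. Hence $\mathbb P_x(X_0=x)=1$.

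\emph{Stochastic integral.} The process $K_s:=\sigma(s,X_s)$ is bounded and $(\mathcal G^0_t)$-progressive. I take Riemann-type approximations $K^{(k)}$: time-averaged, then left-point discretized versions of $K$, which are simple and $(\mathcal G_t^0)$-predictable, so that $\mathbb E^{\mathbb P}\int_0^T|K^{(k)}_s-K_s|^2ds\to0$. By passing to a subsequence with $\sum_k \mathbb E^{\mathbb P}\int_0^T|K^{(k)}-K|^2ds<\infty$ and disintegrating, I get $\sum_k\mathbb E^{\mathbb P_x}\int_0^T|K^{(k)}-K|^2ds<\infty$ for $\mu_0$-a.e.\ $x$. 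In particular $\mathbb E^{\mathbb P_x}\int_0^T|K^{(k)}-K|^2ds\to0$.

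The integrals $I^{(k)}_t:=\int_0^t K^{(k)}_s\,dB_s$ are finite sums of products of raw-measurable variables and Brownian increments. They are therefore defined pathwise and do not depend on the measure. Under $\mathbb P$, Doob's inequality together with the It\^o isometry gives $\sup_{t\leq T}|I^{(k)}_t-I_t|\to0$ in $L^2(\mathbb P)$, where $I=\int_0^\cdot K\,dB$. A further subsequence gives $\mathbb P$-a.s.\ uniform convergence, and the event $\{I^{(k)}\to I\text{ uniformly}\}$ is measurable. Disintegrating, it has $\mathbb P_x$-probability one for $\mu_0$-a.e.\ $x$.

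Under $\mathbb P_x$, Lemma~\ref{lem:fiber-brownian} shows that $B$ is a $(\mathcal G^x_t,\mathbb P_x)$-Brownian motion, and $K^{(k)}\to K$ in $L^2(dt\otimes\mathbb P_x)$. The It\^o isometry then gives $I^{(k)}\to\int_0^\cdot K\,dB$, the $\mathbb P_x$-stochastic integral, uniformly in probability. Comparing the two limits identifies $I$ with the $\mathbb P_x$-stochastic integral $\mathbb P_x$-a.s. The main obstacle is exactly this identification: $I$ is only defined up to $\mathbb P$-null sets. To handle it, I will fix a version of $I$ as the $\limsup$ of the $I^{(k)}$, which is $\mathcal G^0$-measurable. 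This makes the disintegration legitimate, and countably many subsequence choices are handled by a single $\mu_0$-null set.

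\emph{Conclusion.} The drift term is a pathwise Lebesgue integral and needs no treatment. The identity $X_t=X_0+\int_0^t b\,ds+I_t$ for all $t$ is a measurable $\mathbb P$-a.s.\ event by path continuity, so it holds $\mathbb P_x$-a.s.\ for a.e.\ $x$. Replacing $X_0$ by $x$ and $I$ by the $\mathbb P_x$-integral yields the stated equation. Together with Lemma~\ref{lem:fiber-brownian} and the adaptedness of $X$ to $(\mathcal G^x_t)$, this shows that $(X,B)$ is a weak solution with deterministic initial condition $x$ under $\mathbb P_x$.
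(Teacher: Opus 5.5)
Your proposal is correct and follows essentially the paper's route: approximate $\sigma(s,X_s)$ by elementary $(\mathcal G^0_t)$-predictable processes with summable $L^2(dt\otimes d\mathbb P)$ errors, disintegrate via Tonelli, identify the $\mathbb P$-integral with the $\mathbb P_x$-integral by uniqueness of limits, and then disintegrate the SDE identity. The differences are only technical. You use Doob's maximal inequality to get a.s.\ uniform-in-time convergence and fix an explicit $\mathcal G^0$-measurable $\limsup$ version of $I$. The paper instead argues at rational times in $L^2(\mathbb P_x)$, extends to all $t$ by continuity, and leaves the choice of version implicit.
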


\begin{proof}
We first identify the initial condition. Since $(\mathbb P_x)_x$ is a
regular conditional probability of $\mathbb P$ given $X_0$, for every
bounded Borel function $f:\mathbb R^n\to\mathbb R$,
$
\mathbb E^{\mathbb P_x}[f(X_0)]
=
f(x)
$
for $\mu_0$-a.e.\ $x$. Taking $f$ from a countable determining class
for $\mathcal B(\mathbb R^n)$ and removing a single $\mu_0$-null set
gives
$
\mathcal L_{\mathbb P_x}(X_0)=\delta_x,
$
and hence
$
\mathbb P_x(X_0=x)=1
$
for $\mu_0$-a.e.\ $x$.

It remains to verify that the stochastic integral appearing in the
original $\mathbb P$-SDE agrees, on almost every fiber, with the It\^o
integral defined under $\mathbb P_x$. Since $X$ is continuous and adapted to the raw filtration
$(\mathcal G_t^0)_t$, the process $\sigma=(\sigma(t,X_t))_t$ is
$\mathcal G^0$-predictable \cite[Chapter IX, Proposition 1.1]{revuz2013continuous}. Since $\sigma$ is bounded, there exists a
sequence of bounded elementary $\mathcal G^0$-predictable
matrix-valued processes $(\sigma^m)_{m\geq1}$ such that, after passing
to a subsequence,
\begin{equation}
\sum_{m=1}^{\infty}
\mathbb E^{\mathbb P}
\int_0^T
\|\sigma^m(s,X_s)-\sigma(s,X_s)\|_{\mathrm F}^2\,ds
<\infty.
\label{eq:fiber-sigma-summable}
\end{equation}
For each $m$, define
$
I_t^m:=\int_0^t\sigma^m(s,X_s)\,dB_s.
$
Since $\sigma^m$ is {an} elementary predictable {process}, $I_t^m$ is given by a
finite sum of the form
\[
I_t^m
=
\sum_j
H_j^m
\bigl(B_{t\wedge t_{j+1}}-B_{t\wedge t_j}\bigr),
\]
and therefore represents the same random variable whether it is
viewed under $\mathbb P$ or under $\mathbb P_x$.

Let
\[
I_t:=\int_0^t\sigma(s,X_s)\,dB_s
\]
denote the It\^o integral under $\mathbb P$. By It\^o's isometry and
\eqref{eq:fiber-sigma-summable},
\begin{equation}
\sum_{m=1}^{\infty}
\mathbb E^{\mathbb P}
|I_t^m-I_t|^2
\leq
\sum_{m=1}^{\infty}
\mathbb E^{\mathbb P}
\int_0^T
\|\sigma_s^m-\sigma_s\|_{\mathrm F}^2\,ds
<\infty
\label{eq:fiber-int-P}
\end{equation}
for every $t\leq T$.

Disintegrating \eqref{eq:fiber-sigma-summable} with respect to $X_0$
and using Tonelli's theorem gives
\begin{align}
&\int_{\mathbb R^n}
\sum_{m=1}^{\infty}
\mathbb E^{\mathbb P_x}
\int_0^T
\|\sigma^m(s,X_s)-\sigma(s,X_s)\|_{\mathrm F}^2\,ds
\,\mu_0(dx)
 =
\sum_{m=1}^{\infty}
\mathbb E^{\mathbb P}
\int_0^T
\|\sigma^m(s,X_s)-\sigma(s,X_s)\|_{\mathrm F}^2\,ds
<\infty.
\end{align}
Hence, outside a $\mu_0$-null set,
\begin{equation}
\mathbb E^{\mathbb P_x}
\int_0^T
\|\sigma^m(s,X_s)-\sigma(s,X_s)\|_{\mathrm F}^2\,ds
\longrightarrow0.
\label{eq:fiber-integrand-conv}
\end{equation}

By Lemma~\ref{lem:fiber-brownian}, after enlarging the exceptional
set if necessary, $B$ is a
$(\mathcal G_t^x,\mathbb P_x)$-Brownian motion. Therefore,
$\sigma$ is square-integrable and predictable under $\mathbb P_x$.
It\^o's isometry under $\mathbb P_x$ and
\eqref{eq:fiber-integrand-conv} imply
\begin{equation}
I_t^m
\longrightarrow
I_t^x
:=
\int_0^t\sigma_s\,dB_s
\quad\text{in }L^2(\mathbb P_x)
\label{eq:fiber-Ix}
\end{equation}
for every $t\leq T$.

We now identify $I_t^x$ with the stochastic integral $I_t$ appearing
in the original $\mathbb P$-SDE. Fix
$t\in\mathbb Q\cap[0,T]$. Disintegrating
\eqref{eq:fiber-int-P} yields
\begin{align}
&\int_{\mathbb R^n}
\sum_{m=1}^{\infty}
\mathbb E^{\mathbb P_x}
|I_t^m-I_t|^2
\,\mu_0(dx)
=
\sum_{m=1}^{\infty}
\mathbb E^{\mathbb P}
|I_t^m-I_t|^2
<\infty.
\end{align}
Thus, for $\mu_0$-a.e.\ $x$,
\[
I_t^m\longrightarrow I_t
\quad\text{in }L^2(\mathbb P_x).
\]
Comparing this with \eqref{eq:fiber-Ix} and using uniqueness of the
$L^2(\mathbb P_x)$ limit gives
\begin{equation}
I_t
=
\int_0^t\sigma(s,X_s)\,dB_s
\qquad
\mathbb P_x\text{-a.s.}
\label{eq:fiber-integral-identification}
\end{equation}
for $\mu_0$-a.e.\ $x$. Since $\mathbb Q\cap[0,T]$ is countable, a
single $\mu_0$-null set can be chosen so that
\eqref{eq:fiber-integral-identification} holds simultaneously for all
rational $t\in[0,T]$.

For each rational $t\in[0,T]$, the original SDE identity
\[
X_t
=
X_0+\int_0^t b(s,X_s)\,ds+I_t
\]
holds $\mathbb P$-a.s. Disintegrating this probability-one event with
respect to $X_0$ and using countability of the rational $t$ gives,
outside one $\mu_0$-null set,
\[
X_t
=
x+\int_0^t b(s,X_s)\,ds
+\int_0^t\sigma(s,X_s)\,dB_s
\]
for every rational $t\in[0,T]$, $\mathbb P_x$-a.s., which follows from the definition of regular conditional probability directly.

{Finally, $X$ has a.s. continuous paths,}
$
t\longmapsto\int_0^t b(s,X_s)\,ds
$
is continuous, and
$
t\longmapsto\int_0^t\sigma(s,X_s)\,dB_s
$
has a continuous version under $\mathbb P_x$. Hence the preceding
{identity extends from rational $t$ to every $t\in[0,T]$. Since $X$}
is adapted to $(\mathcal G_t^x)_t$ and, by
Lemma~\ref{lem:fiber-brownian}, $B$ is a
$(\mathcal G_t^x,\mathbb P_x)$-Brownian motion, $(X,B)$ is a weak
solution of \eqref{eq:s} under $\mathbb P_x$ with initial condition
$x$.
\end{proof}

\begin{proof}[Proof of Theorem~\ref{thrm:rep}]
Let $(X,B)$ be the given weak solution and let
$(\mathbb P_x)_{x\in\mathbb R^n}$ be a regular conditional probability of
$\mathbb P$ given $X_0$:
\[
\mathbb P_x(\,\cdot\,) = \mathbb P(\,\cdot\,\mid X_0=x),
\]
whose existence follows from the running assumptions, see
\cite[Theorem~$I.3.3$]{ikeda2014stochastic}.

For each $x\in\mathbb R^n$, let $ \mathcal G_t^{0} := \sigma(X_s,B_s:0\leq s\leq t), $ and let $(\mathcal G_t^x)_{t\leq T}$ be its $\mathbb P_x$-usual augmentation. By Lemma~\ref{lem:fiber-brownian}, for $\mu_0$-a.e.\ $x$, $B$ is a $d$-dimensional $(\mathcal G_t^x,\mathbb P_x)$-Brownian motion. Moreover, by Lemma~\ref{lem:fiber-sde}, after removing a further $\mu_0$-null set, $ \mathbb P_x(X_0=x)=1 $ and \[ X_t = x+\int_0^t b(s,X_s)\,ds +\int_0^t\sigma(s,X_s)\,dB_s, \qquad 0\leq t\leq T, \quad \mathbb P_x\text{-a.s.} \] Therefore, for $\mu_0$-a.e.\ $x$, $(X,B)$ is a weak solution of \eqref{eq:s} with initial condition $x$ on $(\Omega,\mathcal F,(\mathcal G_t^x)_{t\leq T},\mathbb P_x)$.

{For any given probability measure $\mathbb Q$, write}
$\mathcal F^{\mathbb Q}(X)$ for the usual augmentation of the raw state
{filtration $(\mathcal F^0_t(X))_t$ under $\mathbb Q$.}

We argue by orthogonality as in \cite[Theorem 1]{ust_mart}, now
disintegrating with respect to $\mathbb P_x$.  Let $\mathcal K$ be the closed
subspace of $L^2(\mathcal F_T^{\mathbb P}(X),\mathbb P;\mathbb R)$ generated
by random variables of the
form 
\[
g(X_0)+\int_0^T \langle H_t\,,dB_t\rangle ,
\]
where $g(X_0)\in L^2(\mathcal F_0^{\mathbb P}(X))$, $H$ is
{$\mathcal F^{\mathbb P}(X)$-predictable, square-integrable w.r.t. Lebesgue measure on $[0,T]$, and}
$H_t=P_t(X)H_t$ $dt\otimes d\mathbb P$-a.e. 
It suffices to show $\mathcal K^\perp=\{0\}$, where $\mathcal K^\perp$ is the orthogonal complement of $\mathcal K$ under the $L^2$-norm.  Fix
$Z\in\mathcal K^\perp$.  Choose an $\mathcal F_T^0(X)$-measurable
representative equal to $Z$ $\mathbb P$-a.s.\ and continue to denote it by
$Z$.  Disintegration transfers this equality to $\mathbb P_x$-a.s.\ for
$\mu_0$-a.e.\ $x$; in particular, $Z$ is
$\mathcal F_T^{\mathbb P_x}(X)$-measurable on those fibers.  Since
$\mathcal F_0^{\mathbb P}(X)=\sigma(X_0)\vee\mathcal N_{\mathbb P}$,
orthogonality to every $g(X_0)$ gives
$\mathbb E^{\mathbb P}[Z\mid X_0]=0$.

Let $\mathscr P^0(X)$ denote the predictable $\sigma$-field associated
with the raw filtration $(\mathcal F_t^0(X))_{t\in[0,T]}$
\cite[Chapter I, Exercise 4.20]{revuz2013continuous}.  Since the continuous
path space $C([0,T];\mathbb R^n)$ is Polish, the raw state filtration $(\mathcal F_t(X))_t$ and
its predictable $\sigma$-field are countably generated \cite[Proposition  2.19]{MR2280298}.  Hence, we may
choose a countable family of bounded,
\(\mathbb R^d\)-valued, \(\mathscr P^0(X)\)-measurable simple processes
generated by a countable algebra (because the underlying $L^2$ space is separable).  We define \((J^m)_{m\ge1}\) to be an
enumeration of all rational finite linear combinations of this family.
Then, \((J^m)_{m\ge1}\) itself is dense in
\[
L^2\bigl(\mathscr P^0(X),dt\otimes \mathbb Q;\mathbb R^d\bigr)
\]
for every probability measure \(\mathbb Q\).

For \(m\ge1\), set
\[
H_t^m:=P_t(X)J_t^m.
\]
Since \(X\) is continuous and adapted to its raw filtration
\((\mathcal F_t^0(X))_{t\in[0,T]}\), it is
\(\mathcal F^0(X)\)-predictable. Since
\((t,x)\mapsto \sigma(t,x)\) is Borel measurable,
\((\sigma(t,X_t))_t\) is \(\mathscr P^0(X)\)-measurable.
Moreover, as argued in the proof of Lemma~\ref{lem:projection-measurable}, the map
\[
a\mapsto a^\top(aa^\top)^\dagger a
\]
is Borel measurable. Hence \((P_t(X))_t\) is
\(\mathscr P^0(X)\)-measurable (and predictable). Therefore, \(H^m\) is
\(\mathscr P^0(X)\)-measurable, and hence predictable with respect to
\(\mathcal F^{\mathbb P}(X)\). Since \(J^m\) is bounded and \(P_t(X)\)
is an orthogonal projection, \(H^m\) is square-integrable. Furthermore,
\[
P_t(X)H_t^m
=
P_t(X)^2J_t^m
=
P_t(X)J_t^m
=
H_t^m,
\qquad
dt\otimes d\mathbb P\text{-a.e.}
\]
For each $m$, since $H^m$ is $\mathscr P^0(X)$-measurable and
square-integrable, there exists a sequence of raw elementary
predictable processes $(H^{m,k})_{k\geq1}$ such that
\[
\mathbb E^{\mathbb P}\int_0^T
|H_t^{m,k}-H_t^m|^2\,dt
\longrightarrow0,
\]
see \cite[Proposition 4.6]{hairer2016advanced}.
Passing to a subsequence, we may assume
that
\[
\sum_{k=1}^{\infty}
\mathbb E^{\mathbb P}\int_0^T
|H_t^{m,k}-H_t^m|^2\,dt
<\infty.
\]

Disintegration and Tonelli's theorem then show, simultaneously for all
$m$, that $H^{m,k}\to H^m$ in
$L^2(dt\otimes d\mathbb P_x)$ for $\mu_0$-a.e.\ $x$.  Choose the
$\mathbb P$-version of $\int H^m\,dB$ as the $L^2(\mathbb P)$ limit of
the corresponding raw elementary integrals.  Applying the same summable
subsequence argument to the integral errors and using It\^o's isometry under
both $\mathbb P$ and $\mathbb P_x$ shows that this fixed version is also
the $\mathbb P_x$-It\^o integral for $\mu_0$-a.e.\ $x$.  Intersecting the
resulting countably many full-measure sets makes this choice valid for every
$m$ below.

Hence, for every Borel set \(A\subset\mathbb R^n\), $L^2$-orthogonality of
\(Z\) to \(\mathcal K\) gives
\[
0
=
\mathbb E^{\mathbb P}
\left[
Z\,1_{\{X_0\in A\}}
\int_0^T \langle H_t^m\,,dB_t\rangle 
\right] 
=
\mathbb E^{\mathbb P}
\left[
Z\,
\int_0^T \langle 1_{\{X_0\in A\}}\,H_t^m\,,dB_t\rangle
\right].
\]
Therefore,
\begin{equation}\label{eq:conditional-zero}
\mathbb E^{\mathbb P} \left[ Z\int_0^T \langle H_t^m\,,dB_t\rangle \,\middle|\,X_0 \right] 
=0
\implies 
\mathbb E^{\mathbb P_x} \left[Z\int_0^T \langle H_t^m\,,dB_t\rangle\right]=0.
\end{equation}
Since the family \((H^m)_{m\ge1}\) is countable, there exists a
single set \(N\subset\mathbb R^n\), with \(\mu_0(N)=0\), such that
the preceding equality holds simultaneously for every \(m\ge1\)
and every \(x\notin N\).

Now fix $x\notin N$ such that
$\mathbb E^{\mathbb P_x}[Z]=0$, $Z\in L^2(\mathbb P_x)$, and the
fixed-initial-condition hypotheses hold.  Let
$H\in L_a^2\bigl(\mathcal F^{\mathbb P_x}(X),\mathbb P_x;\mathbb R^d\bigr)$
satisfy
$
P_t(X)H_t=H_t
$
$dt\otimes d\mathbb P_x$-a.e.,
where $(P_t(X))_t$ is $\mathcal F^{\mathbb P_x}(X)$-adapted by Lemma \ref{lem:projection-measurable}.
Since the predictable $\sigma$-field of the augmented filtration is the
$dt\otimes\mathbb P_x$-completion of $\mathscr P^0(X)$, the process $H$
admits a $\mathscr P^0(X)$-measurable representative. In particular, by density of
\((J^m)_{m\ge1}\), there exists a sequence \((J^{m_k})_{k\ge1}\)
such that
\[
\mathbb E^{\mathbb P_x}
\int_0^T
\left|J_t^{m_k}-H_t\right|^2\,dt
\longrightarrow0.
\]
Since \(P_t(X)\) is an orthogonal projection and
\(P_t(X)H_t=H_t\) $dt\otimes d\mathbb P_x$-a.e.,
\[
\begin{aligned}
\mathbb E^{\mathbb P_x}
\int_0^T
\left|H_t^{m_k}-H_t\right|^2\,dt
&=
\mathbb E^{\mathbb P_x}
\int_0^T
\left|
P_t(X)J_t^{m_k}-P_t(X)H_t
\right|^2\,dt \\
&\le
\mathbb E^{\mathbb P_x}
\int_0^T
\left|J_t^{m_k}-H_t\right|^2\,dt
\longrightarrow0.
\end{aligned}
\]
By It\^o's isometry under \(\mathbb P_x\),
\[
\int_0^T \langle H_t^{m_k}\,,dB_t\rangle 
\longrightarrow
\int_0^T \langle H_t\,,dB_t \rangle
\qquad
\text{in }L^2(\mathbb P_x).
\]
Since \(Z\in L^2(\mathbb P_x)\), by Cauchy-Schwarz and It\^o isometry, it follows that
\begin{equation}\label{eq:fiber-orthogonality}
\mathbb E^{\mathbb P_x}
\left[
Z\int_0^T \langle H_t\,,dB_t \rangle 
\right]
=
\lim_{k\to\infty}
\mathbb E^{\mathbb P_x}
\left[
Z\int_0^T \langle H_t^{m_k}\,,dB_t \rangle
\right]
=0.
\end{equation}
Thus $Z$ is orthogonal under $\mathbb P_x$ to every stochastic integral
$\int_0^T\langle H_s,dB_s\rangle$ with
$H\in L_a^2(\mathcal F^{\mathbb P_x}(X),\mathbb P_x;\mathbb R^d)$ and
$P_t(X)H_t=H_t$ $dt\otimes d\mathbb P_x$-a.e.

Under $\mathbb P_x$, $X_0=x$.  The fixed-initial representation theorem
\cite[Theorem 1]{ust_mart}, together with
\eqref{eq:conditional-zero}--\eqref{eq:fiber-orthogonality}, therefore gives
$Z=0$ $\mathbb P_x$-a.s.
for $\mu_0$-a.e.\ $x$.
Integrating with respect to \(\mu_0\),
\[
\mathbb P(Z\neq0)
=
\int
\mathbb P_x(Z\neq0)\,\mu_0(dx)
=0.
\]
Thus $\mathcal K^\perp=\{0\}$. It remains to show that closure of $\mathcal K$ in $L^2$ consist of terms of the form
\[
g(X_0) + \int_0^T \langle H_s,dB_s\rangle,
\]
where $P_t(X)H_t = H_t$ $dt \otimes d\mathbb P$-a.e.

To show this, we closely follow the proof of \cite[Theorem 2]{ust_mart}.

Let $
\widetilde F
:=
F-\mathbb E^{\mathbb P}[F\mid X_0],
$
i.e.,
$
\mathbb E^{\mathbb P}
\bigl[\widetilde F\mid X_0\bigr]=0.
$
By the preceding density argument, there exists a sequence
\((F^m)_{m\geq1}\subset L^2(\mathcal F_T^{\mathbb P}(X),\mathbb P)\)
such that
$
F^m\rightarrow \widetilde F
$ 
in $L^2(\mathbb P),$
and, for every $m$,
\[
F^m
=
\int_0^T\langle H_t^m,dB_t\rangle,
\qquad
H_t^m=P_t(X)H_t^m
\quad dt\otimes d\mathbb P\text{-a.e.}
\]
Since \(F^m\to\widetilde F\) in \(L^2(\mathbb P)\), It\^o's isometry gives
\[
\lim_{m,k\to\infty}
\mathbb E^{\mathbb P}
\int_0^T
|H_t^m-H_t^k|^2\,dt
=
\lim_{m,k\to\infty}
\mathbb E^{\mathbb P}
|F^m-F^k|^2
=
0.
\]
Hence \((H^m)_{m\geq1}\) converges in
\(L^2(dt\otimes d\mathbb P;\mathbb R^d)\) to some predictable process
\(H\). Since \(P_t(X)\) is an orthogonal projection and
\(P_t(X)H_t^m=H_t^m\), we also have
$
P_t(X)H_t=H_t
$ $dt\otimes d\mathbb P$-a.e.
Indeed,
\[
\mathbb E^{\mathbb P}\int_0^T
|P_t(X)H_t-H_t|^2\,dt
\leq
4\lim_{m\to\infty}
\mathbb E^{\mathbb P}\int_0^T
|H_t-H_t^m|^2\,dt
=0.
\]
Applying It\^o's isometry once more,
\[
\int_0^T\langle H_t^m,dB_t\rangle
\longrightarrow
\int_0^T\langle H_t,dB_t\rangle
\qquad\text{in }L^2(\mathbb P).
\]
Therefore,
\[
\widetilde F
=
\int_0^T\langle H_t,dB_t\rangle
\qquad\mathbb P\text{-a.s.},
\]
and consequently
\[
F
=
\mathbb E^{\mathbb P}[F\mid X_0]
+
\int_0^T\langle H_t,dB_t\rangle,
\qquad
H_t=P_t(X)H_t
\quad dt\otimes d\mathbb P\text{-a.e.}
\]
This proves the assertion.
\end{proof}

\subsection{Proof of Theorem \ref{thrm}}

\begin{proof}[Proof of Theorem \ref{thrm}]
Since $X_0$ is $\mathcal G_0$-measurable and $B$ is a
$(\mathcal G_t)$-Brownian motion,
\[
X_0\ \perp\!\!\!\perp\ \sigma(B_s:0\leq s\leq T).
\]
Thus, the enlargement of the Brownian filtration by $X_0$ is independent,
and the equivalence condition in
\cite[Theorem 4.2]{amendinger2000martingale} is satisfied.  Consequently,
\eqref{bm:rep} follows in $(\mathcal H_t)_{t\leq T}$; equivalently, one may
apply the Brownian martingale representation theorem conditionally on
$X_0$.  For the rest of the proof, we follow the projection argument of
\cite[Theorem 3]{ust_mart}.

Let $\widehat\xi$ be the $L^2(dt\otimes d\mathbb P)$ predictable
projection of $\xi$ onto the closed subspace of
$\mathcal F(X)$-predictable processes \cite[Exercise 5.14]{revuz2013continuous}.  Then,
\[
\mathbb E^{\mathbb P}\int_0^T |\widehat\xi_t|^2dt
\leq
\mathbb E^{\mathbb P} \int_0^T |\xi_t|^2dt<\infty,
\]
and, for every square-integrable
$\mathcal F(X)$-predictable process $H$,
\begin{equation}\label{eq:adapted-projection-xi}
\mathbb E^{\mathbb P}
\int_0^T
\left\langle
\xi_t-\widehat\xi_t,H_t
\right\rangle dt
=0.
\end{equation}

Define $K_t:=P_t(X)\widehat\xi_t$.
By Lemma~\ref{lem:projection-measurable}, $P_t(X)$ is
$\mathcal F(X)$-predictable. Hence $K$ is an
$\mathcal F(X)$-predictable process. Moreover, since $P_t(X)$ is an
orthogonal projection with respect to the Euclidean metric, we have
$ |K_t| \leq |\widehat\xi_t|$ $dt \otimes \mathbb P$-a.e.,
and therefore $K$ is square-integrable. Also,
\begin{equation}\label{eq:hor}
P_t(X)K_t = P_t(X)^2\widehat\xi_t = K_t
\qquad
dt\otimes d\mathbb P\text{-a.e.}
\end{equation}

Set
\[
\widetilde F
:=
\mathbb E^{\mathbb P}[F\mid X_0]
+
\int_0^T\langle K_t,dB_t\rangle.
\]
By Theorem~\ref{thrm:rep}, since $K$ is
$\mathcal F(X)$-predictable, square-integrable, and \eqref{eq:hor} holds,
we have $\widetilde F\in L^2(\mathcal F_T(X),\mathbb P).$
We claim that
$\widetilde F = \mathbb E^{\mathbb P}[F\mid\mathcal F_T(X)].$
It is therefore enough to prove that
\[
\mathbb E^{\mathbb P}[(F-\widetilde F)Z]=0
\qquad
\text{for every }
Z\in L^2(\mathcal F_T(X),\mathbb P;\mathbb R).
\]

Fix such a $Z$. By Theorem~\ref{thrm:rep}, there exists a
square-integrable $\mathcal F(X)$-predictable process $H$ satisfying
$
P_t(X)H_t=H_t
$
$dt\otimes d\mathbb P$-a.e. such that
\begin{equation}\label{eq:Z--representation}
Z
=
\mathbb E^{\mathbb P}[Z\mid X_0]
+
\int_0^T\langle H_t,dB_t\rangle.
\end{equation}

On the other hand, by the initially enlarged-filtration Brownian martingale
representation established above,
\[
F = \mathbb E^{\mathbb P}[F\mid X_0] + \int_0^T\langle\xi_t,dB_t\rangle \implies F-\widetilde F = \int_0^T \langle \xi_t-K_t,dB_t\rangle.
\]
In particular, $\mathbb E^{\mathbb P}[F-\widetilde F\mid X_0]=0$ $\mathbb P$-a.e.

Using \eqref{eq:Z--representation}, we therefore obtain
\[
\begin{aligned}
\mathbb E^{\mathbb P}[(F-\widetilde F)Z]
&=
\mathbb E^{\mathbb P}\left[
(F-\widetilde F)\mathbb E^{\mathbb P}[Z\mid X_0]
\right]
\\
&\quad+
\mathbb E^{\mathbb P}\left[
\left(
\int_0^T
\langle\xi_t-K_t,dB_t\rangle
\right)
\left(
\int_0^T
\langle H_t,dB_t\rangle
\right)
\right].
\end{aligned}
\]
The first term is zero by conditioning on $X_0$. By It\^o's isometry, we have
\[
\mathbb E^{\mathbb P}\left[
\left(
\int_0^T
\langle\xi_t-K_t,dB_t\rangle
\right)
\left(
\int_0^T
\langle H_t,dB_t\rangle
\right)
\right]=
\mathbb E^{\mathbb P}
\int_0^T
\langle \xi_t-K_t,H_t\rangle dt.
\]
Since $K_t=P_t(X)\widehat\xi_t$, the symmetry of the orthogonal
projection $P_t(X)$ and the invariance of $H_t$ under $P_t(X)$ yield
\begin{align*}
\langle K_t,H_t\rangle
&=
\left\langle
P_t(X)\widehat\xi_t,H_t
\right\rangle
\\
&=
\left\langle
\widehat\xi_t,P_t(X)H_t
\right\rangle
\\
&=
\langle\widehat\xi_t,H_t\rangle.
\end{align*}
Consequently,
\[
\mathbb E^{\mathbb P}[(F-\widetilde F)Z]
=
\mathbb E^{\mathbb P}
\int_0^T
\langle\xi_t-\widehat\xi_t,H_t\rangle dt.
\]
Since $H$ is $\mathcal F(X)$-predictable, the defining
orthogonality property of \eqref{eq:adapted-projection-xi} gives
\[
\mathbb E^{\mathbb P}[(F-\widetilde F)Z]=0.
\]
Since this holds for every
$Z\in L^2(\mathcal F_T(X))$ and
$\widetilde F\in L^2(\mathcal F_T(X))$, the Hilbert-space
characterization of conditional expectation implies
$
\widetilde F
=
\mathbb E^{\mathbb P}[F\mid\mathcal F_T(X)]
$ $\mathbb P$-a.s.
Therefore,
\[
\mathbb E^{\mathbb P}[F\mid\mathcal F_T(X)]
=
\mathbb E^{\mathbb P}[F\mid X_0]
+
\int_0^T
\left\langle
P_t(X)\widehat\xi_t,
dB_t
\right\rangle.
\]
This proves the result.
\end{proof}

\subsection{Proof of Theorem \ref{thm:main}}

\begin{proof}[Proof of Theorem \ref{thm:main}]
Fix $r<T$ and choose $R$ such that $r<R<T$.  By
Lemma~\ref{lem:density-martingale}, on $[0,R]$,
\[
Z_t(X)=Z_0(X)+\int_0^t\langle\xi_s,dB_s\rangle,
\qquad P_s(X)\xi_s=\xi_s.
\]
Set
\[
u_t:=\frac{\xi_t}{Z_t(X)}
\quad\text{on }\{Z_t(X)>0\},
\]
and set $u_t=0$ on $\{Z_t(X)=0\}$.  This set is
$dt\otimes d\mathbb P^y$-null because
$\mathbb P^y(Z_t(X)=0)=\mathbb E^{\mathbb P}
[Z_t(X)\mathbf1_{\{Z_t(X)=0\}}]=0$.

For $n\geq1$, define the stopping time
\[
\tau_n^R
:=
R\wedge\inf\left\{t\geq0:
Z_t(X)\notin\left(\frac1n,n\right)\right\},
\]
with $\inf\varnothing=\infty$.  On $[0,\tau_n^R]$,
$1/n\leq Z_t(X)\leq n$, and
\[
dZ_t(X)=Z_t(X)\langle u_t,dB_t\rangle.
\]
The change of measure separates into an
initial change by the $\mathcal G_0$-measurable density $Z_0$ and a dynamic
change generated by $u$.  The initial change leaves Brownian increments
Brownian, since they are independent of $\mathcal G_0$; on
$\{Z_0>0\}$, the normalized density $Z_t/Z_0$ generates the dynamic
Girsanov shift.  Thus, Girsanov's theorem over $(\mathcal F_t(X))_t$ shows that
\[
M_{t\wedge\tau_n^R}
:=
\int_0^{t\wedge\tau_n^R}P_s(X)\,dB_s
-\int_0^{t\wedge\tau_n^R}u_s\,ds
\]
is an $(\mathcal F_t(X),\mathbb P^y)$-local martingale.  It is in fact
square-integrable, since
\[
\mathbb E^{\mathbb P^y}\int_0^{\tau_n^R}|u_s|^2ds
=
\mathbb E^{\mathbb P}\int_0^{\tau_n^R}
\frac{|\xi_s|^2}{Z_s(X)}\,ds
\leq
n\,\mathbb E^{\mathbb P}\int_0^R|\xi_s|^2ds<\infty.
\]

We next remove the localization on the fixed interval $[0,R]$.  Since
$Z$ is a nonnegative continuous martingale, it remains zero after its
first hitting time of zero.  Therefore, for
\[
A_R:=\left\{\inf_{0\leq s\leq R}Z_s(X)=0\right\},
\]
we have $Z_R(X)=0$ on $A_R$, and thus
\[
\mathbb P^y(A_R)=\mathbb E^{\mathbb P}[Z_R(X)\mathbf1_{A_R}]=0.
\]
On $A_R^c$, continuity gives
$0<\inf_{s\leq R}Z_s(X)\leq\sup_{s\leq R}Z_s(X)<\infty$.
On $A_R$, the stopped values converge to zero, which equals $Z_R$ because a
nonnegative continuous martingale remains zero after its first hitting time
of zero.  Consequently, pointwise,
\[
Z_{\tau_n^R}(X)\longrightarrow Z_R(X)
\qquad \mathbb P\text{-a.s.},
\]
and $\tau_n^R\nearrow R$ $\mathbb P^y$-a.s.

Moreover, It\^o's formula for $\log Z$ on $[0,\tau_n^R]$ gives
\[
\frac12\mathbb E^{\mathbb P^y}
\int_0^{\tau_n^R}|u_s|^2ds
=
\mathbb E^{\mathbb P^y}
\left[
\log\frac{Z_{\tau_n^R}(X)}{Z_0(X)}
\right]
=
\mathbb E^{\mathbb P}
\left[
Z_{\tau_n^R}(X)\log Z_{\tau_n^R}(X)
\right]
-
\mathbb E^{\mathbb P}
\left[
Z_0(X)\log Z_0(X)
\right].
\]
By Doob's Optional Sampling Theorem, we obtain
$Z_{\tau_n^R}=\mathbb E^{\mathbb P}[Z_R\mid
\mathcal F_{\tau_n^R}(X)]$. In particular, conditional Jensen's inequality gives
\[
Z_{\tau_n^R}^2
\leq
\mathbb E^{\mathbb P}
[Z_R^2\mid\mathcal F_{\tau_n^R}(X)].
\]
The conditional expectations on the right form a uniformly integrable
family.  Hence $(Z_{\tau_n^R}^2)_n$ is uniformly integrable.  Since
$(z\log z)^+\leq z^2$ and $(z\log z)^-\leq e^{-1}$, the family
$(Z_{\tau_n^R}\log Z_{\tau_n^R})_n$ is uniformly integrable as well.
Together with the almost-sure convergence above, by Vitali's convergence theorem,
\[
\mathbb E^{\mathbb P}
[Z_{\tau_n^R}\log Z_{\tau_n^R}]
\longrightarrow
\mathbb E^{\mathbb P}[Z_R\log Z_R].
\]
Using monotone convergence on $\int_0^{\tau_n^R}|u_s|^2ds = \int_0^{\infty}|u_s|^21_{\{s \le \tau_n^R\}}ds $, we obtain $\lim_{n \to \infty} \int_0^{\tau_n^R}|u_s|^2ds= \int_0^{R}|u_s|^2ds$. Thus, we obtain both square integrability and the exact identity
\begin{equation}\label{eq:exact-localized-entropy}
\frac12\mathbb E^{\mathbb P^y}\int_0^R|u_s|^2ds
=
\mathbb E^{\mathbb P}[Z_R\log Z_R]
-\mathbb E^{\mathbb P}[Z_0\log Z_0].
\end{equation}
Since $\tau_n^R\nearrow R$ $\mathbb P^y$-a.s. and
$M^{\tau_n^R}$ is a martingale for every $n$, the process
\[
M_t
:=
\int_0^t P_s(X)\,dB_s
-
\int_0^t u_s\,ds,
\qquad 0\leq t\leq R,
\]
is a continuous $(\mathcal F_t(X),\mathbb P^y)$-local martingale.
Its matrix quadratic variation is
\[
\langle M\rangle_t
=
\int_0^tP_s(X)P_s(X)^\top\,ds
=\int_0^tP_s(X)\,ds \implies 
\operatorname{tr}\langle M\rangle_R
=
\int_0^R \operatorname{tr}\bigl(P_s(X)\bigr)\,ds
\leq d\,R.
\]
Hence, by \cite[Chapter IV, Proposition 1.23]{revuz2013continuous}
(applied componentwise), $M$ is a square-integrable
$(\mathcal F_t(X),\mathbb P^y)$-martingale on $[0,R]$.

{Define the {change of} measure $\mathbb Q^{y,R}$ obtained on}
$\mathcal G_R$ by
\[
\frac{d(\mathbb Q^{y,R}|_{\mathcal G_R})}
     {d(\mathbb P|_{\mathcal G_R})}=Z_R(X).
\]
The representation of $Z$ is a square-integrable
$(\mathcal G_t,\mathbb P)$-martingale, and thus $Z_t(X)$ is the density process
of $\mathbb Q^{y,R}$.  Moreover,
$\mathbb Q^{y,R}|_{\mathcal F_R(X)}
=\mathbb P^y|_{\mathcal F_R(X)}$.  The preceding localization and
Girsanov's theorem show that
\[
B_t^{y,R}:=B_t-\int_0^t u_s\,ds,
\qquad 0\leq t\leq R,
\]
is a $(\mathcal G_t,\mathbb Q^{y,R})$-Brownian motion.  Since
$\mathbb Q^{y,R}\ll\mathbb P$ on $\mathcal G_R$, the original SDE
formulation (\eqref{eq:s}) is preserved.  Substitution gives
\begin{equation}\label{eq:weak}
dX_t
=\bigl(b(t,X_t)+\sigma(t,X_t)u_t\bigr)dt
+\sigma(t,X_t)\,dB_t^{y,R},
\qquad 0\leq t\leq R.
\end{equation}
Consequently,
\[
\mathcal L_{\mathbb Q^{y,R}}(X_{[0,R]})
=\mathcal L_{\mathbb P}(X_{[0,R]}\mid Y=y).
\]

For $0<\varepsilon<R-r$ and $0\leq t\leq r$, the martingale property
of $M$ together with Theorem~\ref{thrm} give
\begin{align*}
\mathbb E^{\mathbb P^y} \left[
\left.\frac1\varepsilon\int_t^{t+\varepsilon}
P_s(X)\,dB_s\right|X_t\right]  =
\mathbb E^{\mathbb P^y}\left[
\left.\frac1\varepsilon\int_t^{t+\varepsilon}
u_s\,ds\right| X_t\right].
\end{align*}
Assumption~\ref{ass:feedback} gives the
$\sigma(X_t)$-measurable version $u_t=u_t(X_t,y)$.
As $u\in L^2([0,R]\times\Omega,dt\otimes d\mathbb P^y)$, by the Lebesgue differentiation theorem,
\[
\frac1\varepsilon\int_t^{t+\varepsilon}u_s\,ds
\longrightarrow u_t
\quad\text{in }
L^2([0,r]\times\Omega,dt\otimes d\mathbb P^y).
\]
Since $u_t$ is $\sigma(X_t)$-measurable and conditional expectation is an
$L^2$-contraction, the same convergence holds after conditioning.
Therefore
\[
\lim_{\varepsilon\downarrow0}
\mathbb E^{\mathbb P^y}\left[
\left.\frac1\varepsilon\int_t^{t+\varepsilon}
P_s(X)\,dB_s\right| X_t\right]
=u_t
=\frac{\xi_t}{Z_t(X)}
\]
in $L^2([0,r]\times\Omega,dt\otimes d\mathbb P^y)$.  Since $r<T$ was
arbitrary, the result follows on every compact subinterval of $[0,T)$, as desired.
\end{proof}

\subsection{Proof of Theorem \ref{thrm:causal}}

\begin{proof}[Proof of Theorem \ref{thrm:causal}]
Fix $r<T$. Throughout the proof, let
$ (\Omega,\mathcal F,(\mathcal G_t)_{0\leq t\leq r},\mathbb P) $
be the filtered probability space carrying a
weak solution $(X,B)$ of \eqref{eq:s}, where $B$ is a $(\mathcal G_t)$-Brownian
motion and
\[
\mathcal F_t(X)\subseteq\mathcal G_t,
\qquad 0\leq t\leq r.
\]
Thus, under $\mathbb P$,
\begin{equation}\label{eq:causal-original}
dX_t=b(t,X_t)\,dt+\sigma(t,X_t)\,dB_t.
\end{equation}

Let
\[
Z_t(X)
=
\frac{d(\mathbb P^y|_{\mathcal F_t(X)})}
     {d(\mathbb P|_{\mathcal F_t(X)})},
\qquad
u_t:=\frac{\xi_t}{Z_t(X)}
\quad\text{on }\{Z_t(X)>0\}.
\]
By Lemma~\ref{lem:density-martingale} and Theorem~\ref{thrm:rep}, there exists a square-integrable
 integrand $\xi=(\xi_s)_s$ on $[0,r]$ such that
\[
Z_t(X)
=
Z_0(X)+\int_0^t\langle \xi_s,dB_s\rangle, \qquad
Z_0(X)=\frac{d(\mathbb P^y|_{\mathcal F_0(X)})}
{d(\mathbb P|_{\mathcal F_0(X)})},
\qquad
P_s(X)\xi_s=\xi_s.
\]
In particular, since the integrand is $(\mathcal G_t)$-adapted,
$(Z_t(X))_{0\leq t\leq r}$ is also a $(\mathcal G_t,\mathbb P)$-
martingale.

{We now construct a probability measure under which the state process has the desired conditional law}. Define $\mathbb Q^{y,r}$ on
$(\Omega,\mathcal G_r)$ by
\begin{equation}\label{eq:ambient-y-measure}
\frac{d(\mathbb Q^{y,r}|_{\mathcal G_r})}
     {d(\mathbb P|_{\mathcal G_r})}
:=
Z_r(X).
\end{equation}
Since $Z_r(X)$ is $\mathcal F_r(X)$-measurable and
$\mathbb E^{\mathbb P}[Z_r(X)]=1$, this defines a probability measure.
Moreover, for every $A\in\mathcal F_r(X)$,
\[
\mathbb Q^{y,r}(A)
=
\mathbb E^{\mathbb P}[Z_r(X)\mathbf 1_A]
=
\mathbb P^y(A).
\]
Hence
\begin{equation}\label{eq:ambient-restriction}
\mathbb Q^{y,r}|_{\mathcal F_r(X)}
=
\mathbb P^y|_{\mathcal F_r(X)}.
\end{equation}

Furthermore, because $Z$ is a $(\mathcal G_t,\mathbb P)$-martingale
with terminal value $Z_r(X)$,
\[
Z_t(X)
=
\mathbb E^{\mathbb P}
\left[
Z_r(X)\mid\mathcal G_t
\right],
\qquad 0\leq t\leq r.
\]
Thus, $Z$ is the Radon-Nikodym derivative of $\mathbb Q^{y,r}$ with respect
to $\mathbb P$ on $(\mathcal G_t)_t$.

As proved by the localization argument in Theorem \ref{thm:main},
under $\mathbb Q^{y,r}$, the process
\[
B_t^{y,r}
:=
B_t-\int_0^t u_s\,ds,
\qquad 0\leq t\leq r,
\]
is a $(\mathcal G_t)$-Brownian motion. Since
\eqref{eq:causal-original} holds $\mathbb P$-a.s. and
$\mathbb Q^{y,r}\ll\mathbb P$, the same stochastic identity holds
$\mathbb Q^{y,r}$-a.s. Substituting
$dB_t=dB_t^{y,r}+u_tdt$ gives
\begin{equation}\label{eq:canonical-controlled-sde}
dX_t
=
\bigl(
b(t,X_t)+\sigma(t,X_t)u_t
\bigr)dt
+
\sigma(t,X_t)dB_t^{y,r}.
\end{equation}
Consequently,
\[
\bigl(
\Omega,\mathcal F,(\mathcal G_t)_{t\leq r},
\mathbb Q^{y,r},X,B^{y,r}
\bigr)
\]
is a weak solution of \eqref{eq:s2} with control $u$.

By \eqref{eq:ambient-restriction},
\[
\mathcal L_{\mathbb Q^{y,r}}(X_{[0,r]})
=
\mathcal L_{\mathbb P^y}(X_{[0,r]})
=
\mathcal L_{\mathbb P}(X_{[0,r]}\mid Y=y).
\]
{Thus, the solution of the controlled equation we have obtained via Girsanov shift has the conditioned state-path law $\mathcal L_{\mathbb P}(X_{[0,r]}\mid Y=y)$.}

We next prove its minimum-energy property.  Applying the exact limit identity
\eqref{eq:exact-localized-entropy} with $R=r$ yields
\begin{equation}\label{eq:canonical-entropy}
D_{\mathrm{KL}}
\left(
\mathbb P^y|_{\mathcal F_r(X)}
\,\middle\Vert\,
\mathbb P|_{\mathcal F_r(X)}
\right)
= D_{\mathrm{KL}}
\left(
\mathbb P^y|_{\mathcal F_0(X)}
\,\middle\Vert\,
\mathbb P|_{\mathcal F_0(X)}
\right)+
\frac12
\mathbb E^{\mathbb P^y}
\int_0^r|u_t|^2dt.
\end{equation}
Since $\int_0^r|u_t|^2dt$ is $\mathcal F_r(X)$-measurable and
$\mathbb Q^{y,r}$ and $\mathbb P^y$ agree on $\mathcal F_r(X)$,
equivalently,
\[
\frac12
\mathbb E^{\mathbb Q^{y,r}}
\int_0^r|u_t|^2dt
=
D_{\mathrm{KL}}
\left(
\mathbb P^y|_{\mathcal F_r(X)}
\,\middle\Vert\,
\mathbb P|_{\mathcal F_r(X)}
\right)
-
D_{\mathrm{KL}}
\left(
\mathbb P^y|_{\mathcal F_0(X)}
\,\middle\Vert\,
\mathbb P|_{\mathcal F_0(X)}
\right).
\]

{Let $(\mathbb Q,v)\in\mathcal A_r^y$ be such that $\mathcal L_{\mathbb Q}(X_{[0,r]}) = \mathcal L_{\mathbb P}(X_{[0,r]}|Y=y)$. Then}
$\mathbb Q|_{\mathcal F_r(X)}=\mathbb P^y|_{\mathcal F_r(X)}$, and
data processing inequality gives
\begin{align*}
D_{\mathrm{KL}}
\left(
\mathbb P^y|_{\mathcal F_r(X)}
\,\middle\Vert\,
\mathbb P|_{\mathcal F_r(X)}
\right)
&\leq D_{\mathrm{KL}}(\mathbb Q\Vert\mathbb P)\\
&=H_0^y+\frac12\mathbb E^{\mathbb Q}
\int_0^r|v_t|^2dt.
\end{align*}
Combining this inequality with \eqref{eq:canonical-entropy} yields
\[
\mathbb E^{\mathbb Q}\int_0^r|v_t|^2dt
\geq
\mathbb E^{\mathbb P^y}\int_0^r|u_t|^2dt.
\]

{The pair $(\mathbb Q^{y,r},u)$ has initial density $Z_0(X)$, and the construction}
above shows that $B^{y,r}=B-\int_0^\cdot u_s\,ds$ is Brownian under
$\mathbb Q^{y,r}$; indeed,
$\mathbb E^{\mathbb P}[Z_r(X)\mid\mathcal G_0]=Z_0(X)$.
Identity present in \eqref{eq:canonical-entropy} gives the
required entropy identity, and thus
$(\mathbb Q^{y,r},u)\in\mathcal A_r^y$ and it attains equality.

{If another admissible adapted Girsanov shift attains the same value, then the}
data-processing inequality used above must hold with equality.  By the
equality case of conditional Jensen's inequality,
\[
\frac{d\mathbb Q}{d\mathbb P}
=
\mathbb E^{\mathbb P}\left[
\left.
\frac{d\mathbb Q}{d\mathbb P}
\right|\mathcal F_r(X)
\right]
\qquad \mathbb P\text{-a.s.}
\]
Thus, \(d\mathbb Q/d\mathbb P\) admits an
\(\mathcal F_r(X)\)-measurable version.  Since its restriction to
\(\mathcal F_r(X)\) has density \(Z_r(X)\), it follows that
\[
\frac{d\mathbb Q}{d\mathbb P}
=
Z_r(X)
\qquad \mathbb P\text{-a.s.},
\]
and hence
\[
\mathbb Q=\mathbb Q^{y,r}
\qquad\text{on }\mathcal G_r.
\]
{Theorem~\ref{thrm:rep} was used above to identify the Girsanov shift}
\(u\) through the martingale representation of the density process, and
the corresponding Girsanov argument shows that
\(B-\int_0^\cdot u_s\,ds\) is a Brownian motion under
\(\mathbb Q^{y,r}\).  By admissibility,
\(B-\int_0^\cdot v_s\,ds\) is a Brownian motion under \(\mathbb Q\).
Under the common measure \(\mathbb Q=\mathbb Q^{y,r}\), uniqueness of
the continuous semimartingale decomposition therefore gives
\[
v_t=u_t
\qquad
dt\otimes d\mathbb Q^{y,r}\text{-a.e.}
\]
This proves the minimum-energy property and uniqueness.

\end{proof}

\section{Statement and Proof of Theorem~\ref{thm:degenerate-stochastic-control}}\label{app:d}

First, we provide a formal version of Theorem~\ref{thm:degenerate-stochastic-control}. For simplicity, we assume that $X_0$ is deterministic.

\begin{theorem}\label{thrm:10}
Suppose that $X_0=x_0$ is deterministic and Assumption~\ref{ass:1} holds.
Fix $r<T$, set $\mu_r:=\mathcal L_{\mathbb P}(X_r)$, and let $\nu_r$ be a
probability measure such that
\[
\nu_r\ll\mu_r,
\qquad
z_r:=\frac{d\nu_r}{d\mu_r}\in L^2(\mu_r).
\]
Let $\mathcal A_r$ be the class of pairs $(\mathbb Q,v)$ such that
$\mathbb Q\ll\mathbb P$ on $\mathcal G_r$, $v$ is
$(\mathcal G_t)$-predictable,
\[
B_t^{\mathbb Q}:=B_t-\int_0^t v_s\,ds
\]
is a $(\mathcal G_t,\mathbb Q)$-Brownian motion,
$\mathbb E^{\mathbb Q}\int_0^r|v_t|^2dt<\infty$.
Consider the stochastic optimal control problem
\begin{equation}
\label{eq:control-problem-detailed}
V(r,\nu_r)
:=
\inf_{(\mathbb Q,v)\in\mathcal A_r}
\mathbb E^{\mathbb Q}
\left[
\frac12\int_0^r|v_t|^2dt
-
\log z_r
\right],
\end{equation}
where the objective is interpreted in the extended sense.  Define
\[
Z_t^{\nu,r}(X)
:=
\mathbb E^{\mathbb P}
\left[
z_r\mid\mathcal F_t(X)
\right],
\qquad 0\leq t\leq r.
\]
Then, $Z_0^{\nu,r}(X)=1$, $Z_r^{\nu,r}(X)=z_r$, and there exists a unique
$\xi^{\nu,r}\in
L_a^2([0,r],\mathcal F(X),\mathbb P;\mathbb R^d)$ such that
\[
Z_t^{\nu,r}(X)
=
1+\int_0^t\langle\xi_s^{\nu,r},dB_s\rangle,
\qquad
P_t(X)\xi_t^{\nu,r}=\xi_t^{\nu,r}
\quad dt\otimes d\mathbb P\text{-a.e.}
\]
Set
\[
u_t^{\nu,r}
:=
\frac{\xi_t^{\nu,r}}{Z_t^{\nu,r}(X)}
\quad\text{on }\{Z_t^{\nu,r}(X)>0\},
\qquad
u_t^{\nu,r}:=0
\quad\text{otherwise},
\]
and define
\[
\frac{d(\mathbb Q^{\nu,r}|_{\mathcal G_r})}
     {d(\mathbb P|_{\mathcal G_r})}
=z_r(X_r).
\]
{Then we have $(\mathbb Q^{\nu,r},u^{\nu,r})\in\mathcal A_r$, such that}
$P_t(X)u_t^{\nu,r}=u_t^{\nu,r}$
$dt\otimes d\mathbb Q^{\nu,r}$-a.e., and
\[
B_t^{\nu,r}:=
B_t-\int_0^t u_s^{\nu,r}\,ds
\]
is a $(\mathcal G_t,\mathbb Q^{\nu,r})$-Brownian motion.  Under this
measure, the coordinate process satisfies
\begin{equation}
\label{eq:optimally-controlled}
dX_t
=
\left(
b(t,X_t)+\sigma(t,X_t)u_t^{\nu,r}
\right)dt
+
\sigma(t,X_t)dB_t^{\nu,r}.
\end{equation}
Moreover,
\[
\mathcal L_{\mathbb Q^{\nu,r}}(X_r)=\nu_r,
\qquad
V(r,\nu_r)=0,
\]
{and $u^{\nu,r}$ is the unique minimum-energy Girsanov shift realizing}
$\nu_r$:
\[
\frac12
\mathbb E^{\mathbb Q^{\nu,r}}
\int_0^r|u_t^{\nu,r}|^2dt
=
\inf_{\substack{(\mathbb Q,v)\in\mathcal A_r:\\
\mathcal L_{\mathbb Q}(X_r)=\nu_r}}
\frac12
\mathbb E^{\mathbb Q}
\int_0^r|v_t|^2dt.
\]

In particular, under Assumption~\ref{ass:2}, one may take
$\nu_r=\mathcal L_{\mathbb P}(X_r\mid Y=y)$.  We then write
$z_r^y$, $Z^{y,r}(X)$, $u^{y,r}$, and $\mathbb Q^{y,r}$ for the corresponding
objects.
\end{theorem}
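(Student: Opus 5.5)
The plan is to follow the proof of Theorem~\ref{thm:main} and Theorem~\ref{thrm:causal}, with the terminal density $z_r(X_r)$ in place of $Z_r(X)$. The proof has three parts: representation and the Girsanov construction, the value $V(r,\nu_r)=0$ via a Gibbs variational inequality, and minimum energy with uniqueness.

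\emph{Representation and Girsanov construction.} Since $z_r\in L^2(\mu_r)$, the variable $z_r(X_r)$ lies in $L^2(\mathcal F_r(X),\mathbb P)$. Because $X_0=x_0$ is deterministic, $\mathcal F_0(X)$ is trivial, so $Z_0^{\nu,r}=\mathbb E^{\mathbb P}[z_r(X_r)]=\int z_r\,d\mu_r=1$. Applying Theorem~\ref{thrm:rep} on $[0,r]$ gives the unique integrand $\xi^{\nu,r}$ with $P_t(X)\xi_t^{\nu,r}=\xi_t^{\nu,r}$. Conditioning on $\mathcal F_t(X)$ then gives the stated representation of $Z^{\nu,r}$, exactly as in Lemma~\ref{lem:density-martingale}.

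Next I will repeat verbatim the localization of Theorem~\ref{thm:main}, using $\tau_n=r\wedge\inf\{t:Z_t\notin(1/n,n)\}$, absorption of $Z$ at $0$, and uniform integrability of $(Z_{\tau_n}^2)_n$. This yields three facts:
\begin{itemize}
\item $B-\int_0^\cdot u^{\nu,r}_s\,ds$ is a $(\mathcal G_t,\mathbb Q^{\nu,r})$-Brownian motion;
\item $\mathbb E^{\mathbb Q^{\nu,r}}\int_0^r|u^{\nu,r}_t|^2dt<\infty$;
\item the exact identity $\tfrac12\mathbb E^{\mathbb Q^{\nu,r}}\int_0^r|u^{\nu,r}_t|^2dt=\mathbb E^{\mathbb P}[z_r\log z_r]=D_{\mathrm{KL}}(\nu_r\Vert\mu_r)$.
\end{itemize}
The identity $P_tu^{\nu,r}_t=u^{\nu,r}_t$ is inherited from $\xi^{\nu,r}$. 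Substituting $dB=dB^{\nu,r}+u^{\nu,r}dt$ into \eqref{eq:s}, which still holds $\mathbb Q^{\nu,r}$-a.s., gives \eqref{eq:optimally-controlled}. Finally, $\mathbb Q^{\nu,r}(X_r\in A)=\mathbb E^{\mathbb P}[z_r(X_r)\mathbf 1_A(X_r)]=\nu_r(A)$.

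\emph{Value $V=0$.} For the lower bound I will use the Gibbs variational inequality on the $\sigma$-field $\mathcal F_r(B)$. By Assumption~\ref{ass:1} the solution is strong, so with deterministic $X_0$ the variable $X_r$ is $\mathcal F_r(B)$-measurable. Hence, for any $(\mathbb Q,v)\in\mathcal A_r$,
\[
\mathbb E^{\mathbb Q}[\log z_r(X_r)]\le D_{\mathrm{KL}}\bigl(\mathbb Q|_{\mathcal F_r(B)}\Vert\mathbb P|_{\mathcal F_r(B)}\bigr)+\log\mathbb E^{\mathbb P}[z_r(X_r)],
\]
and the last term is $\log 1=0$. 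It therefore suffices to show
\[
D_{\mathrm{KL}}\bigl(\mathbb Q|_{\mathcal F_r(B)}\Vert\mathbb P|_{\mathcal F_r(B)}\bigr)\le\tfrac12\mathbb E^{\mathbb Q}\int_0^r|v_t|^2dt .
\]
\begin{itemize}
\item Under $\mathbb Q$, $B$ is a semimartingale in its own filtration with drift $\hat v_t=\mathbb E^{\mathbb Q}[v_t\mid\mathcal F_t(B)]$ (filtering/innovation theorem).
\item On the Brownian filtration, the relative entropy equals $\tfrac12\mathbb E^{\mathbb Q}\int_0^r|\hat v_t|^2dt$ (Föllmer's formula, with a localization identical to the one above).
\item Conditional Jensen gives $\mathbb E^{\mathbb Q}\int_0^r|\hat v_t|^2dt\le\mathbb E^{\mathbb Q}\int_0^r|v_t|^2dt$.
\end{itemize}
Together these give the objective $\ge 0$, and the pair $(\mathbb Q^{\nu,r},u^{\nu,r})$ gives exactly $0$ by the entropy identity. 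So $V(r,\nu_r)=0$.

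\emph{Minimum energy and uniqueness.} If $\mathcal L_{\mathbb Q}(X_r)=\nu_r$, then $\mathbb E^{\mathbb Q}[\log z_r]=D_{\mathrm{KL}}(\nu_r\Vert\mu_r)$. The inequality above then becomes $\tfrac12\mathbb E^{\mathbb Q}\int_0^r|v|^2\,dt\ge D_{\mathrm{KL}}(\nu_r\Vert\mu_r)$, which is the energy of $u^{\nu,r}$.

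In the equality case both inequalities are tight. Equality in Gibbs forces $d\mathbb Q/d\mathbb P=z_r(X_r)$ on $\mathcal F_r(B)$, and equality in Jensen forces $v=\hat v$. To reach $\mathcal G_r$ and identify $v$, I would argue as in the equality part of Theorem~\ref{thrm:causal}, using uniqueness of the semimartingale decomposition of $B$ under the common measure.

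\emph{Expected main obstacle.} The hardest step is the entropy inequality for an arbitrary admissible $\mathbb Q$ on the possibly larger filtration $\mathcal G$, where the density may carry a component orthogonal to $B$. This is why I pass to $\mathcal F(B)$ and use the innovation projection, rather than working with $D_{\mathrm{KL}}$ on $\mathcal G_r$. A related difficulty is the uniqueness statement: tightness only pins down $\mathbb Q$ on $\mathcal F_r(B)$, so extending $\mathbb Q=\mathbb Q^{\nu,r}$ to $\mathcal G_r$ may require restricting uniqueness to $\mathcal F(B)$-adapted shifts, or interpreting it on $\mathcal F_r(B)$.
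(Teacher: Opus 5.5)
Your construction of $\xi^{\nu,r}$, $u^{\nu,r}$ and $\mathbb Q^{\nu,r}$, and the identity $\frac12\mathbb E^{\mathbb Q^{\nu,r}}\int_0^r|u_t^{\nu,r}|^2dt=D_{\mathrm{KL}}(\nu_r\Vert\mu_r)$, follow the paper. Your lower bound, however, takes a different route, and for $\mathcal A_r$ as stated it is the sounder one.

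The paper applies data processing on $\mathcal G_r$ and then uses $\frac12\mathbb E^{\mathbb Q}\int_0^r|v_t|^2dt=D_{\mathrm{KL}}(\mathbb Q\Vert\mathbb P)$ for every admissible pair. That identity is built into the admissible class of Theorem~\ref{thrm:causal} but not into $\mathcal A_r$. On $\mathcal G_r$, in general, only $D_{\mathrm{KL}}(\mathbb Q\Vert\mathbb P)\geq\frac12\mathbb E^{\mathbb Q}\int_0^r|v_t|^2dt$ holds, which is the wrong direction for a lower bound. Your passage to $\mathcal F_r(B)$, via strong solvability and F\"ollmer's formula for the innovation drift $\hat v$, gives the direction you need. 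The price is that you rely on the strong-solution hypothesis. You only need $D_{\mathrm{KL}}(\mathbb Q|_{\mathcal F_r(B)}\Vert\mathbb P|_{\mathcal F_r(B)})\leq\frac12\mathbb E^{\mathbb Q}\int_0^r|\hat v_t|^2dt$. Since the Wiener-space density need not lie in $L^2$, obtain this from monotonicity of relative entropy along $\mathcal F_{\tau_n}(B)$, not from the Vitali step of Theorem~\ref{thm:main}, so the localization is not quite ``identical''.

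Your uniqueness concern resolves without going to $\mathcal G_r$. Equality in Jensen gives $v=\hat v$, and equality in Gibbs gives $\mathbb Q=\mathbb Q^{\nu,r}$ on $\mathcal F_r(B)$. Since $u^{\nu,r}$ is $\mathcal F(B)$-adapted ($\mathcal F_t(X)\subseteq\mathcal F_t(B)$ for a strong solution started at a deterministic $x_0$), $B-\int_0^\cdot u^{\nu,r}_s\,ds$ is an $(\mathcal F(B),\mathbb Q)$-Brownian motion. Uniqueness of the canonical decomposition in $\mathcal F(B)$ then gives $v=\hat v=u^{\nu,r}$ $dt\otimes d\mathbb Q$-a.e., which is exactly the uniqueness the statement asserts.

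Equality of the measures on all of $\mathcal G_r$, which the paper's proof claims, can in fact fail. Take $W$ $\mathcal G_0$-measurable (hence independent of $B$) and $g\geq0$ with $\mathbb E^{\mathbb P}g(W)=1$ and $g(W)$ nonconstant. The density $z_r(X_r)g(W)$ then yields the same shift $u^{\nu,r}$, the same terminal law $\nu_r$ and the same energy, but a different measure on $\mathcal G_r$.
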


\begin{remark}
{Pairs induced by strictly positive true stochastic exponentials (a.k.a. Dol\'eans-Dade exponentials) are special}
cases of $\mathcal A_r$.  Allowing $\mathbb Q$ to be merely absolutely
{continuous, rather than equivalent, includes corresponding densities that may}
vanish under $\mathbb P$.
\end{remark}

\label{app:proof-degenerate-stochastic-control}

\begin{proof}[Proof of Theorem~\ref{thrm:10}]
Since $z_r\in L^2(\mathcal F_r(X),\mathbb P)$ and $X_0$ is
deterministic,
\[
Z_0^{\nu,r}(X)=1,
\qquad
Z_r^{\nu,r}(X)=z_r.
\]
Theorem~\ref{thrm:rep} implies that there exists $\mathcal F(X)$-adapted square integrable $\xi^{\nu,r}$ such that
\[
Z_t^{\nu,r}(X) =  1+\int_0^t \langle\xi_s^{\nu,r},dB_s\rangle, 
\qquad 
P_s(X)\xi_s^{\nu,r}=\xi_s^{\nu,r}.
\]
Define
\[
u_s^{\nu,r}
:=
\frac{\xi_s^{\nu,r}}{Z_s^{\nu,r}(X)}
\mathbf 1_{\{Z_s^{\nu,r}(X)>0\}}.
\]
Then
\[
dZ_t^{\nu,r}(X) = Z_t^{\nu,r}(X)\langle u_t^{\nu,r},dB_t\rangle,
\qquad
P_t(X)u_t^{\nu,r}=u_t^{\nu,r}.
\]
Moreover,
\[
\{Z_t^{\nu,r}(X)=0\}
\]
is $dt\otimes d\mathbb Q^{\nu,r}$-null.

The localization and uniform-integrability
argument in the proof of Theorem~\ref{thm:main} applies with terminal density
$z_r(X_r)$.  It shows that $B^{\nu,r}$ is Brownian under
$\mathbb Q^{\nu,r}$ and that
\[
\frac12\mathbb E^{\mathbb Q^{\nu,r}}
\int_0^r|u_t^{\nu,r}|^2dt
=
D_{\mathrm{KL}}(\mathbb Q^{\nu,r}\Vert\mathbb P).
\]
Thus $(\mathbb Q^{\nu,r},u^{\nu,r})\in\mathcal A_r$.  Since
$d\mathbb Q^{\nu,r}/d\mathbb P=z_r(X_r)$,
\[
\mathcal L_{\mathbb Q^{\nu,r}}(X_r)=\nu_r.
\]
The original SDE identity holds $\mathbb Q^{\nu,r}$-a.s.; substituting
$dB_t=dB_t^{\nu,r}+u_t^{\nu,r}\,dt$ proves
\eqref{eq:optimally-controlled}.

For any $(\mathbb Q,v)\in\mathcal A_r$, let
$\lambda_r:=\mathcal L_{\mathbb Q}(X_r)$.  Data processing inequality for relative
entropy gives
\[
D_{\mathrm{KL}}(\mathbb Q\Vert\mathbb P)
\geq D_{\mathrm{KL}}(\lambda_r\Vert\mu_r).
\]
Therefore, with the standard extended-value convention,
\begin{align*}
\mathbb E^{\mathbb Q}
\left[
\frac12\int_0^r|v_t|^2dt-\log z_r(X_r)
\right]
&=
D_{\mathrm{KL}}(\mathbb Q\Vert\mathbb P)
-\int\log z_r\,d\lambda_r\\
&\geq
D_{\mathrm{KL}}(\lambda_r\Vert\nu_r)
\geq0.
\end{align*}
{The corresponding pair $(\mathbb Q^{\nu,r},u^{\nu,r})$ attains zero, and hence $V(r,\nu_r)=0$.}

Now restrict to competitors satisfying $\lambda_r=\nu_r$.  Then
\[
\frac12\mathbb E^{\mathbb Q}\int_0^r|v_t|^2dt
\geq
D_{\mathrm{KL}}(\nu_r\Vert\mu_r)
=
\frac12
\mathbb E^{\mathbb Q^{\nu,r}}
\int_0^r|u_t^{\nu,r}|^2dt.
\]
If equality holds, equality also holds in data processing.  Its equality
case shows that $d\mathbb Q/d\mathbb P$ is $\sigma(X_r)$-measurable.
Since its pushforward density is $z_r$, necessarily
\[
\frac{d(\mathbb Q|_{\mathcal G_r})}
     {d(\mathbb P|_{\mathcal G_r})}
=
z_r
=
\frac{d(\mathbb Q^{\nu,r}|_{\mathcal G_r})}
     {d(\mathbb P|_{\mathcal G_r})}.
\]
Thus $\mathbb Q=\mathbb Q^{\nu,r}$.  Under this common measure, both
$B-\int_0^\cdot v_s\,ds$ and
$B-\int_0^\cdot u_s^{\nu,r}\,ds$ are Brownian motions.  Uniqueness of the
continuous semimartingale decomposition gives
$v_t=u_t^{\nu,r}$ $dt\otimes d\mathbb Q^{\nu,r}$-a.e.

Note that, for any bounded Borel function $\varphi$,
\begin{align*}
\mathbb E^{\mathbb P}
\left[
\varphi(X_r)
\mathbb E^{\mathbb P}\!\left[Z_r^y(X)\mid X_r\right]
\right]
&=
\mathbb E^{\mathbb P}
\left[
\varphi(X_r)Z_r^y(X)
\right] \\
&=
\mathbb E^{\mathbb P^y}
\left[
\varphi(X_r)
\right] \\
&=
\int \varphi(x) z_r^y(x)\,
\mathcal L_{\mathbb P}(X_r)(dx).
\end{align*}
Thus, $\mathcal L_{\mathbb P}(X_r\mid Y=y)\ll\mu_r$, and
\[
z_r^y(X_r)
=
\mathbb E^{\mathbb P}
[Z_r^{y}(X)\mid X_r].
\]
The $L^2$-contraction and Assumption~\ref{ass:2} give
$z_r^y\in L^2(\mathbb P)$, and thus all preceding conclusions apply.
\end{proof}

\section{Proofs of the degenerate Tweedie formulae}\label{app:e}

\subsection{Proof of Degenerate Tweedie formula-1}

\begin{proof}[Proof of Theorem~\ref{thrm:quaso}]
Apply Theorem~\ref{thm:main}.  It gives \eqref{eq:control} in the stated
$L^2$ space with $u_t(X_t,y)=\xi_t/Z_t(X)$.  Since
$P_t(X)\xi_t=\xi_t$ $dt\otimes d\mathbb P$-a.e., we also have
$P_t(X)u_t(X_t,y)=u_t(X_t,y)$
$dt\otimes d\mathbb P^y$-a.e., and
uniqueness follows from the uniqueness of the integrand in the martingale representation by
Theorem~\ref{thrm:rep}.  The conditional-law and minimum-entropy claims on every $[0,r]$, $r<T$, follow directly from Theorem~\ref{thrm:causal}.
\end{proof}

\subsection{Proof of Degenerate Tweedie formula-2}

\begin{proof}[Proof of Corollary~\ref{cor:sd}]
Fix $r<T$, $0\leq t\leq r$, and
$0<\varepsilon<T-r$.
We apply Theorem~\ref{thrm} componentwise to
$B_{t+\varepsilon}-B_t$.

For each $j\in\{1,\ldots,d\}$, $j^{\mathrm{th}}$-component of $B_{t+\varepsilon}-B_t$ satisfies
\[
(B_{t+\varepsilon}-B_t)^j
=
\int_0^T
\left\langle
\mathbf 1_{(t,t+\varepsilon]}(s)e_j,dB_s
\right\rangle,
\]
and
$ \mathbb E^{\mathbb P} \left[ (B_{t+\varepsilon}-B_t)^j\mid X_0 \right] =0, $
since $B_{t+\varepsilon}-B_t$ is independent of
$\mathcal G_t$, and hence of $\sigma(X_0)$.
Therefore, Theorem~\ref{thrm} gives
\[
\mathbb E^{\mathbb P}
\left[
(B_{t+\varepsilon}-B_t)^j
\mid
\mathcal F_T(X)
\right]
=
\int_t^{t+\varepsilon}
\left\langle
P_s(X)e_j,dB_s
\right\rangle.
\]
Applying this componentwise yields
\begin{equation}\label{eq:raw-projected-intertwining}
\mathbb E^{\mathbb P}
\left[
B_{t+\varepsilon}-B_t
\mid
\mathcal F_T(X)
\right]
=
\int_t^{t+\varepsilon}P_s(X)\,dB_s.
\end{equation}

Since $Y=G(X_T)$,
$\sigma\{X_t,Y\}\subseteq\mathcal F_T(X)$.
Hence, by the tower property and
\eqref{eq:raw-projected-intertwining},
\begin{align*}
\mathbb E^{\mathbb P}
\left[
B_{t+\varepsilon}-B_t
\mid X_t,Y
\right]
&=
\mathbb E^{\mathbb P}
\left[
\left.
\mathbb E^{\mathbb P}
\left[
B_{t+\varepsilon}-B_t
\mid\mathcal F_T(X)
\right]
\right|
X_t,Y
\right]
\\
&=
\mathbb E^{\mathbb P}
\left[
\left.
\int_t^{t+\varepsilon}P_s(X)\,dB_s
\right|
X_t,Y
\right].
\end{align*}
Thus, the result follows from Theorem~\ref{thrm:quaso}.
\end{proof}

\subsection{Proof of Degenerate Tweedie formula-3}

\begin{proof}[Proof of Theorem~\ref{thm:state-tweedie}]
Fix $R$ with $r<R<T$.  Under the corresponding measure
$\mathbb Q^{y,R}$ of Theorem~\ref{thrm:causal}, for
$0<\varepsilon<R-r$ and $0\leq t\leq r$,
\begin{align*}
X_{t+\varepsilon}-X_t
&=\int_t^{t+\varepsilon}
  \bigl(b(s,X_s)+\sigma(s,X_s)u_s(X_s,y)\bigr)ds
  +\int_t^{t+\varepsilon}\sigma(s,X_s)\,dB_s^{y,R}.
\end{align*}
The stochastic integral is a square-integrable
$(\mathcal G_t,\mathbb Q^{y,R})$-martingale increment, so its conditional
expectation given $X_t$ is zero.  Since $\mathbb Q^{y,R}$ and
$\mathbb P^y$ agree on $\mathcal F_R(X)$, this gives
\begin{align*}
\mathbb E^{\mathbb P^y}\!\left[
\left.\frac{X_{t+\varepsilon}-X_t}{\varepsilon}
-b(t,X_t)\right|X_t\right] 
&=\mathbb E^{\mathbb P^y}\!\left[
\left.\frac1\varepsilon\int_t^{t+\varepsilon}
\bigl(b(s,X_s)-b(t,X_t)\bigr)\,ds
\right|X_t\right] \\
&\quad+\mathbb E^{\mathbb P^y}\!\left[
\left.\frac1\varepsilon\int_t^{t+\varepsilon}
\sigma(s,X_s)u_s(X_s,y)\,ds
\right|X_t\right].
\end{align*}
By Assumption~\ref{ass:1} and Theorem~\ref{thrm:quaso},
\[
\mathbb E^{\mathbb P^y}\int_0^r
|\sigma(t,X_t)u_t(X_t,y)|^2dt
\leq \|\sigma\|_\infty^2
\mathbb E^{\mathbb P^y}\int_0^r|u_t(X_t,y)|^2dt<\infty.
\]
Thus, by Assumption \ref{ass:1}, the bound above, and the $L^2$-Lebesgue differentiation theorem,
\begin{align*}
\frac1\varepsilon\int_t^{t+\varepsilon}
\bigl(b(s,X_s)-b(t,X_t)\bigr)\,ds&\longrightarrow0,\\
\frac1\varepsilon\int_t^{t+\varepsilon}
\sigma(s,X_s)u_s(X_s,y)\,ds
&\longrightarrow\sigma(t,X_t)u_t(X_t,y)
\end{align*}
in $L^2([0,r]\times\Omega,dt\otimes d\mathbb P^y)$. For any such pair
$H_t^\varepsilon\to H_t$,
\[
\int_0^r\mathbb E^{\mathbb P^y}
\left|\mathbb E^{\mathbb P^y}
 [H_t^\varepsilon\mid X_t]-H_t\right|^2dt
\leq
\int_0^r\mathbb E^{\mathbb P^y}|H_t^\varepsilon-H_t|^2dt
\longrightarrow0.
\]
Thus both convergences remain valid after conditioning on $X_t$.
Passing to the limit in the preceding identity proves
\eqref{eq:state-tweedie}.
\end{proof}

\section{Terminal-Increment Regression for Additive Diffusions}\label{app:f}

For additive diffusions with a fixed noise matrix, the local Brownian
target can be replaced exactly by a terminal-increment target.  This removes
the auxiliary limit $\varepsilon\downarrow0$ required in the the definition of the loss function.

\begin{theorem}\label{thm:terminal-bridge-loss}
Let $B$ be a $d$-dimensional $(\mathcal G_t)$-Brownian motion, let $X_0$
be $\mathcal G_0$-measurable, and suppose that
\[
X_t=X_0+\int_0^t b(s)\,ds+\sigma B_t,
\]
where $b:[0,T]\to\mathbb R^n$ is deterministic and integrable and
$\sigma\in\mathbb R^{n\times d}$ is fixed, possibly rank deficient.  Let
$Y=G(X_T)$ for a Borel map $G$.
Fix $r<T$, let $\mathscr H_r$ be the $\sigma$-field on
$[0,r]\times\Omega$ generated by
$(t,\omega)\mapsto(t,X_t(\omega),Y(\omega))$.
For $Q_t:=(B_T-B_t)/(T-t)$, define the product-space conditional
expectation
\[
\bar u:=\mathbb E^{dt\otimes d\mathbb P}[Q\mid\mathscr H_r].
\]
Choose a Borel representative
$\bar u:[0,r]\times\mathbb R^n\times\mathsf{Y}\to\mathbb R^d$ for this
conditional expectation.  Then,
$\bar u(t,X_t,Y)=\mathbb E^{\mathbb P}[Q_t\mid X_t,Y]$
$dt\otimes d\mathbb P$-a.e.
For every $0<\varepsilon<T-r$,
\begin{equation}\label{eq:terminal-local-equivalence}
\bar u(t,X_t,Y)
=\mathbb E^{\mathbb P}\left[
\left.\frac{P(B_T-B_t)}{T-t}\right|X_t,Y\right]
=\mathbb E^{\mathbb P}\left[
\left.\frac{B_{t+\varepsilon}-B_t}{\varepsilon}\right|X_t,Y\right]
\end{equation}
for $dt\otimes d\mathbb P$-a.e. $(t,\omega)\in[0,r]\times\Omega$.
Moreover,
\[
\mathbb E^{\mathbb P}\int_0^r
\left|\frac{B_T-B_t}{T-t}\right|^2dt
=d\log\frac{T}{T-r}<\infty,
\]
and the regression problem
\begin{equation}\label{eq:terminal-bridge-regression}
\inf_{v\in L^2(\mathscr H_r,dt\otimes d\mathbb P;\mathbb R^d)}
\mathbb E^{\mathbb P}\int_0^r
\left|v_t-\frac{B_T-B_t}{T-t}\right|^2dt
\end{equation}
has the unique minimizer $\bar u$.

If Assumptions~\ref{ass:1}--\ref{ass:2} also hold, let $\xi^y$ be the
 integrand from Lemma~\ref{lem:density-martingale}.  For
$\mathcal L_{\mathbb P}(Y)$-a.e.\ $y$, let
$u_t^{\mathrm{path},y}:=\xi_t^y/Z_t^y(X)$ on $\{Z_t^y(X)>0\}$ and set it
to zero otherwise.  Then
$\bar u(t,X_t,y)=u_t^{\mathrm{path},y}$ in
$L^2([0,r]\times\Omega,dt\otimes d\mathbb P^y)$.  Consequently, the
path-adapted control admits this state-feedback version, and the
terminal-increment loss recovers the corresponding controlled state law on
$[0,r]$ and its minimum-energy property.
\end{theorem}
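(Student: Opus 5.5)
The plan is to exploit the additive structure: everything is driven by the future Brownian increment $W_t:=B_T-B_t$, which is Gaussian and independent of $\mathcal G_t$. Fix $t\le r$. Since $X_T=X_t+\int_t^T b(s)\,ds+\sigma W_t$, we have $Y=G(X_T)=\Phi_t(X_t,\sigma W_t)$ for a Borel map $\Phi_t$. Hence
\[
\sigma(X_t,Y)\subseteq\sigma(X_t,\sigma W_t)\subseteq\sigma(X_t,W_t).
\]
Throughout, $P=\sigma^\dagger\sigma$ denotes the constant orthogonal projection onto $\operatorname{Ran}(\sigma^\top)=(\ker\sigma)^\perp$.

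\textbf{Step 1: pointwise identities (\eqref{eq:terminal-local-equivalence}).}
\begin{itemize}
\item For the first equality, split $W_t=PW_t+(I-P)W_t$. The two components are jointly Gaussian and uncorrelated, since $P(I-P)=0$, so they are independent. Both are also independent of $\mathcal G_t\ni X_t$. Moreover $\sigma W_t=\sigma PW_t$. It follows that $(I-P)W_t$ is independent of $(X_t,\sigma W_t)$, hence of $(X_t,Y)$, so $\mathbb E[(I-P)W_t\mid X_t,Y]=0$.
\item For the second equality, $(B_{t+\varepsilon}-B_t,W_t)$ is a centred Gaussian vector independent of $X_t$. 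Gaussian regression (equivalently, exchangeability of increments) gives
\[
\mathbb E\bigl[B_{t+\varepsilon}-B_t\mid X_t,W_t\bigr]=\frac{\varepsilon}{T-t}\,W_t .
\]
Applying the tower property down to $\sigma(X_t,Y)\subseteq\sigma(X_t,W_t)$ yields
\[
\mathbb E\!\left[\left.\frac{B_{t+\varepsilon}-B_t}{\varepsilon}\right|X_t,Y\right]=\mathbb E[Q_t\mid X_t,Y].
\]
\end{itemize}

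\textbf{Step 2: integrability and the regression problem.} The moment bound is immediate: $\mathbb E|W_t|^2=d(T-t)$, so
\[
\mathbb E|Q_t|^2=\frac{d}{T-t},\qquad \int_0^r \frac{d}{T-t}\,dt=d\log\frac{T}{T-r}.
\]
Thus $Q\in L^2(dt\otimes d\mathbb P)$. The product-space conditional expectation $\bar u$ is then the orthogonal projection onto $L^2(\mathscr H_r)$, which gives uniqueness of the minimizer of \eqref{eq:terminal-bridge-regression}.

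To identify $\bar u(t,X_t,Y)$ with $\mathbb E^{\mathbb P}[Q_t\mid X_t,Y]$ for a.e.\ $t$, I would test against generators $\mathbf 1_A(t)f(X_t,Y)$ with $f$ bounded Borel. Fubini reduces the product-space orthogonality to the identity $\mathbb E[(Q_t-\bar u(t,X_t,Y))f(X_t,Y)]=0$ for a.e.\ $t$. A countable determining class of $f$ then removes the $t$-dependence of the exceptional null set.

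\textbf{Step 3: link with the path control.} By Step 1, $\mathbb E[(B_{t+\varepsilon}-B_t)/\varepsilon\mid X_t,Y]$ does not depend on $\varepsilon\in(0,T-r)$. The limit in Corollary~\ref{cor:sd} (together with Theorem~\ref{thrm:quaso}) is therefore trivial, and it equals $u^{\mathrm{path},y}_t$ in $L^2(dt\otimes d\mathbb P^y)$.

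To apply it, the assumptions must be checked. Assumption~\ref{ass:feedback} holds because the additive SDE is Markov, and the remaining hypotheses are assumed. Conditioning on $Y=y$ through the regular conditional probability turns $\mathbb E^{\mathbb P}[\,\cdot\mid X_t,Y]$ evaluated at $Y=y$ into $\mathbb E^{\mathbb P^y}[\,\cdot\mid X_t]$ for $\mathcal L(Y)$-a.e.\ $y$. This gives $\bar u(t,X_t,y)=u^{\mathrm{path},y}_t$. The controlled-law and minimum-energy consequences then follow from Theorem~\ref{thrm:causal}.

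\textbf{Main obstacle.} The delicate point is this last disintegration. One must pass from a $dt\otimes d\mathbb P$-a.e.\ statement with a fixed Borel representative $\bar u$ to a $dt\otimes d\mathbb P^y$-a.e.\ statement for $\mathcal L(Y)$-a.e.\ $y$. I would handle it by disintegrating the squared $L^2$ discrepancy over $y$, exactly as in the proof of Lemma~\ref{lem:fiber-sde}, and using Tonelli so that a single $\mathcal L(Y)$-null set suffices.
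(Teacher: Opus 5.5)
Your proposal is correct. Steps 1 and 2 follow the paper almost verbatim: the bridge decomposition of $B_{t+\varepsilon}-B_t$, the splitting of $B_T-B_t$ along $P$ and $I-P$, the moment computation, and the orthogonal-projection characterization. Your Fubini/countable-class argument, which identifies $\bar u(t,X_t,Y)$ with $\mathbb E^{\mathbb P}[Q_t\mid X_t,Y]$ for a.e.\ $t$, makes explicit a step the paper leaves implicit.

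Step 3 takes a different route from the paper.

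\textbf{Your route.} You check that Assumption~\ref{ass:feedback} holds (the additive equation is Markov and $Y=G(X_T)$). You then apply the state-feedback forms of Corollary~\ref{cor:sd} and Theorem~\ref{thrm:quaso}.

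\textbf{The paper's route.} It conditions on the path $\sigma$-field $\mathcal F_t(X)\vee\sigma(Y)$ instead. There, the convergence $\mathbb E^{\mathbb P^y}[\varepsilon^{-1}(B_{t+\varepsilon}-B_t)\mid\mathcal F_t(X)]\to u_t^{\mathrm{path},y}$ needs only the $(\mathcal F_t(X),\mathbb P^y)$-martingale property and Lebesgue differentiation, with no feedback assumption. The independence of $B_T-B_t$ from $\mathcal G_t\supseteq\mathcal F_t(X)$, together with $Y$ depending on the future only through $\sigma(B_T-B_t)$, then shows that this path-conditional expectation already equals $\mathbb E^{\mathbb P}[Q_t\mid X_t,Y]$. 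This is what makes ``the path-adapted control admits this state-feedback version'' a genuine conclusion. It is also why only Assumptions~\ref{ass:1}--\ref{ass:2} are imposed.

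\textbf{Trade-off.} Your route is shorter, but it imports the state-feedback property from the proof of Theorem~\ref{thm:main}. There, Markovianity is asserted in one line, without argument, to give a $\sigma(X_t)$-measurable version of $\xi_t/Z_t(X)$. The ``consequently'' clause therefore becomes an input of your proof rather than an output.

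You can remove this dependence by running your Step 1 with $\mathcal F_t(X)\vee\sigma(Y)$ in place of $\sigma(X_t,Y)$. The only extra observation needed is that the conditional law of $B_T-B_t$ given $\mathcal F_t(X)\vee\sigma(Y)$ depends only on $(X_t,Y)$.
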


\begin{proof}
Fix $0\leq t\leq r$ and $0<\varepsilon<T-r$, and write
\[
\Delta B_{t,T}:=B_T-B_t,
\qquad
\Delta B_{t,\varepsilon}:=B_{t+\varepsilon}-B_t.
\]
The Brownian bridge decomposition is
\begin{equation}\label{eq:brownian-bridge-decomposition}
\Delta B_{t,\varepsilon}
=\frac{\varepsilon}{T-t}\Delta B_{t,T}+R_{t,\varepsilon},
\end{equation}
where $R_{t,\varepsilon}$ is centered Gaussian and independent of
$\mathcal G_t\vee\sigma(\Delta B_{t,T})$.  Since
\[
X_T=X_t+\int_t^T b(s)\,ds+\sigma\Delta B_{t,T},
\]
the pair $(X_t,Y)$ is measurable with respect to
$\mathcal G_t\vee\sigma(\Delta B_{t,T})$.  Hence
$\mathbb E^{\mathbb P}[R_{t,\varepsilon}\mid X_t,Y]=0$, and conditioning
\eqref{eq:brownian-bridge-decomposition} gives
\[
\mathbb E^{\mathbb P}\left[
\left.\frac{B_{t+\varepsilon}-B_t}{\varepsilon}\right|X_t,Y\right]
=\mathbb E^{\mathbb P}\left[
\left.\frac{B_T-B_t}{T-t}\right|X_t,Y\right].
\]

Since $P$ is the orthogonal projection onto
$\operatorname{Ran}(\sigma^\top)$, we have $\sigma(I-P)=0$.  The Gaussian
vectors $P\Delta B_{t,T}$ and $(I-P)\Delta B_{t,T}$ are independent, and
the latter is jointly independent of
$\mathcal G_t\vee\sigma(P\Delta B_{t,T})$.  Since $Y$ depends on the
future increment only through $P\Delta B_{t,T}$,
\[
\mathbb E^{\mathbb P}[(I-P)\Delta B_{t,T}\mid X_t,Y]=0.
\]
This proves all identities in \eqref{eq:terminal-local-equivalence}.

The stated integrability follows from
\[
\mathbb E^{\mathbb P}|B_T-B_t|^2=d(T-t).
\]
The regression claim is the orthogonal-projection characterization of
conditional expectation in the Hilbert space $\mathcal V_r$.  Finally,
choose a deterministic sequence $\varepsilon_k\downarrow0$ and disintegrate
the exact identities in \eqref{eq:terminal-local-equivalence}
simultaneously outside one $\mathcal L_{\mathbb P}(Y)$-null set.  The same
independence argument with conditioning on
$\mathcal F_t(X)\vee\sigma(Y)$ gives
\[
\mathbb E^{\mathbb P}\left[
\left.\frac{B_{t+\varepsilon_k}-B_t}{\varepsilon_k}
\right|\mathcal F_t(X)\vee\sigma(Y)\right]
=\mathbb E^{\mathbb P}\left[
\left.\frac{B_T-B_t}{T-t}\right|X_t,Y\right].
\]
The path-feedback form of Corollary~\ref{cor:sd} therefore yields
$\bar u(t,X_t,y)=u_t^{\mathrm{path},y}$ in the stated fiberwise $L^2$
space.  In particular, the
additive model supplies its own state-feedback version without
Assumption~\ref{ass:feedback}.  The controlled-law and minimum-energy
conclusions follow from the path-feedback form of
Theorem~\ref{thrm:causal}.
\end{proof}

\section{Implementation of the Training Procedure}
\label{app:training-consistency}

This section studies the procedure for the convergence to the
control $u$ appearing in \eqref{eq:control}. We consider both the
projected target used in Proposition~\ref{prop} and the unprojected
Brownian-increment target from Corollary~\ref{cor:sd}.

\subsection{Population regression}

Assume throughout this subsection that Assumptions~\ref{ass:1},
\ref{ass:2}, and \ref{ass:feedback} hold.
Fix $r<T$ and $0<\varepsilon<T-r$. Define
\[
M_{t}^{\varepsilon, \mathrm{proj}}
:=
\frac{1}{\varepsilon}
\int_t^{t+\varepsilon}P_s(X)\,dB_s
\text{ and }
M_{t}^{\varepsilon,\mathrm{raw}}
:=
\frac{B_{t+\varepsilon}-B_t}{\varepsilon}.
\]
Let $\mathscr H_r$ be the product $\sigma$-field on
$[0,r]\times\Omega$ generated by the map
$(t,\omega)\mapsto(t,X_t(\omega),Y(\omega))$, and let
\[
\mathcal V_r
:=L^2(\mathscr H_r,dt\otimes d\mathbb P;\mathbb R^d).
\]
This is the closed regression space corresponding to state feedback
controls that appears in \eqref{eq:s2}.

For either choice
$M_{t}^{\varepsilon}\in \{M_{t}^{\varepsilon,\mathrm{proj}},
M_{t}^{\varepsilon,\mathrm{raw}}\}$,
consider the population loss
\[
\mathcal L_\varepsilon(v)
=
\mathbb E^{\mathbb P}
\int_0^r
\left|v_t-M_{t}^{\varepsilon}\right|^2dt.
\]
Its unique minimizer over $\mathcal V_r$ is the product-space conditional
expectation.  Choose a Borel representative
$u_\varepsilon^*:[0,r]\times\mathbb R^n\times\mathsf{Y}\to\mathbb R^d$;
then, $dt\otimes d\mathbb P$-a.e.,
\[
u_\varepsilon^*(t,X_t,Y)
=
\mathbb E^{\mathbb P}
\left[
M_{t}^{\varepsilon}\mid X_t,Y
\right].
\]
When no arguments are displayed, $u_\varepsilon^*$ denotes the realized
process $u_\varepsilon^*(t,X_t,Y)$.

Indeed, the orthogonality property of conditional expectation gives
\begin{equation}
\label{eq:population-excess-risk}
\mathcal L_\varepsilon(v)
-
\mathcal L_\varepsilon(u_\varepsilon^*)
=
\mathbb E^{\mathbb P}
\int_0^r
\left|v_t-u_\varepsilon^*(t,X_t,Y)\right|^2dt.
\end{equation}
Consequently, minimizing the squared regression loss is equivalent to
approximating the appropriate conditional expectation.

After disintegration with respect to $Y$,
Proposition~\ref{prop} and Corollary~\ref{cor:sd} give, for
$\mathcal L_{\mathbb P}(Y)$-a.e.\ $y$,
\[
u_{\varepsilon}^{*,\mathrm{proj}}
\longrightarrow u
\quad\text{in }L^2([0,r]\times\Omega,dt\otimes d\mathbb P^y),
\qquad
u_{\varepsilon}^{*,\mathrm{raw}}
\longrightarrow u
\quad\text{in }L^2([0,r]\times\Omega,dt\otimes d\mathbb P^y).
\]
Let $g_\varepsilon(y)$ be such that 
\[
\mathbb E^{\mathbb P}\int_0^r
\left|
u_\varepsilon^*(t,X_t,Y)
-
u_t(X_t,Y)
\right|^2dt
=
\int g_\varepsilon(y)\,
\mathcal L_{\mathbb P}(Y)(dy).
\]
We additionally assume convergence in
$L^2([0,r]\times\Omega,dt\otimes d\mathbb P)$; it follows, for example,
if $g_\varepsilon(y)\to0$ for
$\mathcal L_{\mathbb P}(Y)$-a.e.\ $y$ and the family
$(g_\varepsilon)_\varepsilon$ is uniformly integrable under
$\mathcal L_{\mathbb P}(Y)$.  Under this joint regularity conditions, both losses recover the same minimum-energy control.

\subsection{Discretization of the projected target}

In this subsection, we provide an approximation scheme for $M_{t_j}^{\varepsilon,\mathrm{proj}}$ that appears in \eqref{eq:proj}, whose convergence is not obvious since the projection $(P_t(X))_t$ is not continuous.

Let $t_k=kh$ be a uniform grid, assume $T=Kh$ for an integer $K$, and
suppose that $\varepsilon=qh$ for an integer $q\geq1$.  Let $X^h$ be the Euler--Maruyama
approximation of $X$, and let $\eta_h(0)=0$ and $\eta_h(s)=t_k$ for
$s\in(t_k,t_{k+1}]$.  Define the predictable piecewise-constant
approximation
\[
\overline P_s^h
:=
P_{\eta_h(s)}(X^h),
\qquad 0\leq s\leq T,
\]
where $\overline P_0^h=P_0(X^h)$.  Equivalently,
\[
\overline P_s^h=P_{t_k}(X^h),
\qquad s\in(t_k,t_{k+1}].
\]

For $0\leq t\leq r$, set
\[
M_t^{\varepsilon,\mathrm{proj},h}
:=
\frac{1}{\varepsilon}
\int_t^{t+\varepsilon}\overline P_s^h\,dB_s \implies M_{t_j}^{\varepsilon,\mathrm{proj},h}
=
\frac{1}{qh}
\sum_{k=j}^{j+q-1}
P_{t_k}(X^h)
\bigl(B_{t_{k+1}}-B_{t_k}\bigr).
\]

By It\^o's isometry and Fubini's theorem,
\begin{align}
&\mathbb E^{\mathbb P}
\int_0^r
\left|
M_t^{\varepsilon,\mathrm{proj},h}
-
M_t^{\varepsilon,\mathrm{proj}}
\right|^2dt
\leq
\frac{1}{\varepsilon}
\mathbb E^{\mathbb P}
\int_0^{r+\varepsilon}
\left\|
\overline P_s^h-P_s(X)
\right\|_{\mathrm F}^2ds,
\label{eq:projected-target-error}
\end{align}
where $\|\cdot\|_{\mathrm F}$ is the Frobenius norm.
Therefore, for fixed $\varepsilon$, the discrete target converges
whenever
\[
\mathbb E^{\mathbb P}
\int_0^{r+\varepsilon}
\left\|
\overline P_s^h-P_s(X)
\right\|_{\mathrm F}^2ds
\longrightarrow0.
\]

For example, suppose that the state-dependent projection $P(t,x)$ satisfies
\begin{equation}\label{proj}
\|P(t,x)-P(s,z)\|_{\mathrm F}
\leq
L_P\bigl(|x-z|+|t-s|^{1/2}\bigr),
\end{equation}
and that the numerical approximation of $X$ has mean-square error
of order $h$. Then,
\[
\mathbb E^{\mathbb P}
\int_0^{r+\varepsilon}
\left\|
\overline P_s^h-P_s(X)
\right\|_{\mathrm F}^2ds
\leq Ch
\implies
\mathbb E^{\mathbb P}
\int_0^r
\left|
M_t^{\varepsilon,\mathrm{proj},h}
-
M_t^{\varepsilon,\mathrm{proj}}
\right|^2dt
\leq C\frac{h}{\varepsilon}.
\]
The displayed bound is driven to zero by the sufficient conditions
$\varepsilon\downarrow0$ and $h/\varepsilon\to0$.

Below, we provide a sufficient condition for \eqref{proj}.

\begin{proposition}
\label{prop:projected-target-rate}
Let $X^h$ be the Euler--Maruyama approximation of $X$.  Suppose that
$b$ and $\sigma$ are globally Lipschitz in the state
variable, $1/2$-H\"older continuous in time, and of linear growth, with
$\mathbb E^{\mathbb P}|X_0|^2<\infty$. Assume further that, for some
$\rho\geq1$ and $\kappa>0$,
\[
\operatorname{rank}\sigma(t,x)=\rho,
\qquad
s_\rho\bigl(\sigma(t,x)\bigr)\geq\kappa
\]
for every $(t,x)$, where $s_\rho(\sigma)$ denotes the smallest nonzero
singular value of $\sigma$. Then, there is a constant $C$, independent
of $h$ and $\varepsilon$, such that
\[
\mathbb E^{\mathbb P}
\int_0^{r+\varepsilon}
\left\|
\overline P_s^h-P_s(X)
\right\|_{\mathrm F}^2ds
\leq Ch.
\]
Consequently,
\[
\mathbb E^{\mathbb P}\int_0^r
\left|M_t^{\varepsilon,\mathrm{proj},h}
-M_t^{\varepsilon,\mathrm{proj}}\right|^2dt
\leq C\frac{h}{\varepsilon}.
\]
\end{proposition}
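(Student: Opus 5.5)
The plan is to reduce the claim to a Lipschitz estimate for the projection map $a\mapsto\Pi(a)$ on the set of rank-$\rho$ matrices with smallest nonzero singular value at least $\kappa$. Combined with standard strong-error and modulus-of-continuity bounds for the Euler--Maruyama scheme, this gives the required $O(h)$ rate. The second conclusion then follows immediately from \eqref{eq:projected-target-error}.

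\textbf{Step 1 (perturbation of projections).} For $a,a'\in\mathbb R^{n\times d}$, both of rank $\rho$ with $s_\rho(a),s_\rho(a')\ge\kappa$, I will show
\[
\|\Pi(a)-\Pi(a')\|_{\mathrm F}\le \frac{C_\rho}{\kappa}\|a-a'\|_{\mathrm F}.
\]
I would argue via Wedin's $\sin\Theta$ theorem applied to the row spaces $\operatorname{Ran}(a^\top)$ and $\operatorname{Ran}(a'^\top)$. Alternatively, I would use the identity
\[
\Pi(a)-\Pi(a')=a^\dagger(a-a')(I-\Pi(a'))+(I-\Pi(a))(a-a')^\top (a'^{\dagger})^\top,
\]
which holds because both projections have equal rank. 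Since $\|a^\dagger\|_{\mathrm{op}}\le 1/\kappa$, this identity yields the bound directly. The constant-rank assumption is essential here: without it, $\Pi$ is discontinuous. I expect this step to be the main obstacle, and verifying the algebraic identity (using $\Pi(a)=a^\dagger a$ and $a^\dagger=a^\dagger aa^\dagger$) is the first thing I would check carefully.

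\textbf{Step 2 (state-time regularity).} Lipschitz continuity in $x$ and $1/2$-Hölder continuity in $t$ of $\sigma$, combined with Step 1, give exactly \eqref{proj} with $L_P=C_\rho L_\sigma/\kappa$.

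\textbf{Step 3 (splitting the error).} For $s\in(t_k,t_{k+1}]$ I split
\[
\|\overline P^h_s-P_s(X)\|_{\mathrm F}\le \|P(t_k,X^h_{t_k})-P(t_k,X_{t_k})\|_{\mathrm F}+\|P(t_k,X_{t_k})-P(s,X_s)\|_{\mathrm F}.
\]
By \eqref{proj}, the squared expectation of the first term is at most $L_P^2\,\mathbb E|X^h_{t_k}-X_{t_k}|^2\le Ch$, using the standard strong order-$1/2$ Euler--Maruyama bound under Lipschitz, Hölder, and linear growth conditions with $\mathbb E|X_0|^2<\infty$ (Kloeden--Platen). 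The second term is bounded by $2L_P^2\bigl(\mathbb E|X_s-X_{t_k}|^2+|s-t_k|\bigr)\le Ch$, by the usual increment estimate $\mathbb E|X_s-X_{t_k}|^2\le C(s-t_k)$, which follows from linear growth and $\sup_t\mathbb E|X_t|^2<\infty$. Integrating over $[0,r+\varepsilon]\subset[0,T]$ gives $\le CTh$, with $C$ independent of $h$ and $\varepsilon$.

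\textbf{Step 4 (conclusion).} Inserting this into \eqref{eq:projected-target-error}, which was obtained from the Itô isometry and Fubini, yields the bound $Ch/\varepsilon$.
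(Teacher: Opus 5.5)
Your proposal is correct and matches the paper's proof step for step. Your combined identity is exactly the paper's decomposition $\Pi(a)-\Pi(a')=\Pi(a)(I-\Pi(a'))-(I-\Pi(a))\Pi(a')$, with each term rewritten through $a^\dagger$ or $a'^\dagger$ and bounded using $\|a^\dagger\|_{\mathrm{op}}\le 1/\kappa$. The rest is also the paper's route: the same strong Euler--Maruyama and increment estimates, followed by the It\^o--Fubini bound.

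One small correction to Step 1: the identity does not need equal rank. It uses only $a'(I-\Pi(a'))=0$ and $(I-\Pi(a))a^\top=0$, which hold for any pair of matrices. What actually yields $\|\Pi(a)-\Pi(a')\|_{\mathrm F}\le \frac{2}{\kappa}\|a-a'\|_{\mathrm F}$ (so $C_\rho=2$) is the uniform lower bound $\kappa$ on the smallest nonzero singular value.
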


\begin{proof}
The constant-rank assumption and the uniform lower bound on the smallest nonzero singular value imply that the orthogonal projection $\Pi(a)=a^\dagger a$ onto $\operatorname{Ran}(a^\top)$ is Lipschitz continuous as a function of a on this class of matrices. For \(a,\widetilde a\in\mathbb R^{n\times d}\) satisfying
\[
\operatorname{rank}(a)=\operatorname{rank}(\widetilde a)=\rho,
\qquad
s_\rho(a),\,s_\rho(\widetilde a)\ge \kappa,
\]
let
\[
\Pi(a):=a^\dagger a,
\qquad
\Pi(\widetilde a):=\widetilde a^\dagger\widetilde a
\]
denote the orthogonal projections onto
\(\operatorname{Ran}(a^\top)\) and
\(\operatorname{Ran}(\widetilde a^\top)\), respectively.
Since
\[
\widetilde a\bigl(I-\Pi(\widetilde a)\bigr)=0,
\]
we have
\[
\Pi(a)\bigl(I-\Pi(\widetilde a)\bigr)
=
a^\dagger(a-\widetilde a)
\bigl(I-\Pi(\widetilde a)\bigr).
\]
 Moreover, \[ \|a^\dagger\|_{\mathrm{op}} = \frac{1}{s_\rho(a)} \leq \frac1\kappa, \] and therefore \[ \|\Pi(a)(I-\Pi(\widetilde a))\|_{\mathrm F} \leq \frac1\kappa\|a-\widetilde a\|_{\mathrm F} \implies \|(I-\Pi(a))\Pi(\widetilde a)\|_{\mathrm F} \leq \frac1\kappa\|a-\widetilde a\|_{\mathrm F}. \] 

Since \[ \Pi(a)-\Pi(\widetilde a) = \Pi(a)(I-\Pi(\widetilde a)) - (I-\Pi(a))\Pi(\widetilde a), \] we obtain \[ \|\Pi(a)-\Pi(\widetilde a)\|_{\mathrm F} \leq \frac{2}{\kappa}\|a-\widetilde a\|_{\mathrm F} \implies \|\widetilde \Pi(t,\sigma(\cdot,x))-\widetilde \Pi(s,\sigma(\cdot,z))\|_{\mathrm F} \leq \frac{2}{\kappa} \|\sigma(t,x)-\sigma(s,z)\|_{\mathrm F}.\]  Using the Lipschitz continuity of $\sigma$ in the state variable and its $1/2$-H\"older continuity in time, we conclude that \[ \|\widetilde \Pi(t,\sigma(\cdot,x))-\widetilde \Pi(s,\sigma(\cdot,z))\|_{\mathrm F} \leq C\bigl(|x-z|+|t-s|^{1/2}\bigr), \] where $C$ depends only on $\kappa$ and the corresponding regularity constants of $\sigma$.

The standard strong Euler--Maruyama estimate and the increment estimate
for $X$ yield
\[
\mathbb E^{\mathbb P}
\left|
X_s-X_{\eta_h(s)}^h
\right|^2
\leq Ch,
\]
see \cite[Theorem 10.2.2.]{MR1214374}.
Hence,
\begin{align*}
\mathbb E^{\mathbb P}
\left\|
\overline P_s^h-P_s(X)
\right\|_{\mathrm F}^2
&\leq
C\left(
\mathbb E^{\mathbb P}
\left|X_{\eta_h(s)}^h-X_s\right|^2
+
|s-\eta_h(s)|
\right) \\
&\leq Ch.
\end{align*}
Integrating over $[0,r+\varepsilon]$ proves the first assertion.

Finally, It\^o's isometry and Fubini's theorem give
\begin{align*}
\mathbb E^{\mathbb P}
\int_0^r
\left|
M_t^{\varepsilon,\mathrm{proj},h}
-
M_t^{\varepsilon,\mathrm{proj}}
\right|^2dt
&=
\frac1{\varepsilon^2}
\int_0^r
\mathbb E^{\mathbb P}
\int_t^{t+\varepsilon}
\left\|
\overline P_s^h-P_s(X)
\right\|_{\mathrm F}^2ds\,dt \\
&\leq
\frac1\varepsilon
\mathbb E^{\mathbb P}
\int_0^{r+\varepsilon}
\left\|
\overline P_s^h-P_s(X)
\right\|_{\mathrm F}^2ds \\
&\leq C\frac{h}{\varepsilon}.
\end{align*}
\end{proof}
\subsection{The unprojected loss function}

When Corollary~\ref{cor:sd} is used, the discrete target is
\begin{equation}\label{unprojected}
M_{t_j}^{\varepsilon,\mathrm{raw}} = \frac{B_{t_{j+q}}-B_{t_j}}{qh} = \frac{1}{qh} \sum_{k=j}^{j+q-1} \bigl(B_{t_{k+1}}-B_{t_k}\bigr). 
\end{equation}
This equality is exact. Hence, unlike the projected case, there is no
stochastic-integral discretization error and no regularity assumption
on $P_t(X)$ is needed. A time discretization is still required for the
integral in the loss and, when $X$ is simulated numerically, for the
state process itself.

\subsection{Monte Carlo and optimization consistency}

To approximate the loss functions we defined for the Tweedie's formula, a standard approach is to use neural networks. We refer to \cite[Section 3]{tzen2019theoretical} for a theoretical study of the feasibility of this approach. Their analysis is directly applicable in our setting.

\end{document}